\documentclass{article}

\usepackage{iclr2027_conference,times}
\usepackage{microtype}
\usepackage{graphicx}
\usepackage{booktabs}
\usepackage{url}
\usepackage{amsmath,amssymb,amsthm,mathtools,bm}
\usepackage{algorithm}
\usepackage{algorithmic}
\usepackage{longtable,array,multirow}
\usepackage{enumitem}
\usepackage{xcolor}
\usepackage{hyperref}
\hypersetup{hidelinks}

\usepackage{tabularx}
\usepackage{tikz}
\usetikzlibrary{arrows.meta,positioning}

\theoremstyle{plain}
\newtheorem{theorem}{Theorem}[section]
\newtheorem{lemma}[theorem]{Lemma}
\newtheorem{proposition}[theorem]{Proposition}
\newtheorem{corollary}[theorem]{Corollary}
\theoremstyle{definition}

\newtheorem{definition}[theorem]{Definition}
\newtheorem{protocol}[theorem]{Protocol}

\theoremstyle{remark}
\newtheorem{remark}[theorem]{Remark}

\newcommand{\E}{\mathbb{E}}
\newcommand{\Prob}{\mathbb{P}}
\newcommand{\R}{\mathbb{R}}
\newcommand{\1}{\mathbf{1}}

\newcommand{\cI}{\mathcal{I}}
\newcommand{\cL}{\mathcal{L}}

\newcommand{\TV}{\operatorname{TV}}
\newcommand{\clip}{\operatorname{clip}}
\newcommand{\argmin}{\operatorname*{arg\,min}}

\newcommand{\obs}{\mathrm{obs}}
\newcommand{\can}{\mathrm{canon}}
\newcommand{\STAR}{\mathrm{STAR}}
\newcommand{\adm}{\mathrm{adm}}

\ifdefined\STARredline

\newcommand{\revcolor}{\color{red}}
\else

\newcommand{\revcolor}{\color{black}}
\fi

\title{STAR-GRPO: Canonical Anchoring and Reliability-First Advantages against Representation-Dependent Reward Hacking}

\author{
Wan Tian\textsuperscript{1*} \ \ 
Zhongyi Li\textsuperscript{2*}\ \ 
Xiang Xu\textsuperscript{2}\ \ 
Minhao Zou\textsuperscript{1}\ \ 
Yijie Peng\textsuperscript{3$\dagger$} \ \ 
Fuzhen Zhuang\textsuperscript{2$\dagger$} \\[1ex]
\textsuperscript{1}Peking University \quad
\textsuperscript{2}Beihang University \quad
\textsuperscript{3}Nanjing University \\[0.5ex]
\textsuperscript{*}These authors contributed equally to this work. \quad
\textsuperscript{$\dagger$}Corresponding authors. \\
\texttt{Correspondence: pengyijie@nju.edu.cn; zhuangfuzhen@buaa.edu.cn}
}
\iclrfinalcopy

\begin{document}
\maketitle

{\revcolor
\begin{abstract}
Reward hacking occurs when policy optimization exploits a brittle reward interface or an overly permissive proxy objective, improving the training score without improving the underlying response quality. This phenomenon is amplified in group-relative policy optimization: an unsupported reward can shift the group baseline and alter the updates of other rollouts, while post-hoc or purely relative weighting cannot represent group-wide uncertainty. We propose \emph{Self-Tuned Anchored Reliability Group-Relative Policy Optimization} (STAR-GRPO), a reliability-first advantage estimator based on paired assessments of the same rollout. STAR separates the quality signal from its learning influence: score disagreement determines rollout reliability, relative reliability enters a self-tuned robust location--scale fit before group normalization, and absolute group reliability attenuates the resulting bounded advantage. The analysis establishes coordinate and second-moment bounds, characterizes exact centering through the weighted location equation, and gives reliability-dependent attenuation guarantees for outlying rewards. We evaluate STAR-GRPO in two complementary reward-hacking regimes. In token-interface exploitation, STAR prevents runaway optimization of the deployed-interface score while improving the canonical quality signal. In rubric-proxy overoptimization for medical reasoning, STAR improves independent semantic evaluation, narrows the proxy--judge discrepancy, and reduces overclaim while optimizing the same task proxy. Together, these results show that reliability-first normalization offers a principled way to limit unsupported reward influence on both group baselines and policy updates, while retaining the task reward as the optimization target.

\end{abstract}

\section{Introduction}
\label{sec:introduction}

Reinforcement learning from human feedback optimizes reward proxies for human preferences \citep{christiano2017deep,ouyang2022training,bai2022training}. As optimization proceeds, a policy can improve the proxy without a corresponding improvement in external quality \citep{skalse2022gaming,gao2023scaling}. An auxiliary assessment of the same response can reveal disagreement with the training score. The optimization question is then how to use this information: \emph{how should reliability enter group-relative advantages so that a suspicious reward has limited influence on both its own update and the baseline assigned to other responses?}

Ordinary group-relative policy optimization (GRPO) centers and scales rewards within each prompt group \citep{shao2024deepseekmath}. This coupling creates two distinct obstacles. First, an unreliable high reward changes the group mean and scale before any post-hoc weight is applied; reducing its own advantage does not undo the advantages already assigned to the other rollouts. Second, relative weighting alone cannot express common low confidence: multiplying every weight by the same constant leaves normalized weights unchanged. An advantage construction should therefore use reliability inside normalization and preserve its absolute magnitude in the update.

STAR-GRPO implements these two requirements while separating \emph{what to optimize} from \emph{how strongly to update}. A prespecified quality path selects or combines the training and anchor scores. Their discrepancy supplies an optimization reliability coefficient. Relative weights enter a pseudo-Huber fit of prompt-specific locations and a minibatch-shared scale; a bounded score then receives an absolute group-reliability factor. The resulting advantage satisfies $|A_i|\le w_i$ for any finite fitted location and positive scale. Exact fitting is needed for centering, not for this magnitude control.

We instantiate this principle in two reward-hacking regimes with different semantics but the same optimization structure. In \emph{token-interface exploitation}, direct token-index mappings can obtain high reward even when the corresponding decoded response is not supported by the canonical text interface \citep{zhang2026tompa}; STAR pairs the deployed and canonical views and uses their discrepancy to regulate the update. In \emph{rubric-proxy overoptimization}, a rubric-conditioned training judge can reward surface-level criterion satisfaction more strongly than an independent semantic evaluator; STAR keeps the proxy as the optimization target while using a rubric-free semantic anchor to determine how strongly each rollout should influence learning. This separation between the score being optimized and the evidence supporting that score is the central design principle of STAR-GRPO.

Our contributions are threefold:
\begin{itemize}[leftmargin=1.25em]
\item We introduce \emph{reliability-first} group-relative advantages: rollout reliability enters the robust group fit before normalization, while an explicit absolute-reliability factor preserves group-wide confidence in the final update.
\item We establish deterministic coordinate and second-moment bounds, characterize exact centering through the weighted location equation, and derive reliability-to-gradient attenuation guarantees. A one-outlier analysis shows linear attenuation in the rollout reliability, in contrast to post-hoc weighting and ordinary weighted mean--standard-deviation normalization.
\item We demonstrate the same mechanism across two distinct reward-hacking regimes. STAR suppresses representation-dependent token-interface exploitation while improving canonical quality; in rubric-proxy medical reasoning, the independent-judge score rises from 0.2706 to 0.3174, the proxy--judge gap decreases from 0.2935 to 0.2318, and the overclaim fraction decreases from 0.2464 to 0.2204.
\end{itemize}

Figure~\ref{fig:framework} summarizes the core idea: paired scoring exposes unsupported reward, reliability enters group statistics before normalization, and the resulting bounded advantage controls both local and group-wide learning influence.

\begin{figure}[t]
\centering
\includegraphics[width=\linewidth]{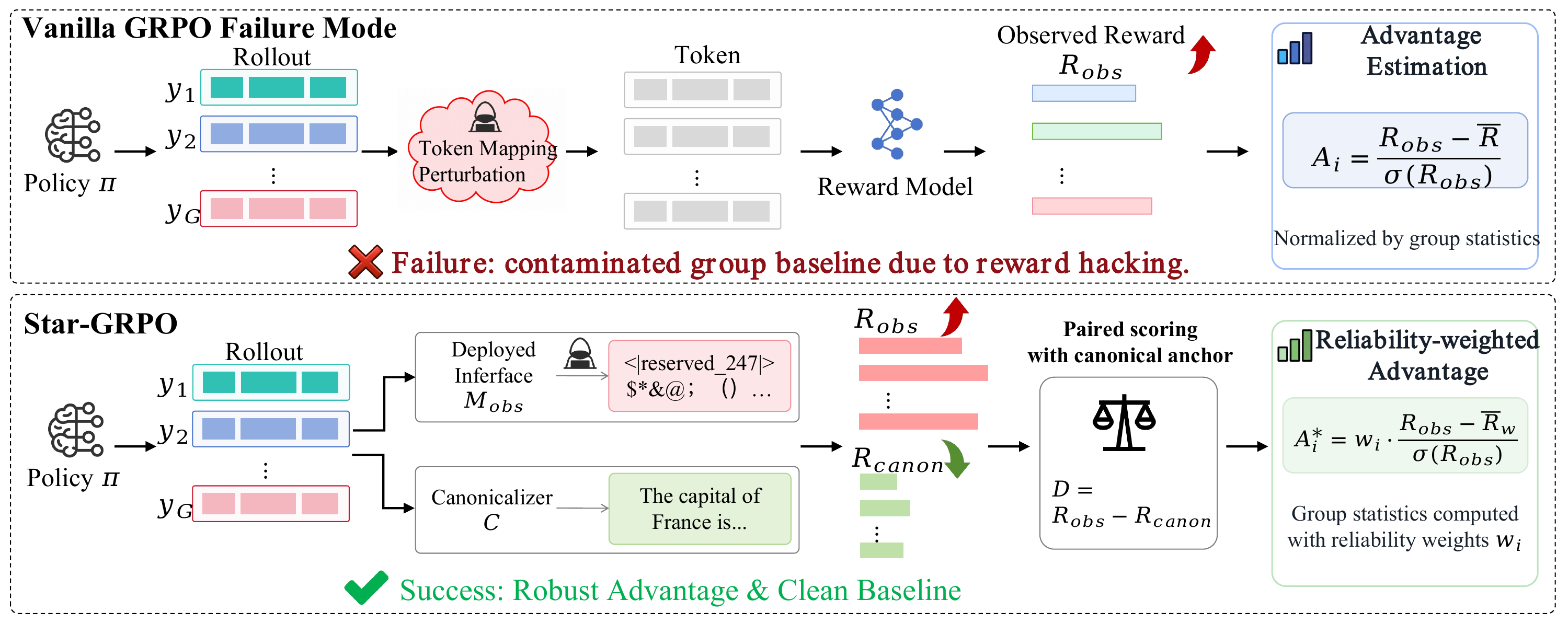}
\caption{Overview of token-interface reward hacking and STAR-GRPO. A deployed representation and a canonical rendering of the same rollout are scored in parallel; their discrepancy determines reliability, which enters group statistics \emph{before} normalization. The final STAR advantage combines a bounded robust score with rollout- and group-level reliability, preventing unsupported rewards from dominating either the baseline or the policy update.}
\label{fig:framework}
\end{figure}

\section{Related Work}
\label{sec:related-work-main}

Reward overoptimization and reward-model limitations motivate assessing progress beyond the training score \citep{gao2023scaling,casper2023open}. TOMPA studies token-interface attacks \citep{zhang2026tompa}; representation engineering supplies internal shortcut signals for modifying GRPO advantages \citep{wu2026rebound}. STAR instead derives reliability from paired reward assessments of each rollout.

Reward-model ensembles support conservative optimization \citep{coste2024ensembles} and uncertainty penalties \citep{zhai2024uprlhf}; robust reward training addresses preference artifacts \citep{liu2024rrm}. Conformal Feedback Alignment weights preference optimization using answer-level reliability \citep{chen2026cfa}. STAR's distinction is to place reliability inside the group fit and retain absolute attenuation afterward. Its robust estimation and calibration tools are established methods, detailed in Appendix~\ref{sec:related-work}.

Normalization and evaluation choices introduce further sources of bias. Dr.\ GRPO analyzes response-length and question-difficulty biases in GRPO \citep{liu2025understanding}, while DAPO studies token-level loss aggregation and training stability \citep{yu2025dapo}. These works motivate specifying the loss reduction separately from the advantages. LLM judges also exhibit position, verbosity, and self-enhancement biases \citep{zheng2023judging}. To separate the training proxy from evaluation, RQ2 uses a third judge that is never used in policy updates, providing an independent measurement of whether proxy optimization transfers to semantic quality.

\section{Problem Setup and Threat Model}
\label{sec:setup-main}

Both settings supply a pair of scores for the same rollout: a deployed or proxy score and an anchor score. We formalize the token-interface instantiation first, where the same reward model evaluates two representations and the discrepancy has a direct interface interpretation. RQ2 supplies a rubric proxy and a separate semantic anchor to the same advantage construction; its score reference and statistical scope are specified in Section~\ref{sec:experiment-rubric}.

For prompt \(X_b\), the old policy samples \(G\) rollouts
\(
O_{b,i}\sim\pi_{\theta_{\rm old}}(\cdot\mid X_b),  b\in[B],\quad i\in[G].
\)
The deployed mapping \(M_{\obs}\) and a frozen canonicalizer \(C\) produce
\begin{equation}
\begin{aligned}
R^{\obs}_{b,i}&=R_\phi(X_b,M_{\obs}(O_{b,i})), \quad
R^{\can}_{b,i}=R_\phi(X_b,C(O_{b,i})),\quad
D_{b,i}=R^{\obs}_{b,i}-R^{\can}_{b,i}.
\end{aligned}
\label{eq:intro-two-rewards}
\end{equation}
The canonicalizer decodes with the policy tokenizer, applies prespecified normalization and chat-template rules, and retokenizes with the reward tokenizer. Positive \(D\) measures deployed-interface reward unsupported by this anchor. Each rollout has a prespecified context \(H_{b,i}\in[H]\), such as tokenizer pair, task family, or response-length bin. Trusted benign outputs are divided before score inspection into discrepancy-fitting data \(\cI^D_{\rm fit}\) and conformal-calibration data \(\cI^D_{\rm cal}\). Policy training, attack development, independent clean audits, and final evaluation use disjoint identifiers. Context definitions, rare-context merges, and unseen-context fallback rules are frozen before calibration. Fix numerical regularizers \(\varepsilon_D\ge0\) and \(\varepsilon_w\ge0\); the positive scale constraints keep all displayed score denominators strictly positive. A fixed \(\kappa\in[0,1]\) determines how much deployed reward enters optimization:
\begin{equation}
R_{\kappa,b,i}=R^{\can}_{b,i}+\kappa D_{b,i},
\quad
r_{b,i}=h(R_{\kappa,b,i})\in[-R_{\max},R_{\max}],
\label{eq:anchored-quality}
\end{equation}
where \(h\) is fixed, monotone, bounded, and \(1\)-Lipschitz. Unless stated otherwise, STAR uses \(\kappa=1\); \(\kappa=0\) selects the canonical reward path. The quality path is part of the specified STAR instantiation. RQ1 additionally reports a canonical-only diagnostic control using the same bounded canonical quality path; all RQ1 reward-model evaluations use a 4,096-token input limit.

The threat model targets reward inflation that is expressed as disagreement between two views of the same rollout, including positive cross-interface shifts, heterogeneous clean discrepancy scales, heavy-tailed quality rewards before \(h\), and groups containing many attacked rollouts. The observed interface must be reachable by the deployed system; arbitrary mappings are reserved for audit stress tests. The following result formalizes why paired views supply information that is unavailable from a single scalar reward.

\begin{theorem}
\label{thm:main-identification}
(i) For any probability law \(P\) on \(\R\) and \(B_0>0\), there are a clean model and an attacked model with the same observable law \(R^{\obs}\sim P\), while the attacked canonical reward is smaller by \(B_0\). Hence no detector based only on \(R^{\obs}\) uniformly distinguishes the two models. (ii) If an attack adds the same random shift to every evaluated view, all pairwise reward differences are unchanged and every discrepancy-only test has type-I plus type-II error at least one.
\end{theorem}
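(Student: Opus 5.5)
Both parts are non-identifiability arguments: I will construct two data-generating models whose laws agree on everything the restricted detector can see, and then apply the standard fact that a test cannot distinguish identical laws. Concretely, if a test \(\psi\) depends only on a statistic \(T\), and \(T\) has the same law under the null and the alternative, then \(\Prob_0(\psi=1)=\Prob_1(\psi=1)\). Hence type-I error plus type-II error equals \(\Prob_0(\psi=1)+1-\Prob_1(\psi=1)=1\), for any randomized \(\psi\) as well, since its randomization is independent of the model.

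For (i), fix \(P\) and \(B_0>0\). In the clean model, draw \(R^{\obs}\sim P\) and set \(R^{\can}=R^{\obs}\), so \(D=0\). In the attacked model, draw \(R^{\obs}\sim P\) again and set \(R^{\can}=R^{\obs}-B_0\), so \(D=B_0\). A concrete realization in the notation of \eqref{eq:intro-two-rewards} uses a reward model \(R_\phi\) and mappings \(M_{\obs}\), \(C\) with these values. If one wants this at the level of rollouts, take any rollout law and define \(R_\phi\) on the canonical rendering to subtract \(B_0\). Since the marginal law of \(R^{\obs}\) is \(P\) under both models, any detector \(\psi(R^{\obs})\) has identical rejection probability under both. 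This holds also for i.i.d.\ groups of size \(G\) or minibatches, because the product laws agree. Applying the fact above gives error sum one, so no such detector can uniformly make both errors small. The only subtlety is to be explicit that ``uniformly distinguishes'' means error sum bounded below \(1\). With that definition, the conclusion is the displayed identity.

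For (ii), let \(V=(V_1,\dots,V_K)\) be the vector of evaluated view scores under the clean model. Let the attack produce \(V'=V+S\mathbf 1\), where \(S\) is a random shift, possibly dependent on \(V\). Then \(V'_j-V'_k=V_j-V_k\) pointwise, so the vector of pairwise differences \(\Delta(V')=\Delta(V)\) almost surely and has the same law. A discrepancy-only test is by definition a measurable function of \(\Delta\) together with independent randomization. The fact above then gives error sum exactly \(1\), hence at least one.

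The main obstacle is definitional rather than technical. I have to pin down ``observable law'' and ``discrepancy-only test'' so that the statistics coincide in law. In (i), this means the canonical score must be hidden from the detector. In (ii), the test must not see absolute levels, which is exactly what the shift changes. I will state these as the formal hypotheses of the proof.
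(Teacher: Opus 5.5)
Your proposal is correct and follows the paper's proof: part (i) uses the same two models with a common $R^{\obs}\sim P$ (namely $R^{\can}=R^{\obs}$ versus $R^{\can}=R^{\obs}-B_0$), and part (ii) rests on the same pointwise invariance of pairwise differences under a common shift. The only difference is cosmetic: the paper derives the error bound in (ii) from the general inequality (type-I plus type-II error) $\ge 1-\TV(P_0,P_1)$ with $P_0=P_1$, whereas you use the equivalent identity that the error sum equals one when the test statistic has the same law under both models.
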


The argument in Appendix~\ref{app:core-foundations} shows what a single scalar reward cannot reveal without additional assumptions. Paired views supply evidence about differential shifts; they do not by themselves establish that every observed discrepancy is an attack.

\section{Reliability-First Group-Relative Advantages}
\label{sec:method-main}

STAR first specifies the score pair and quality path, then estimates reliability, fits a robust group baseline, and forms policy advantages. We describe the split-calibrated interface instantiation below. RQ2 retains the advantage construction with a fixed discrepancy reference. Algorithm~\ref{alg:main-star} gives the procedure used for the formal analysis; experiment-specific solver and loss choices are documented separately.

\subsection{Trusted Discrepancy Calibration}

For trusted discrepancies \(D_{c,1:n_c}\) in context \(c\), choose \(z_{D,c}>0\) and \(0<s_{\min,c}<s_{\max,c}<\infty\). Following self-tuned robust estimation \citep{sun2024selftuned}, STAR fits a context location and scale with
\begin{align}
\ell_{n,z}(u,s)
&=\frac{ns}{z^2}
\left(\sqrt{1+\frac{z^2u^2}{ns^2}}-1\right)+\frac{s}{2},
\label{eq:main-ph-loss}\\
(\widehat m_{D,c},\widehat s_{D,c})
&\in
\argmin_{\substack{m\in\R\\s\in[s_{\min,c},s_{\max,c}]}}
\frac1{n_c}\sum_{j=1}^{n_c}\ell_{n_c,z_{D,c}}(D_{c,j}-m,s).
\label{eq:main-offline-fit}
\end{align}
The convex perspective loss jointly estimates location and influence scale, within the robust M-estimation framework \citep{huber1964robust}. Scale-boundary hits are logged as diagnostics. On the disjoint calibration split, define
\(
S_j^D=
\frac{D_j-\widehat m_{D,H_j}}
{\widehat s_{D,H_j}+\varepsilon_D}.
\)
For \(n_{\rm cal}=|\cI^D_{\rm cal}|\) and
\(k_\alpha=\lceil(n_{\rm cal}+1)(1-\alpha)\rceil\), let
\(\widehat q_{1-\alpha}\) be the \(k_\alpha\)th calibration order statistic, or \(+\infty\) if \(k_\alpha>n_{\rm cal}\). A rollout receives
\begin{equation}
S^D_{b,i}=
\frac{D_{b,i}-\widehat m_{D,H_{b,i}}}
{\widehat s_{D,H_{b,i}}+\varepsilon_D},
\quad
w_{b,i}=
\exp\{-\lambda[S^D_{b,i}-\widehat q_{1-\alpha}]_+\}.
\label{eq:main-weight}
\end{equation}
We take \(w=1\) when \(\widehat q_{1-\alpha}=+\infty\). The threshold controls when downweighting begins; \(\lambda>0\) controls its rate and is selected without reusing the conformal split. The coefficient $w$ controls optimization influence; it is not a posterior probability that the response is clean. With a fixed reference, as in RQ2, the same weight map is defined without the coverage claim of split calibration.

\subsection{Reliability-First Robust Advantages}

For group \(b\), compute total reliability $W_b$. A group with $W_b=0$ is skipped before evaluating the relative weights or effective size. For $W_b>0$, define
\begin{equation}
W_b=\sum_{i=1}^G w_{b,i},\quad
\bar w_b=\frac{W_b}{G},\quad
p_{b,i}=\frac{w_{b,i}}{W_b},\quad
G_{\mathrm{eff},b}=\frac{W_b^2}{\sum_iw_{b,i}^2}.
\label{eq:main-effective-size}
\end{equation}
The primary algorithm declares a group admissible when \(W_b\ge W_{\min}>0\) and \(G_{\mathrm{eff},b}\ge G_{\min}>1\). Non-admissible groups have zero reward advantage and do not enter the shared-scale fit. The recorded STAR runs use this group-admission rule. In the primary objective, keeping the policy-loss denominator equal to the original number of groups makes abstention reduce, rather than renormalize, the reward update. If no group is admissible, the optimizer step is skipped. The recorded STAR runs use token-mean loss reduction, specified in Appendix~\ref{app:experimental-design}.

Let \(\mathcal B_{\adm}\) be the admissible groups and \(B_{\adm}=|\mathcal B_{\adm}|\). For \(b\in\mathcal B_{\adm}\), one prompt-specific location \(\mu_b\) and a shared scale \(v\in[v_{\min},v_{\max}]\) minimize
\begin{align}
\ell_b^A(u,v)
&=
\frac{G_{\mathrm{eff},b}v}{z_A^2}
\left(\sqrt{1+\frac{z_A^2u^2}{G_{\mathrm{eff},b}v^2}}-1\right)
+\frac v2,\nonumber\\
\cL_A(\mu_{1:B},v)
&=
\frac1{B_{\adm}}
\sum_{b\in\mathcal B_{\adm}}\sum_{i=1}^G
p_{b,i}\ell_b^A(r_{b,i}-\mu_b,v).
\label{eq:main-online-objective}
\end{align}
Sharing \(v\) pools normalization information across the minibatch while retaining prompt-specific centers, which stabilizes scale estimation for small prompt groups. Absolute group reliability is then reintroduced explicitly in Equation~\eqref{eq:main-advantage}, so the strength of each group update remains coupled to the amount of supported reward evidence. At a fitted solution \((\widehat\mu_{1:B},\widehat v)\), put
\begin{equation}
x_{b,i}=
\frac{z_A(r_{b,i}-\widehat\mu_b)}
{\sqrt{G_{\mathrm{eff},b}}\widehat v},
\quad
\varphi_{b,i}=\frac{x_{b,i}}{\sqrt{1+x_{b,i}^2}},
\quad
\nu_b=\left(\frac1G\sum_iw_{b,i}^2\right)^{1/2},
\label{eq:main-score}
\end{equation}
and define the STAR advantage
\begin{equation}
A^{\STAR}_{b,i}
=
\bar w_b\,
\frac{w_{b,i}}{\nu_b+\varepsilon_w}\,
\varphi_{b,i}.
\label{eq:main-advantage}
\end{equation}
Placing reliability in the fit reduces a suspicious rollout's weight when estimating the baseline; a multiplier applied only after ordinary normalization would leave that baseline unchanged. The factor \(p_{b,i}\) provides this relative weighting. The factor \(\bar w_b\) preserves absolute reliability: for a fixed admitted set and \(\varepsilon_w=0\), multiplying all weights in a group by \(c\in(0,1]\) multiplies its advantages by \(c\). Thus relative weights determine which rollouts inform the baseline, while absolute reliability determines update strength. STAR requires two scoring calls per rollout: two interfaces to one model in RQ1, or two evaluators in RQ2. Each first-order or Newton pass costs \(O(BG)\); prompt locations parallelize, and the shared scale is one dimensional.

\begin{algorithm}[t]
\caption{STAR-GRPO with the reference numerical protocol}
\label{alg:main-star}
\begin{algorithmic}[1]
\REQUIRE Frozen fit/calibration objects; old and reference policies; \(G,\kappa,z_A,\lambda,W_{\min},G_{\min}\).
\STATE Sample \(G\) rollouts per prompt from the frozen old policy.
\STATE Evaluate deployed and canonical views; compute \(D\), \(r=h(R_\kappa)\), \(S^D\), and \(w\).
\STATE Compute \(W_b,\bar w_b,G_{\mathrm{eff},b}\); set reward advantages to zero for non-admissible groups.
\STATE If any group is admissible, fit \eqref{eq:main-online-objective} on those groups to the recorded KKT tolerance and compute \eqref{eq:main-advantage}.
\IF{no group is admissible or the solver/KKT check fails}
\STATE Skip the complete optimizer step and record the failure mode.
\ELSE
\STATE Stop gradients through all reward-side quantities and optimize the standard clipped GRPO objective with the original \(BG\) reduction; do not recenter, whiten, or rescale \(A^{\STAR}\).
\ENDIF
\STATE Log contexts, both rewards, \(D,w,\bar w,G_{\rm eff}\), abstention, solver residuals, boundary hits, and reward-gradient norms.
\end{algorithmic}
\end{algorithm}

\section{Theoretical Guarantees}
\label{sec:main-theory}

The construction admits three levels of analysis. Magnitude bounds follow directly from the advantage formula; centering additionally requires a location equation; statistical reliability requires assumptions about the paired assessments and reference data. All update bounds concern the initial reward-side direction at the old policy, with reward-side quantities held fixed. Proofs appear in Appendices~\ref{app:core-foundations}--\ref{app:advantages-and-optimization}.

\begin{theorem}[Magnitude control and centering]
\label{thm:main-geometry}
Fix weights $w_{b,i}\in[0,1]$ and an admissible set. For any finite locations and positive scale, Equation~\eqref{eq:main-advantage} satisfies
\[
\frac1G\sum_i(A^{\STAR}_{b,i})^2\le\bar w_b^2,
\qquad |A^{\STAR}_{b,i}|\le w_{b,i}.
\]
If the weighted location equation $\sum_i p_{b,i}\varphi_{b,i}=0$ holds, then $\sum_iA^{\STAR}_{b,i}=0$. Separately, objective~\eqref{eq:main-online-objective} is jointly convex and has a unique minimizer if every admitted prompt has two distinct positive-weight rewards.
Let $g^{\rm avg}_{b,i}=|O_{b,i}|^{-1}\sum_t\nabla_\theta\log\pi_{\theta_{\rm old}}(o_{b,i,t}\mid X_b,o_{b,i,<t})$ satisfy $\|g^{\rm avg}_{b,i}\|_2\le L_\pi$. Then
\[
\left\|U_b^{\STAR}\right\|_2
=\left\|\frac1G\sum_i A^{\STAR}_{b,i}g^{\rm avg}_{b,i}\right\|_2
\le\bar w_b L_\pi,
\qquad
\left\|\frac{A^{\STAR}_{b,i}g^{\rm avg}_{b,i}}G\right\|_2
\le\frac{L_\pi w_{b,i}}G.
\]
These direction bounds do not require an exact fit.
\end{theorem}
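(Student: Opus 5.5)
The plan splits the argument into four pieces: magnitude, centering, convexity with uniqueness, and direction bounds. Only the convexity piece involves the fitted objective; the rest are algebra on Equation~\eqref{eq:main-advantage}. Throughout I fix an admissible group $b$, so that $W_b>0$ and $\nu_b>0$. I also use $|\varphi_{b,i}|<1$, which holds for any finite $\widehat\mu_b$ and $\widehat v>0$ because $x/\sqrt{1+x^2}\in(-1,1)$.

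\textbf{Magnitude bounds.} The key identity is $\bar w_b\le\nu_b$, which follows from Cauchy--Schwarz or Jensen: $(\frac1G\sum_i w_{b,i})^2\le\frac1G\sum_i w_{b,i}^2$. Since $\varepsilon_w\ge0$, this gives $\bar w_b/(\nu_b+\varepsilon_w)\le1$, and hence $|A^{\STAR}_{b,i}|\le w_{b,i}|\varphi_{b,i}|\le w_{b,i}$. For the second moment I compute
\[
\frac1G\sum_i(A^{\STAR}_{b,i})^2\le\frac{\bar w_b^2}{(\nu_b+\varepsilon_w)^2}\cdot\frac1G\sum_i w_{b,i}^2=\bar w_b^2\frac{\nu_b^2}{(\nu_b+\varepsilon_w)^2}\le\bar w_b^2 .
\]

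\textbf{Centering.} We have $\sum_i A^{\STAR}_{b,i}=\frac{\bar w_b W_b}{\nu_b+\varepsilon_w}\sum_i p_{b,i}\varphi_{b,i}$, which vanishes under the weighted location equation. I will also remark that this equation is exactly the stationarity condition $\partial\cL_A/\partial\mu_b=0$. Indeed, differentiating $\ell_b^A$ in $u$ gives $\frac{u}{v\sqrt{1+z_A^2u^2/(G_{\mathrm{eff},b}v^2)}}$, which is a positive multiple of $\varphi_{b,i}$.

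\textbf{Convexity and uniqueness.} This is the main obstacle. For joint convexity I write $\ell_b^A(u,v)=\frac{G_{\mathrm{eff},b}}{z_A^2}\bigl(\sqrt{v^2+z_A^2u^2/G_{\mathrm{eff},b}}-v\bigr)+\frac v2$. The square-root term is a Euclidean norm of $(v,cu)$, hence jointly convex, and the remaining terms are linear in $v$. The objective $\cL_A$ is then a nonnegative combination of these terms composed with affine maps $(\mu_b,v)\mapsto(r_{b,i}-\mu_b,v)$, so it is jointly convex on the convex domain $\R^{B_{\adm}}\times[v_{\min},v_{\max}]$.

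For existence, the loss grows linearly as $|\mu_b|\to\infty$ and $v$ ranges over a compact interval, so a minimizer exists. For uniqueness, the norm is strictly convex except along rays through the origin, so equality in a convex combination of two minimizers forces each pair $(v,c(r_{b,i}-\mu_b))$ to be proportional between the two points. Here I use the hypothesis of two distinct positive-weight rewards: if the two solutions differ in $\mu_b$ or $v$, then proportionality for two distinct $r_{b,i}$ yields a contradiction. Concretely, I will show that $v'/v=t$ together with $r-\mu'=t(r-\mu)$ for two distinct $r$ forces $t=1$ and $\mu=\mu'$. The delicate case is a change in $v$ alone. I expect to handle it by noting that the proportionality factor is common across all prompts, so the linear term $v/2$ and the boundedness then force $t=1$. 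Some care is needed here, and I would verify it via the one-dimensional restriction along the segment.

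\textbf{Direction bounds.} By the triangle inequality, $\|U_b^{\STAR}\|_2\le\frac1G\sum_i|A^{\STAR}_{b,i}|L_\pi\le L_\pi\frac{\bar w_b}{\nu_b+\varepsilon_w}\cdot\frac1G\sum_i w_{b,i}$. Writing $\frac1G\sum_i w_{b,i}=\bar w_b$, the crude route gives the bound $\bar w_b^2/\nu_b\cdot L_\pi\le\bar w_b L_\pi$, using $\bar w_b\le\nu_b$ again. Alternatively, Cauchy--Schwarz with the second-moment bound gives the same result. The per-rollout bound follows directly from $|A^{\STAR}_{b,i}|\le w_{b,i}$. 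Neither bound uses the fit, which confirms the final sentence of the statement.
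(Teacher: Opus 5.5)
Your proposal is correct. The magnitude and centering parts match the paper's proof essentially line for line: $|\varphi_{b,i}|\le1$, $\bar w_b\le\nu_b$, and pulling the common factor $\bar w_bW_b/(\nu_b+\varepsilon_w)$ out of $\sum_iA^{\STAR}_{b,i}$. The other two parts take different routes.

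\textbf{Convexity and uniqueness.} The paper treats each term as the perspective of the strictly convex $\rho_b$. It then argues that the rank-one Hessians, proportional to $(v,u_{b,i})(v,u_{b,i})^\top$ in the $(\mu_b,v)$ coordinates, sum to a positive definite matrix once every prompt has two distinct residuals. You instead rewrite the term as $\frac{G_{\mathrm{eff},b}}{z_A^2}\|(v,z_Au/\sqrt{G_{\mathrm{eff},b}})\|_2$ plus a linear function of $v$, and use the equality case of the triangle inequality. Both encode the same fact: the Hessian's null direction is exactly radial scaling of $(v,u)$. Your version is derivative-free, and it also supplies existence through coercivity in $\mu$ and compactness in $v$, which the paper's convexity proof does not spell out.

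The case you call delicate is not delicate. Two minimizers force $(v',c(r-\mu_b'))=t\,(v,c(r-\mu_b))$ with the single factor $t=v'/v>0$ for every positive-weight term. Subtracting the relations for two distinct rewards gives $(1-t)(r_1-r_2)=0$, hence $t=1$, $v'=v$ and $\mu_b'=\mu_b$. A change in $v$ alone is just the instance $\mu_b'=\mu_b$ of this, so you need neither the $v/2$ term nor the scale bounds. Those would matter only without the distinct-rewards hypothesis. In that case, if $t\neq1$, every prompt's positive-weight rewards coincide, and both points lie on one ray along which $\cL_A$ equals $v$ times a positive constant. So uniqueness in fact holds even without that hypothesis.

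\textbf{Direction bound.} The paper's Cauchy--Schwarz step gives $\bar w_b\nu_bL_\pi/(\nu_b+\varepsilon_w)$. Your triangle-inequality route with the coordinate bound gives $\bar w_b^2L_\pi/(\nu_b+\varepsilon_w)$. This is smaller by the factor $\bar w_b/\nu_b=\sqrt{G_{\mathrm{eff},b}/G}\le1$, so your primary route is slightly sharper, not merely equivalent.
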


The proof uses $|\varphi|\le1$ and $\bar w_b\le\nu_b$. For an approximate fit, the centering error is exactly $|\sum_i A^{\STAR}_{b,i}|=\bar w_bW_be_b^\varphi/(\nu_b+\varepsilon_w)$, where $e_b^\varphi=|\sum_i p_{b,i}\varphi_{b,i}|$. This identity makes numerical centering directly auditable, while the coordinate and second-moment bounds hold for any finite fitted location and positive scale.

\begin{proposition}[One outlier: ordering and attenuation]
\label{prop:main-ordering}
Let $G\ge3$, $n=G-1$, $M\in(0,R_{\max}]$, $r=(0,\ldots,0,M)$, and $w=(1,\ldots,1,\epsilon)$ for $\epsilon\in(0,1]$. Use population-form standard deviations without a numerical stabilizer for the two comparators. Ordinary GRPO followed by multiplication by $w$ gives each clean rollout advantage $-1/\sqrt n$. Using the weighted mean $\mu_w=\sum_i w_ir_i/\sum_iw_i$ and weighted standard deviation $\sigma_w$ instead gives
\[
\widetilde A_i=w_i(r_i-\mu_w)/\sigma_w,
\qquad
\widetilde A_{\rm clean}=-\sqrt{\epsilon/n},
\qquad
\widetilde A_{\rm outlier}=\sqrt{n\epsilon}.
\]
For an admitted STAR group satisfying the exact location equation,
\[
|A^{\STAR}_{\rm outlier}|\le\epsilon,
\qquad |A^{\STAR}_{\rm clean}|\le\epsilon/n.
\]
If $W_{\min}<n$, $G_{\min}<n$, and the scale lies in fixed $[v_{\min},v_{\max}]\subset(0,\infty)$, exact STAR minimizers also satisfy $\widehat\mu_\epsilon\to0$ as $\epsilon\downarrow0$.
\end{proposition}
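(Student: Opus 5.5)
The plan is to verify the two comparator formulas by direct computation, derive the two STAR bounds from Theorem~\ref{thm:main-geometry} together with the location equation, and then get the limit $\widehat\mu_\epsilon\to0$ by inverting $\varphi$ with the scale held in $[v_{\min},v_{\max}]$.

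\emph{Comparators.} For ordinary GRPO, $r=(0,\ldots,0,M)$ has mean $M/G$ and population standard deviation $M\sqrt n/G$. A clean rollout therefore gets $(0-M/G)/(M\sqrt n/G)=-1/\sqrt n$, and multiplying by $w_i=1$ leaves this unchanged. For the weighted variant, put $W=n+\epsilon$. Then
\[
\mu_w=\frac{\epsilon M}{W},\qquad M-\mu_w=\frac{nM}{W}.
\]
The weighted variance is
\[
\frac{n\mu_w^2+\epsilon(M-\mu_w)^2}{W}=\frac{n\epsilon M^2(\epsilon+n)}{W^3}=\frac{n\epsilon M^2}{W^2},
\]
so $\sigma_w=M\sqrt{n\epsilon}/W$. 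Substituting into $\widetilde A_i=w_i(r_i-\mu_w)/\sigma_w$ gives $\widetilde A_{\rm clean}=-\sqrt{\epsilon/n}$ and $\widetilde A_{\rm outlier}=\epsilon n/\sqrt{n\epsilon}=\sqrt{n\epsilon}$.

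\emph{STAR bounds.} The outlier bound $|A^{\STAR}_{\rm outlier}|\le w_{\rm outlier}=\epsilon$ is the coordinate bound of Theorem~\ref{thm:main-geometry}. All clean rollouts share the reward $0$, hence the same $x$ and the same $\varphi_c$. The location equation $\sum_ip_i\varphi_i=0$ then reads $n\varphi_c+\epsilon\varphi_o=0$. Since $|\varphi_o|\le1$, this gives $|\varphi_c|\le\epsilon/n$. Finally, $A^{\STAR}_{\rm clean}=\bar w\,\varphi_c/(\nu+\varepsilon_w)$, and $\bar w\le\nu$ by Cauchy--Schwarz, so $|A^{\STAR}_{\rm clean}|\le\epsilon/n$.

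\emph{Limit of the fitted location.} This part has three steps.
\begin{enumerate}
\item \emph{The group is admitted for every $\epsilon\in(0,1]$.} We have $W=n+\epsilon\ge n>W_{\min}$. Also $G_{\rm eff}=(n+\epsilon)^2/(n+\epsilon^2)\ge n$, because $(n+\epsilon)^2-n(n+\epsilon^2)=2n\epsilon+\epsilon^2-n\epsilon^2\ge0$ when $\epsilon\le1$. Hence $G_{\rm eff}\ge n>G_{\min}$.
\item \emph{Exact minimizers satisfy the location equation.} The location $\mu_b$ is unconstrained, and $\partial_u\ell^A_b(u,v)=u/(v\sqrt{1+x^2})=(\sqrt{G_{\rm eff}}/z_A)\,\varphi$. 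Stationarity in $\mu_b$ therefore gives $\sum_ip_i\varphi_i=0$ at any exact minimizer. This holds whatever the other groups in the minibatch are, because only $v$ is shared and it remains in $[v_{\min},v_{\max}]$.
\item \emph{Invert $\varphi$.} From step~2, $|\varphi_c|\le\epsilon/n\to0$. The map $x\mapsto x/\sqrt{1+x^2}$ is a homeomorphism onto $(-1,1)$ that fixes $0$, so $x_c=-z_A\widehat\mu_\epsilon/(\sqrt{G_{\rm eff}}\,\widehat v)\to0$. Using $G_{\rm eff}\le G$ and $\widehat v\le v_{\max}$,
\[
|\widehat\mu_\epsilon|\le\frac{|x_c|\sqrt G\,v_{\max}}{z_A}\to0 .
\]
\end{enumerate}

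The main subtlety is that $\widehat v$ and the other groups' locations may vary with $\epsilon$. The argument avoids needing any convergence of these by using only the uniform bound $\widehat v\le v_{\max}$ and the per-group stationarity condition.
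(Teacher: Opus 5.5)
Your proof is correct. The comparator computations match the paper's. Your STAR bounds are essentially the paper's argument. The paper gets $|A^{\STAR}_{\rm clean}|\le\epsilon/n$ from the zero-sum identity $nA^{\STAR}_{\rm clean}=-A^{\STAR}_{\rm outlier}$ together with $|A^{\STAR}_{\rm outlier}|\le\epsilon$. You instead read $|\varphi_c|\le\epsilon/n$ directly off the location equation and then use $\bar w\le\nu$; the two differ only by the common factor $\bar w/(\nu+\varepsilon_w)$.

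The genuine difference is in the limit $\widehat\mu_\epsilon\to0$. The paper argues qualitatively, in four steps:
\begin{itemize}
\item it notes $G_{\rm eff}\to n$, so the group is \emph{eventually} admitted;
\item it localizes $\widehat\mu_\epsilon\in[0,M]$ and $\widehat v_\epsilon\in[v_{\min},v_{\max}]$;
\item it extracts a convergent subsequence and uses continuity of $\varphi$ in $(\mu,v,G_{\rm eff})$ to pass the location equation to the limit $\varphi(-\mu_\star,v_\star;n)=0$, hence $\mu_\star=0$;
\item it concludes by compactness.
\end{itemize}

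You instead reuse $|\varphi_c|\le\epsilon/n$, invert $x\mapsto x/\sqrt{1+x^2}$, and bound $\sqrt{G_{\rm eff}}\,\widehat v\le\sqrt G\,v_{\max}$. This gives an explicit linear rate, $|\widehat\mu_\epsilon|\le\sqrt G\,v_{\max}\,\epsilon/(z_A\sqrt{n^2-\epsilon^2})$, rather than bare convergence. Your check that $G_{\rm eff}\ge n$ also shows admission for every $\epsilon\in(0,1]$, not just eventually.

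What your route buys:
\begin{itemize}
\item it needs only the upper scale bound;
\item it needs neither the $[0,M]$ localization nor any subsequence extraction or convergence of $\widehat v_\epsilon$;
\item it gives a quantitative rate.
\end{itemize}

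What the paper's route buys: its compactness template is more generic. It uses only continuity of the location equation and uniqueness of the root of the limiting equation. It would therefore carry over, for instance, to clean rollouts with distinct rewards. Your inversion exploits the special structure that all clean rollouts share the single score $\varphi_c$.
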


STAR therefore converts a rollout reliability of $\epsilon$ into an $O(\epsilon)$ coordinate bound for both the suspicious rollout and, through exact centering, the induced clean-rollout perturbation. This is precisely the reliability-first behavior sought by the method: unsupported mass vanishes before it can dominate group normalization.

\begin{theorem}[Conditional reliability control]
\label{thm:main-reliability}
Fix $\alpha\in(0,1)$ and condition on $\cI^D_{\rm fit}$. If clean calibration pairs and one new clean pair are exchangeable, then
\[
\Prob(w<1\mid Z=0,\cI^D_{\rm fit})\le\alpha,
\qquad \E[1-w\mid Z=0,\cI^D_{\rm fit}]\le\alpha.
\]
Here $Z=0$ denotes a clean sample and $Z=1$ an attacked sample. If $\Delta\ge0$, $\beta\in[0,1]$, and
$\Prob\{S^D\ge\widehat q_{1-\alpha}+\Delta\mid Z=1\}\ge1-\beta$, then
$\E[w\mid Z=1]\le\eta_A:=\beta+(1-\beta)e^{-\lambda\Delta}$.
\end{theorem}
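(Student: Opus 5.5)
The plan is to derive both parts of Theorem~\ref{thm:main-reliability} from the weight map in Equation~\eqref{eq:main-weight}. The key observation is that $w=1$ exactly when $S^D\le\widehat q_{1-\alpha}$, and that $w\in[0,1]$ always.

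\textbf{Clean part.} Conditioning on $\cI^D_{\rm fit}$ freezes the fitted quantities $(\widehat m_{D,c},\widehat s_{D,c})$ from Equation~\eqref{eq:main-offline-fit}. The score map $D\mapsto S^D$ is then a fixed function of the pair and its context. Because the fit and calibration splits are disjoint, exchangeability of the clean calibration pairs with the new clean pair carries over to the scores $S^D_1,\dots,S^D_{n_{\rm cal}},S^D$.

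The standard split-conformal rank argument then applies. With $k_\alpha=\lceil(n_{\rm cal}+1)(1-\alpha)\rceil$, the probability that $S^D$ strictly exceeds the $k_\alpha$th calibration order statistic is at most $1-k_\alpha/(n_{\rm cal}+1)\le\alpha$. Ties only help, since under exchangeability the rank of the new score is stochastically no larger than uniform. If $k_\alpha>n_{\rm cal}$, then $\widehat q=+\infty$ and $w=1$ by convention, so the bound holds trivially.

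This gives $\Prob(w<1\mid Z=0,\cI^D_{\rm fit})=\Prob(S^D>\widehat q_{1-\alpha}\mid\cdot)\le\alpha$. For the expectation, $1-w\in[0,1]$ and $1-w$ vanishes on $\{w=1\}$, so $\E[1-w\mid\cdot]\le\Prob(w<1\mid\cdot)\le\alpha$.

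\textbf{Attacked part.} Let $E=\{S^D\ge\widehat q_{1-\alpha}+\Delta\}$. On $E$, $[S^D-\widehat q_{1-\alpha}]_+\ge\Delta$, so $w\le e^{-\lambda\Delta}$. Off $E$, the trivial bound $w\le1$ holds. Hence
\[
\E[w\mid Z=1]\le\Prob(E^c\mid Z=1)+e^{-\lambda\Delta}\Prob(E\mid Z=1).
\]
Write $p=\Prob(E\mid Z=1)\ge1-\beta$. The right side equals $1-p(1-e^{-\lambda\Delta})$, which is nonincreasing in $p$. Substituting $p=1-\beta$ yields $\beta+(1-\beta)e^{-\lambda\Delta}=\eta_A$. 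If $\widehat q=+\infty$, the event $E$ is empty, so the hypothesis forces $\beta=1$ and the bound $\eta_A=1$ is trivial.

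\textbf{Main obstacle.} The delicate step is the exchangeability transfer in the clean part. The fitted objects must depend only on $\cI^D_{\rm fit}$, and contexts, merges and fallback rules must be frozen beforehand, as the setup stipulates. Under those conditions, applying one fixed map to exchangeable pairs preserves exchangeability. Tie handling and the $+\infty$ convention need a brief separate check.
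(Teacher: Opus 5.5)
Your proposal is correct and follows essentially the same route as the paper. The clean claims come from the split-conformal rank bound together with $0\le 1-w\le \1\{w<1\}$; the paper breaks ties with auxiliary continuous variables, which is equivalent to your stochastic-dominance remark. The attacked claim comes from splitting the expectation on $E=\{S^D\ge\widehat q_{1-\alpha}+\Delta\}$, and your monotonicity-in-$p$ step and the $\widehat q_{1-\alpha}=+\infty$ checks simply make explicit what the paper's ``splitting the expectation'' leaves implicit.
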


The first claim is the standard finite-sample marginal split-calibration guarantee \citep{vovk2005algorithmic,angelopoulos2023gentle}. Corollary~\ref{cor:drift} extends this control to adaptive round $t$ under the stated conditional total-variation premise, giving a clean downweighting probability of at most $\alpha+\E\rho_t$. For separated attacked scores, the same weight map yields the exponential attenuation in Theorem~\ref{thm:main-reliability}.

\begin{corollary}[From reliability to initial influence]
\label{cor:main-attack-gradient}
Under the attacked-score condition of Theorem~\ref{thm:main-reliability} and the policy-score bound of Theorem~\ref{thm:main-geometry}, with skipped contributions set to zero,
\[
\E\!\left[\left\|A^{\STAR}_{b,i}g^{\rm avg}_{b,i}/G\right\|_2\mid Z_{b,i}=1\right]
\le L_\pi\eta_A/G.
\]
\end{corollary}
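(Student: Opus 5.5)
The plan is to chain the deterministic per-rollout direction bound of Theorem~\ref{thm:main-geometry} with the attacked-score weight bound of Theorem~\ref{thm:main-reliability}. The key observation is that the bound $\|A^{\STAR}_{b,i}g^{\rm avg}_{b,i}/G\|_2\le L_\pi w_{b,i}/G$ holds pointwise, for every realization of the rollouts, scores, and fitted quantities. It requires only weights in $[0,1]$, finite fitted locations, a positive scale, and $\|g^{\rm avg}_{b,i}\|_2\le L_\pi$. It does not require an exact fit. The argument therefore reduces to taking conditional expectations of a pointwise inequality.

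First, I would handle the skipped cases. If group $b$ is non-admissible, if no group is admissible, or if the solver/KKT check fails, the contribution is set to zero. Zero is at most $L_\pi w_{b,i}/G$ because $w_{b,i}\ge0$. On the event that the step is taken, $|A^{\STAR}_{b,i}|\le w_{b,i}$ follows from Theorem~\ref{thm:main-geometry}, using $|\varphi_{b,i}|\le1$ and $\bar w_b\le\nu_b$. Combining this with the norm bound on $g^{\rm avg}_{b,i}$ gives
\[
\left\|A^{\STAR}_{b,i}g^{\rm avg}_{b,i}/G\right\|_2\le L_\pi w_{b,i}/G
\]
almost surely, across all cases.

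Second, I would condition on $Z_{b,i}=1$ and use monotonicity of conditional expectation to get $\E[\|\cdot\|_2\mid Z_{b,i}=1]\le (L_\pi/G)\,\E[w_{b,i}\mid Z_{b,i}=1]$. Theorem~\ref{thm:main-reliability} then gives $\E[w\mid Z=1]\le\eta_A$. That bound comes from splitting on the event $\{S^D\ge\widehat q_{1-\alpha}+\Delta\}$: on this event $w\le e^{-\lambda\Delta}$, and off it $w\le1$.

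The only delicate step is checking that the weight $w_{b,i}$ appearing in the advantage is the same random variable to which the attacked-score premise applies, under the same conditioning. The premise is stated for a generic attacked sample, with the calibration objects fixed. I would apply it to the rollout $(b,i)$ and interpret the conditioning consistently, including conditioning on $\cI^D_{\rm fit}$ and the calibration split if the premise is stated that way; the tower property then removes that conditioning. Because the pointwise bound needs nothing about the other rollouts in the group, no further control of dependence within the group is needed.
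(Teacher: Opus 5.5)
Your proposal is correct and matches the paper's proof. Skipped rollouts contribute zero, and admissible ones are bounded pointwise by $L_\pi w_{b,i}/G$ via the deterministic direction bound (Theorem~\ref{thm:update-norm}); taking the attacked conditional expectation and applying \eqref{eq:attack-weight-attenuation} then yields $L_\pi\eta_A/G$, and your remark that the attacked-score premise must be applied to the specific weight $w_{b,i}$ under consistent conditioning makes explicit what the paper leaves implicit.
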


For the token-mean reduction used in the experiments, Corollary~\ref{cor:token-mean} gives the corresponding length-weighted control. For response lengths $L_i$ and $T=\sum_iL_i>0$, $\|U_{\rm token}\|_2\le L_\pi\sum_i L_iw_i/T$, directly linking the realized optimization reduction to the rollout reliabilities.

\section{Experiments}
\label{sec:experiments}

We evaluate STAR-GRPO in two qualitatively different reward-hacking regimes that share the same structural failure: the optimized score becomes larger than the support provided by an alternative view of the same rollout. \emph{RQ1} studies representation-dependent reward inflation caused by a token-interface mismatch. \emph{RQ2} studies semantic proxy overoptimization, where a rubric-conditioned training judge increasingly disagrees with an independent evaluator. The two settings test the same reliability-first principle with different score-pair semantics. Detailed model, calibration, optimizer, and evaluation configurations are given in Appendix~\ref{app:experimental-design}.

\subsection{RQ1: Mitigating Token-Interface Reward Hacking}
\label{sec:experiment-token}

\paragraph{Setup.}
We use Llama-3.2-1B-Instruct as the policy and Skywork-Reward-V2-Qwen3-8B \citep{liu2026skywork} as the reward model, with 10,000 WildChat training prompts \citep{zhao2024wildchat} and 100 curated NoveltyBench validation prompts \citep{zhang2025noveltybench}. Each validation prompt is sampled eight times. Training uses learning rate $10^{-6}$, batch size 64, eight rollouts per prompt, one PPO epoch, bfloat16, and a reference-policy KL coefficient of $0.001$. Prompt and response limits are 512 and 2,048 policy tokens, and all reward-model evaluations use a 4,096-token input limit.

TOMPA-GRPO optimizes the deployed-interface score $R^{\obs}$ through the identity token map $\Phi(j)=j$, which transfers response token IDs directly to the reward-model vocabulary. STAR evaluates the same rollout through both the deployed interface and a canonical decode--normalize--retokenize path, producing
\begin{equation}
D=R^{\obs}-R^{\can}.
\label{eq:token-experiment-discrepancy}
\end{equation}
The STAR quality path is canonical-primary, $r=10\tanh(R^{\can}/10)$ ($\kappa=0$), while $D$ determines how strongly each rollout is allowed to influence the robust group statistics and the final advantage. This directly targets representation-dependent reward inflation: optimization follows canonical quality, while the deployed/canonical mismatch controls reliability.

\begin{figure}[t]
\centering
\includegraphics[width=0.9\linewidth]{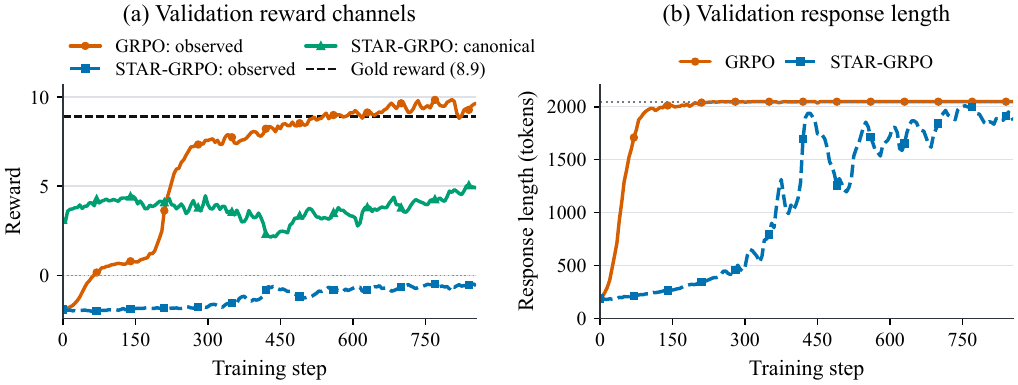}
\caption{RQ1 validation trajectories through step 855. (a) TOMPA-GRPO rapidly increases the deployed-interface reward, whereas STAR tracks both deployed and canonical views and optimizes the canonical quality path. The dashed 8.9 line is a reference-answer score on the observed scale. (b) TOMPA-GRPO reaches the 2,048-token response limit early in training, while STAR maintains a paired-score training signal throughout optimization.}
\label{fig:token-validation}
\end{figure}

\begin{figure}[t]
\centering
\includegraphics[width=\linewidth]{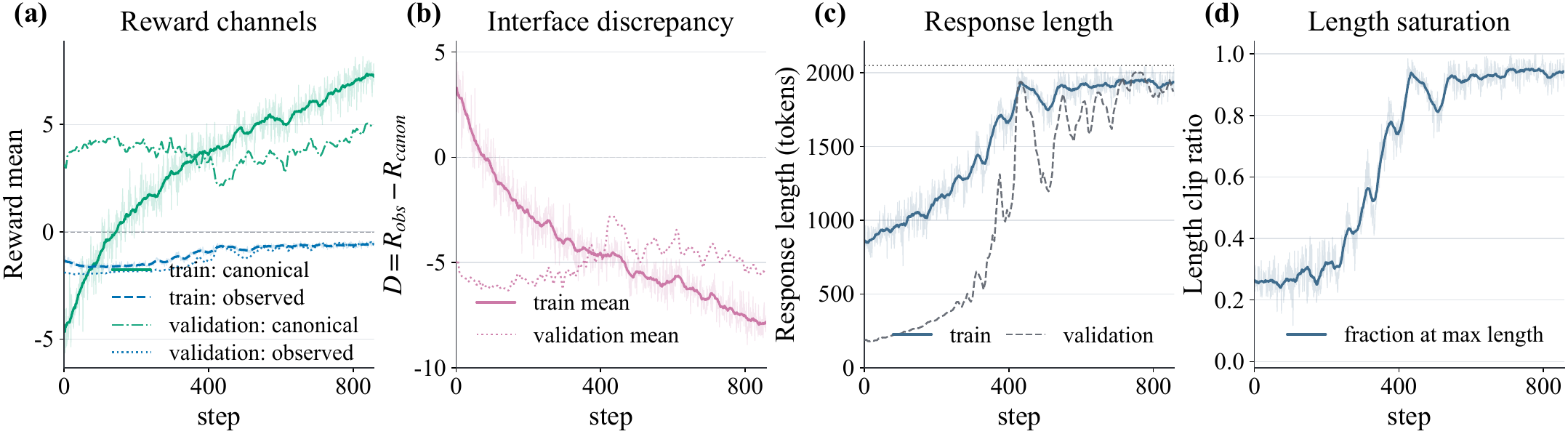}
\caption{Canonical-primary STAR in RQ1: (a) observed and canonical rewards, (b) interface discrepancy, (c) response length, and (d) training length-clip ratio. Training and validation traces follow the legend styles.}
\label{fig:token-training}
\end{figure}

\paragraph{Mitigating interface exploitation.}
TOMPA-GRPO exhibits the characteristic token-interface failure mode: its observed reward rises from $-1.894$ to 9.641, and the mean response length reaches the 2,048-token cap by step 235. In contrast, STAR improves the canonical validation reward from 3.121 to 4.902, an absolute gain of 1.781 (57.1\% relative), while the observed-interface reward ends at $-0.570$. The optimized signal therefore moves in the canonical direction rather than following the exploitable deployed-interface score. Figure~\ref{fig:token-validation} makes this separation visible over the full trajectory: the TOMPA reward channel accelerates toward the hacked high-score regime, whereas STAR preserves a canonical quality signal throughout training.

\paragraph{Selective reliability.}
STAR's average rollout reliability remains high, between 0.9847 and 1.0000, and the mean effective group size stays between 7.955 and 8.000 out of eight. At the same time, the minimum individual reliability reaches 0.105 over the recorded training trajectory (Figure~\ref{fig:token-training}). This is the intended operating regime: STAR leaves the bulk of supported rollouts nearly unchanged while strongly attenuating the small subset whose deployed score is poorly supported by the canonical view. The canonical-only diagnostic control and additional endpoint statistics are reported in Appendix~\ref{app:canonical-control}; they confirm that the canonical channel provides a meaningful anchor, while STAR turns that anchor disagreement into a general reliability mechanism that also applies when the proxy itself must remain the optimization target, as in RQ2.

\subsection{RQ2: Mitigating Rubric-Proxy Overoptimization}
\label{sec:experiment-rubric}

\paragraph{Setup.}
We train Qwen3-4B on RubricHub-Medical \citep{li2026rubrichub} and evaluate on HealthBench-Hard \citep{arora2025healthbench}. Both GRPO and STAR optimize the same GPT-4o-mini rubric-conditioned proxy score. STAR additionally obtains a rubric-free semantic assessment from Gemini-2.5-Flash-Lite and forms $D^{\rm rubric}=R^{\rm proxy}-R^{\rm anchor}$ to estimate reliability. The scalar quality path remains $r=R^{\rm proxy}$ ($\kappa=1$): the anchor does not replace the task reward; it determines how strongly the proxy score is trusted for learning. Claude-Sonnet-4-6 is used only as an independent validation judge and never enters the policy update.

Both methods use the same policy model, training data, 16 rollouts per prompt, PPO schedule, token-mean loss aggregation, and validation protocol. STAR uses fixed discrepancy center 0, scale 0.1, reliability threshold 1.645, and decay 2. Invalid score pairs are mapped to zero reliability and therefore excluded from the reward-side fit, consistent with the same reliability-first rule used for valid but weakly supported proxy scores.

\paragraph{Evaluation.}
The primary external metric is the independent-judge score. We additionally report the proxy--judge gap, criterion pass rate, and overclaim fraction. The proxy--judge gap measures the extent to which optimization progress on the training proxy is not reproduced by the independent evaluator; overclaim is the fraction of positive-weight rubric criteria credited by the proxy but not by the independent judge. Both validation judges score the same generated response against the same evaluation rubric, and Table~\ref{tab:rubric-results} uses the latest common checkpoint satisfying the fixed validation-availability criterion.

\begin{figure}[t]
\centering
\includegraphics[width=\linewidth]{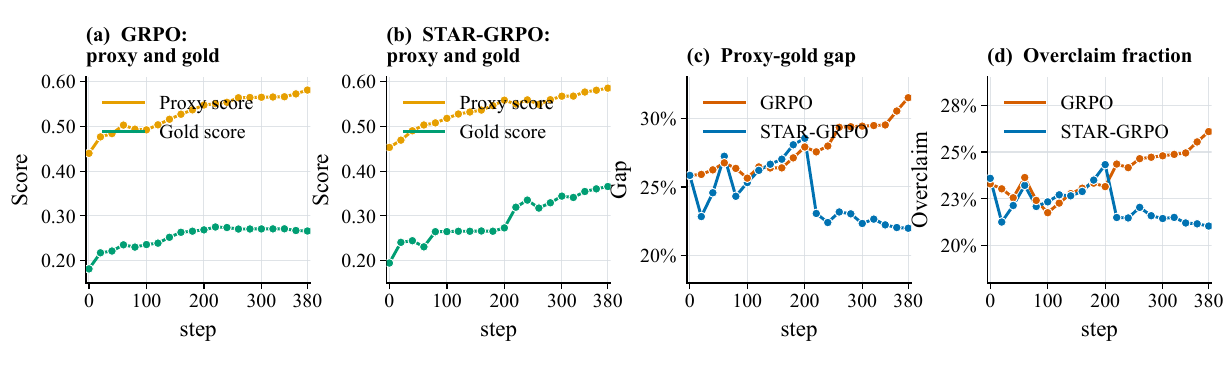}
\caption{RQ2 validation trajectories. (a,b) Training-proxy and independent-judge scores for GRPO and STAR-GRPO; (c) proxy--judge gap; (d) overclaim fraction. The independent judge is used only for evaluation and is distinct from STAR's rubric-free training anchor.}
\label{fig:rubric-main}
\end{figure}

\begin{figure}[t]
\centering
\includegraphics[width=\linewidth]{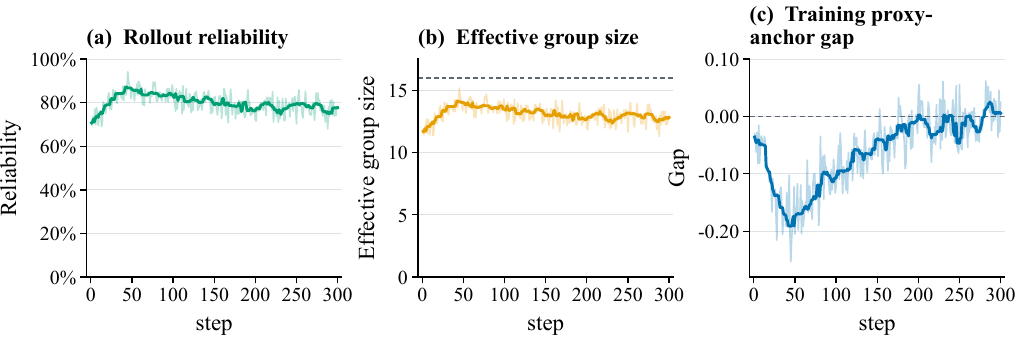}
\caption{STAR training diagnostics in RQ2: rollout reliability, effective group size, and proxy--anchor gap. Bold curves are 11-step rolling medians of per-update metadata. The training anchor differs from the independent evaluation judge.}
\label{fig:rubric-mechanism}
\end{figure}

\paragraph{Independent quality improves as proxy overclaim decreases.}
STAR reaches an independent-judge score of 0.3174 compared with 0.2706 for GRPO, corresponding to a 17.3\% relative increase. At the same checkpoint, the proxy--judge gap decreases from 0.2935 to 0.2318 (21.0\% relative reduction), while the overclaim fraction decreases from 0.2464 to 0.2204 (10.6\% relative reduction). Criterion pass rate simultaneously increases from 0.4752 to 0.5049. Importantly, these external gains occur while STAR's proxy score decreases from 0.5640 to 0.5492. This is the desired signature of reduced reward hacking: STAR is not simply driving the proxy harder; it converts more of the optimized proxy signal into quality that is reproduced by an independent evaluator.

\paragraph{Reliability is active throughout training.}
The 11-step rolling median rollout reliability remains approximately 0.70--0.90 and ends near 0.78, while mean effective group size ranges from 11.3 to 15.2 out of 16 (Figure~\ref{fig:rubric-mechanism}). The training proxy--anchor gap ranges from approximately $-0.253$ to $0.062$ and moves toward zero, and non-admissible groups reach approximately 10.9\%. These diagnostics show that the mechanism is active rather than degenerate: reliability continuously reshapes the contribution of individual rollouts and, when support becomes too weak, converts uncertainty into abstention instead of a potentially misleading reward update. The extra reward-side computation is lightweight relative to policy rollout: one anchor assessment per rollout plus an $O(BG)$ robust fit with a one-dimensional shared scale.

\paragraph{Two reward-hacking regimes, one optimization principle.}
RQ1 and RQ2 stress STAR in complementary conditions. RQ1 targets \emph{representation-dependent} reward inflation, where the deployed token interface itself creates an exploitable score channel; STAR shifts learning toward canonical quality and prevents the hacked observed reward from becoming the optimization target. RQ2 targets \emph{semantic proxy} overoptimization, where the proxy must remain the task reward; STAR therefore uses an independent semantic view only to modulate influence. Across both regimes, paired reward assessments provide the missing signal, and reliability-first normalization turns that signal into controlled group statistics and bounded policy advantages.

\section{Conclusion}
\label{sec:conclusion-main}

Reward hacking exposes a structural weakness of group-relative policy optimization: an unsupported reward can distort the group baseline and thereby change the advantages assigned to other rollouts. STAR-GRPO addresses this problem using paired assessments of each rollout to estimate reliability before normalization. Reliability shapes the robust group fit, while an absolute group-reliability factor attenuates the bounded advantages used for policy updates. The resulting construction provides coordinate and second-moment control, an auditable condition for exact centering, and reliability-dependent attenuation of reward-side influence. 

Across two distinct reward-hacking settings, the experiments show that STAR limits the influence of poorly supported scores. In token-interface exploitation, it suppresses runaway deployed-interface reward while improving canonical reward. In rubric-proxy medical reasoning, it improves independent-judge quality and reduces both proxy--judge disagreement and overclaim while retaining the proxy as the task reward. These results suggest that an auxiliary assessment can improve the robustness of group-relative learning by regulating how strongly a reward shapes the baseline and policy update, without requiring the same assessment to serve as the optimization target.


\emph{Ethics Statement.} STAR supports reliable reward-based training by monitoring paired scoring views. Deployment also calls for independent semantic evaluation, appropriate human oversight, and protection of sensitive logged outputs and model interfaces. Attack descriptions and evaluation artifacts should be released with safeguards proportionate to the systems they may affect.

\emph{Reproducibility Statement.} Equations~\eqref{eq:anchored-quality}--\eqref{eq:main-advantage} fully specify the STAR advantage construction. The appendix provides complete proofs, the reference numerical protocol, the paired-score interfaces, calibration and reliability parameters, optimizer settings, and the realized RQ1/RQ2 configurations needed to reproduce the reported experiments.

\emph{AI Use Statement.} Generative AI tools were used to assist with manuscript restructuring,
literature discovery, methodological critique, mathematical formulation and
checking, proof editing, and \LaTeX{} preparation. They were not used to
generate experimental observations or to independently determine empirical
conclusions. The authors retain responsibility for checking mathematical claims,
proofs, citations, and manuscript content against the stated assumptions,
derivations, experimental artifacts, and primary sources, and for the
final paper.
}

\clearpage
\appendix

{\revcolor

\section{Extended Related Work}
\label{sec:related-work}

Learning a reward from human preferences separates feedback collection from policy optimization \citep{christiano2017deep,ouyang2022training,bai2022training}. That separation creates a proxy objective whose maximization need not preserve the intended reward ordering. Formal work characterizes when proxy and true rewards can disagree under optimization \citep{skalse2022gaming}; scaling experiments quantify overoptimization under PPO and best-of-$n$ sampling \citep{gao2023scaling}. Underspecification explains why equivalent training-domain performance need not imply equivalent deployment behavior \citep{damour2022underspecification}, and surveys identify broader limitations of RLHF \citep{casper2023open}. Our identification results address the narrower information boundary of detecting shifts from paired scores.

Reward-model ensembles mitigate overoptimization through conservative objectives \citep{coste2024ensembles}, while diverse LoRA ensembles support uncertainty-penalized rewards \citep{zhai2024uprlhf}. Robust reward-model training uses a causal formulation and data augmentation to reduce prompt-independent preference artifacts \citep{liu2024rrm}. Conformal Feedback Alignment constructs answer-level reliability for DPO- and PPO-style alignment \citep{chen2026cfa}. These approaches motivate reliability-aware learning but intervene at different points. STAR's contribution is orthogonal: it uses a setting-specific score discrepancy to weight the group baseline \emph{before} normalization and then preserves the absolute reliability level in the final advantages, directly targeting contamination of group-relative statistics.

Group-relative optimization removes the learned critic by forming advantages within each prompt group \citep{shao2024deepseekmath}. Dr.\ GRPO identifies how per-response length reduction and reward-standard-deviation normalization change optimization weights \citep{liu2025understanding}. DAPO's token-level objective assigns equal weight to valid tokens, so longer responses contribute more to the aggregate direction \citep{yu2025dapo}. These observations motivate our explicit distinction between equal-response and token-mean reductions. STAR changes the reward-side advantage construction while retaining the specified PPO likelihood-ratio and clipping mechanics \citep{schulman2017ppo}; the initial-direction bounds do not imply guarantees for the full optimization trajectory.

Evaluation introduces a separate reliability problem. RewardBench tests preference discrimination across chat, reasoning, and safety cases \citep{lambert2025rewardbench}. LLM-as-a-judge studies document position, verbosity, and self-enhancement biases \citep{zheng2023judging}; length-controlled AlpacaEval explicitly adjusts for verbosity as a confounder \citep{dubois2024length}. These findings motivate interpreting response length as a diagnostic and paired disagreement as an operational signal. RQ2 uses RubricHub's structured training criteria \citep{li2026rubrichub} and the physician-designed HealthBench evaluation rubrics \citep{arora2025healthbench}, with our specified proxy and independent judges. Using these prompts and rubrics does not make our reported scores an evaluation under the original benchmarks' complete protocols. Similarly, RQ1 uses prompts from WildChat and NoveltyBench \citep{zhao2024wildchat,zhang2025noveltybench}, but reports reward and length dynamics rather than NoveltyBench's diversity metric.

The statistical tools serve distinct roles. Robust M-estimation limits the effect of extreme residuals \citep{huber1964robust}; bounded-influence and median-based estimators address heavy tails \citep{catoni2012challenging,lugosi2019mean,lecue2020robust}. Self-tuned pseudo-Huber estimation jointly adapts location and robustification scale \citep{sun2024selftuned}. Split conformal methods provide finite-sample marginal guarantees under exchangeability \citep{vovk2005algorithmic,angelopoulos2023gentle}. Conformal prediction beyond exchangeability studies coverage loss under distribution drift \citep{barber2023beyond}; our adaptive-round statement instead uses its explicitly stated conditional total-variation premise. Neither that premise nor clean exchangeability is supplied automatically by on-policy training.

\section{Core Proofs and Formal Foundations}
\label{app:core-foundations}

The following proof map links the main-text statements to their detailed derivations. We then fix notation and establish the identification results.

\subsection{Proof Map and Formal Setup}

The following proof map separates the main statements by the assumptions they use. The identification and calibration results concern observable score information; the magnitude and initial-direction bounds are deterministic conditional on the realized reward and weight arrays.

\begin{proof}[Proof of Theorem~\ref{thm:main-identification}]
Part (i) is Theorem~\ref{thm:single-view}. Part (ii) is Theorem~\ref{thm:view-consistent}, whose total-variation argument also covers randomized discrepancy-only tests after conditioning on their auxiliary randomness.
\end{proof}

\begin{proof}[Proof of Theorem~\ref{thm:main-reliability}]
The clean and separated-attack claims are Theorem~\ref{thm:weight-calibration}. The adaptive extension discussed after the theorem is Corollary~\ref{cor:drift}: its conditional total-variation premise transfers the static rank bound and averaging gives the outer marginal guarantee.
\end{proof}

\begin{proof}[Proof of Theorem~\ref{thm:main-geometry}]
The magnitude bounds and the conditional zero-sum statement follow from Theorem~\ref{thm:advantage-properties}. Theorem~\ref{thm:weighted-convexity} establishes convexity and uniqueness separately. Applying the magnitude bounds to the old-policy score vectors gives Theorem~\ref{thm:update-norm}; this step uses neither optimality of the fit nor exact centering.
\end{proof}

\begin{proof}[Proof of Corollary~\ref{cor:main-attack-gradient}]
A skipped rollout contributes zero. For an admissible rollout, Theorem~\ref{thm:update-norm} bounds its contribution by \(L_\pi w_{b,i}/G\). Taking the attacked conditional expectation and invoking \eqref{eq:attack-weight-attenuation} gives the displayed result.
\end{proof}

The proof of Proposition~\ref{prop:main-ordering} is given with the STAR advantage calculation in Section~\ref{sec:advantages}, where the limiting fit and the vanishing outlier contribution can be checked together. With these dependencies fixed, we now specify the rollout model, the trusted-data split, and the probability conventions used by every later derivation.

\label{sec:problem}

A minibatch contains prompts $X_1,\ldots,X_B$. For each prompt, the old policy samples $G$ rollouts
\[
O_{b,i}\sim\pi_{\theta_{\rm old}}(\cdot\mid X_b),
\quad b\in[B],\ i\in[G].
\]
The observed interface $M_{\obs}$ is the representation actually scored by the deployed pipeline. The canonical channel $C$ decodes the policy sequence into user-visible text, applies a prespecified normalization, and retokenizes with the reward tokenizer. Equation~\eqref{eq:intro-two-rewards} defines the two scores. Their difference is the minimal observable needed to separate quality from interface support, so we define it before specifying the trusted calibration model.

\begin{definition}[Representation discrepancy]
The representation discrepancy of rollout $(b,i)$ is
\[
D_{b,i}=R^{\obs}_{b,i}-R^{\can}_{b,i}.
\]
Positive discrepancy measures observed-interface reward unsupported by the canonical channel.
\end{definition}

Direct token-index mapping is not assumed to preserve semantics. It is an interface intervention. The online method uses the actual deployed interface as $M_{\obs}$; arbitrary counterfactual mappings are used for auditing unless they are reachable in deployment.

Let $H_{b,i}\in[H]$ be a finite prespecified context. A context may encode reward-model identity, policy--reward tokenizer pair, task class, language, response-length bin, or frozen policy epoch. Contexts are fixed before the held-out conformal calibration split is examined. Rare contexts follow a deterministic fallback hierarchy.

A trusted benign dataset is split into
\[
\cI^D_{\rm fit}\quad\text{and}\quad\cI^D_{\rm cal}.
\]
The first fits context-specific discrepancy locations and scales; the second calibrates a single pooled standardized quantile. Policy-training rollouts and final test examples are disjoint from both splits. Fix \(\varepsilon_D\ge0\) and \(\varepsilon_w\ge0\); positive scale constraints ensure every score denominator is strictly positive.

For clean outputs in context $c$, write
\begin{equation}
D=m_{D,c}+s_{D,c}\epsilon_{D,c},
\quad
\E\epsilon_{D,c}=0,
\quad
\E\epsilon_{D,c}^2=1,
\label{eq:clean-discrepancy-model}
\end{equation}
with $0<s_{D,c}<\infty$. Finite variance is used for the self-tuned scale interpretation, but conformal validity itself only requires exchangeability after the score map has been fitted.

For approximate cross-context pooling, let $F_{D,c}$ be the CDF of $\epsilon_{D,c}$ and assume
\begin{equation}
\sup_x|F_{D,c}(x)-F_D(x)|\le\eta_{\rm het}.
\label{eq:residual-heterogeneity}
\end{equation}
The special case $\eta_{\rm het}=0$ is a common standardized residual law.

A latent label $Z\in\{0,1\}$ is introduced only for detection-power analysis. We use three nested alternatives.
\begin{enumerate}
\item \emph{Standardized shift:}
\[
S^D\mid Z=1,H=c=\epsilon_{1,c}+\Delta_c,
\quad \Delta_c>0.
\]
\item \emph{Stochastic shift:} if $F_{0,c}$ and $F_{1,c}$ are clean and attacked standardized-score CDFs,
\[
F_{1,c}(t)\le F_{0,c}(t-\Delta_c).
\]
\item \emph{Quantile separation:}
\[
F^{-1}_{1,c}(\beta)>F^{-1}_{0,c}(1-\alpha).
\]
\end{enumerate}
Mean separation alone is not enough for power without variance or tail control.

The canonical view is an integrity anchor rather than the only permitted quality source. Fix \(\kappa\in[0,1]\) before validation and define
\begin{equation}
R_{\kappa,b,i}=R^{\can}_{b,i}+\kappa
(R^{\obs}_{b,i}-R^{\can}_{b,i}),
\quad
r_{b,i}=h(R_{\kappa,b,i}),
\label{eq:bounded-canonical-reward}
\end{equation}
where \(h:\R\to[-R_{\max},R_{\max}]\) is fixed, monotone, and \(1\)-Lipschitz. The default \(\kappa=1\) retains the deployed reward, whereas \(\kappa=0\) selects the canonical reward path. The value is fixed before calibration to specify the robustness--fidelity trade-off.

Let \(w_{b,i}\in(0,1]\) be the reliability weights defined below. Put
\begin{equation}
W_b=\sum_{i=1}^G w_{b,i},
\quad
\bar w_b=\frac{W_b}{G},
\quad
p_{b,i}=\frac{w_{b,i}}{W_b},
\quad
G_{\mathrm{eff},b}=\frac{W_b^2}{\sum_iw_{b,i}^2}.
\label{eq:effective-size}
\end{equation}
A group is admissible when \(W_b\ge W_{\min}>0\) and
\(G_{\mathrm{eff},b}\ge G_{\min}>1\). The primary fallback is skip: the group's reward advantage is zero, it is excluded from the shared-scale fit, and the policy loss retains the original group denominator. This is the fallback used in the realized experiments. If every group is non-admissible, the complete optimizer step is skipped.
The analysis below uses one final set of conventions to keep three sources of error separate.
\label{app:notation}

All conformal guarantees are conditional on the fitting split $\cI^D_{\rm fit}$ because the standardized score map is learned there. Unless explicitly stated otherwise, probabilities in Sections~\ref{sec:discrepancy-calibration}--\ref{sec:weights} integrate over the held-out calibration split and a new example. Conditional statements about STAR advantages treat rewards and reliability weights as fixed arrays.

For a generalized inverse, we use
\[
F^{-1}(u)=\inf\{x:F(x)\ge u\},
\quad u\in[0,1].
\]
The convention $F^{-1}(0)=-\infty$ and $F^{-1}(1)=\sup\{x:F(x)<1\}$ is used when needed. Ties in conformal scores are handled conservatively by strict rejection $S_{\rm new}>\widehat q$.

There are three distinct discrepancies in the analysis:
\begin{enumerate}
\item \emph{representation discrepancy} $D=R^{\obs}-R^{\can}$;
\item \emph{calibration estimation error} in $(\widehat m_{D,c},\widehat s_{D,c},\widehat q)$;
\item \emph{online weight error} $w-w^{\rm or}$ relative to latent clean indicators.
\end{enumerate}
The first is observed, the second is statistical, and the third is an oracle comparison. Human-utility error is not identified.

The theoretical group update uses one score vector $g_{b,i}$ per rollout. In token-level GRPO, this vector can be interpreted as the average of token score vectors. If every token score is bounded by $L_{\rm tok}$ and the rollout score is length-normalized, then $L_\pi=L_{\rm tok}$ is valid. If score vectors are summed rather than averaged, $L_\pi$ scales with response length and must be tracked explicitly.

\subsection{Identifiability Boundaries}
\label{app:identifiability-boundaries}

The identification results establish the information provided by an additional scoring view and characterize invariance under common shifts across views.

\label{sec:necessity}

A scalar reward cannot reveal a counterfactual score that was never evaluated. The next result formalizes this missing-information obstruction and motivates paired scoring.

\begin{theorem}
\label{thm:single-view}
Let $P$ be any probability law on $\R$ and let $B>0$. There exist two latent models with identical observable law $R^{\obs}\sim P$ such that in the first model every example is clean and in the second every example is representation-dependently attacked with canonical reward smaller by $B$. Consequently, no measurable detector based only on $R^{\obs}$ can uniformly distinguish the two model classes.
\end{theorem}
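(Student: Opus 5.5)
The plan is a direct coupling construction on a common probability space. Let \(P\) be given and take any random variable \(Y\sim P\). In the first (clean) model, define the latent pair by \(R^{\obs}=Y\) and \(R^{\can}=Y\), so \(D=0\) and every example carries label \(Z=0\). In the second (attacked) model, define \(R^{\obs}=Y\) and \(R^{\can}=Y-B\), so \(D=B>0\) and every example carries \(Z=1\). The canonical reward is then smaller by exactly \(B\), and the deployed-interface shift is representation-dependent: it appears in the observed view but is not supported by the canonical one. If a reward-model-level realization is wanted, it suffices to let \(R_\phi\) take the value \(Y\) on \(M_{\obs}(O)\) and \(Y-B\) on \(C(O)\). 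Since only the law of the observables matters, though, I would state the models directly at the level of score pairs, with the rollouts entering only through \(Y\).

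The next step is to check that the observable laws agree. In both models \(R^{\obs}=Y\sim P\). The same holds jointly for any i.i.d.\ sample of size \(n\): the sample \((R^{\obs}_1,\ldots,R^{\obs}_n)\) has law \(P^{\otimes n}\) in both models.

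To conclude, let \(\psi\) be any measurable test, possibly randomized through independent auxiliary randomness \(U\), that is a function of \(R^{\obs}\) alone. Its rejection probability is the same in both models. Hence type-I error plus type-II error equals \(\Prob_{\rm clean}(\psi=1)+1-\Prob_{\rm att}(\psi=1)=1\). By the total-variation characterization this sum is \(\ge 1-\TV(P^{\otimes n},P^{\otimes n})=1\) for every test, so no detector does better than trivial guessing, uniformly over sample sizes. This is the precise meaning of ``cannot uniformly distinguish.''

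I do not expect a real obstacle here; the argument is a construction. The only points needing care are formalizing ``model classes'' as joint laws of \((R^{\obs},R^{\can},Z)\) with prescribed \(R^{\obs}\)-marginal, and handling randomized detectors by conditioning on \(U\). Both are bookkeeping rather than substance.
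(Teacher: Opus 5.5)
Your proposal is correct and follows the paper's proof essentially verbatim. Both use the same coupling $Y\sim P$ with $R^{\obs}=Y$ in both models and $R^{\can}=Y$ versus $R^{\can}=Y-B$, note that any detector of $R^{\obs}$ has the same law under both models, and handle randomized detectors by conditioning on their auxiliary randomness. Your explicit computation that type-I plus type-II error equals one, extended to i.i.d.\ samples of size $n$, is a slightly sharper form of the paper's conclusion that no detector can have both zero false-positive and zero false-negative probability.
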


\begin{proof}
In Model A draw $Y\sim P$, set $Z=0$, $R^{\obs}=Y$, and $R^{\can}=Y$. In Model B draw the same $Y\sim P$, set $Z=1$, $R^{\obs}=Y$, and $R^{\can}=Y-B$. The marginal law of the only observed variable $R^{\obs}$ is $P$ in both models, but the attack labels and canonical rewards differ. Let $\delta:\R\to\{0,1\}$ be any detector. Its distribution is identical under the two models because its argument has the same law. Hence it cannot have both zero false-positive probability in Model A and zero false-negative probability in Model B. The same argument applies to randomized detectors after conditioning on their auxiliary randomness.
\end{proof}

\begin{remark}
Any scalar transformation of $R^{\obs}$, including clipping, a median, a median-of-means aggregate, Huberization, or mean--standard-deviation normalization, remains a function of the same non-identifying observation.
\end{remark}

Paired scoring resolves the single-view ambiguity only when an attack changes cross-view differences. The following result states the complementary failure boundary used throughout the later threat model.

\begin{theorem}
\label{thm:view-consistent}
Let $Q$ be any detector score measurable with respect to reward differences among a collection of views $(R_0,\ldots,R_K)$. If the clean and attacked laws of $Q$ are identical, every test based on $Q$ has type-I plus type-II error at least one. In particular, if an attack adds the same random shift $B$ to all views,
\[
R_k^{\rm attack}=R_k^{\rm clean}+B,
\quad k=0,\ldots,K,
\]
then all pairwise differences are unchanged and no discrepancy-only method can detect the attack.
\end{theorem}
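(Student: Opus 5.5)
The proof will be the standard Le Cam two-point argument applied to the law of the detector score. Under the clean model the score \(Q\) has law \(P_0\), under the attacked model law \(P_1\), and the hypothesis gives \(P_0=P_1\).

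\textbf{Deterministic tests.} Take any test \(\delta=\delta(Q)\in\{0,1\}\). The type-I error is \(\Prob_0(\delta=1)\). The type-II error is \(\Prob_1(\delta=0)\), which equals \(1-\Prob_1(\delta=1)\). Their sum is
\[
1-\bigl(\Prob_1(\delta=1)-\Prob_0(\delta=1)\bigr)\ \ge\ 1-\TV(P_0,P_1)=1.
\]
Here \(\{\delta=1\}\) is the preimage of a measurable set under \(Q\), so the difference is at most the total variation distance.

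\textbf{Randomized tests.} Write \(\delta=\delta(Q,U)\) with auxiliary randomness \(U\) independent of the model. I would condition on \(U=u\) and apply the deterministic bound to \(\delta(\cdot,u)\). Integrating over \(u\) keeps the sum at least one. This matches the conditioning remark made in the proof of Theorem~\ref{thm:single-view}.

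\textbf{The common-shift case.} Suppose \(R_k^{\rm attack}=R_k^{\rm clean}+B\) for every \(k\), with the same random \(B\). Then every pairwise difference satisfies \(R_j^{\rm attack}-R_k^{\rm attack}=R_j^{\rm clean}-R_k^{\rm clean}\) pointwise on the underlying probability space. The two vectors of differences \((R_j-R_k)_{j,k}\) therefore coincide as random variables, and so have the same law. Measurability of \(Q\) with respect to these differences means \(Q=f\bigl((R_j-R_k)_{j,k}\bigr)\) for some measurable \(f\). Hence \(Q\) has identical clean and attacked laws, and the first part applies. In particular this covers any discrepancy-only method, including STAR's \(D\)-based weights.

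\textbf{Main obstacle.} The step needing care is making "measurable with respect to reward differences" precise. I would take it to mean measurability with respect to the \(\sigma\)-algebra generated by the differences, and then invoke the Doob–Dynkin lemma to obtain the factorization \(Q=f(\cdot)\). One further subtlety: if the attack is specified only in distribution rather than as a coupling, I would still state the shift pathwise. The equality in law of the differences follows directly from that pathwise statement.
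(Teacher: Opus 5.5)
Your proposal is correct and follows essentially the same route as the paper: the identity $P_0\{\delta=1\}+P_1\{\delta=0\}=1-(P_1\{\delta=1\}-P_0\{\delta=1\})\ge 1-\TV(P_0,P_1)$, together with the observation that a common additive shift leaves every pairwise difference unchanged pointwise. Your Doob--Dynkin factorization and your handling of randomized tests by conditioning on the auxiliary randomness only make explicit what the paper states in passing.
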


\begin{proof}
Let $P_0$ and $P_1$ denote the clean and attacked laws of $Q$. For a test $\delta(Q)\in\{0,1\}$, the sum of errors is
\[
P_0\{\delta=1\}+P_1\{\delta=0\}
=1-\bigl(P_1\{\delta=1\}-P_0\{\delta=1\}\bigr)
\ge 1-\TV(P_0,P_1).
\]
If $P_0=P_1$, this lower bound equals one. Under a common additive shift across all views, every difference $R_k-R_\ell$ is unchanged pointwise, so any score measurable with respect to such differences has identical clean and attacked laws.
\end{proof}

Together, Theorems~\ref{thm:single-view} and \ref{thm:view-consistent} identify the exact scope of STAR. A canonical anchor supplies information absent from a single reward, but only attacks that create a detectable cross-interface discrepancy can be attenuated. The method is intentionally silent about shared semantic misspecification.

\section{Calibration and Reliability Guarantees}
\label{app:calibration-and-reliability}

Trusted paired examples define the context-specific discrepancy fit and the conformal threshold. The analysis connects these quantities to clean-sample preservation, attack attenuation, and distributional drift.

\subsection{Self-Tuned Context Calibration}

\subsubsection{Robust Context Fit}
\label{sec:discrepancy-calibration}

For context $c$, let $D_{c,1},\ldots,D_{c,n_c}$ be trusted fitting discrepancies. Choose a confidence parameter $z_{D,c}>0$ and scale bounds $0<s_{\min,c}<s_{\max,c}<\infty$. Define
\begin{equation}
\ell_{n,z}(u,s)
=
\frac{ns}{z^2}
\left(
\sqrt{1+\frac{z^2u^2}{ns^2}}-1
\right)+\frac{s}{2},
\label{eq:ph-loss}
\end{equation}
and
\begin{equation}
(\widehat m_{D,c},\widehat s_{D,c})
\in
\argmin_{m,\,s_{\min,c}\le s\le s_{\max,c}}
\frac1{n_c}\sum_{j=1}^{n_c}\ell_{n_c,z_{D,c}}(D_{c,j}-m,s).
\label{eq:offline-self-tuned}
\end{equation}
The coefficient $1/2$ makes the population scale approach the ordinary standard deviation in the large-sample regime under the self-tuned parameterization \citep{sun2024selftuned}.

Put
\[
a_{c,j}(m,s)=\frac{z_{D,c}^2(D_{c,j}-m)^2}{n_cs^2}.
\]
At an interior optimum, direct differentiation gives
\begin{align}
0
&=
\sum_{j=1}^{n_c}
\frac{D_{c,j}-\widehat m_{D,c}}
{\widehat s_{D,c}\sqrt{1+a_{c,j}(\widehat m_{D,c},\widehat s_{D,c})}},
\label{eq:offline-location-foc}
\\
0
&=
\frac1{n_c}\sum_{j=1}^{n_c}
\left[
\frac{n_c}{z_{D,c}^2}
\left(
\frac1{\sqrt{1+a_{c,j}(\widehat m_{D,c},\widehat s_{D,c})}}-1
\right)+\frac12
\right].
\label{eq:offline-scale-foc}
\end{align}
The location score has bounded magnitude:
\begin{equation}
\left|
\frac{u}{s\sqrt{1+z^2u^2/(ns^2)}}
\right|
\le \frac{\sqrt n}{z}.
\label{eq:bounded-offline-score}
\end{equation}

The offline fit must be globally auditable before its output can define a frozen conformal score map. The following proposition supplies that optimization property and the uniqueness condition used by the calibration protocol.

\begin{proposition}[Convexity of the offline fit]
For fixed $n,z>0$, the map $(m,s)\mapsto\ell_{n,z}(d-m,s)$ is jointly convex on $\R\times(0,\infty)$. Hence the empirical objective in \eqref{eq:offline-self-tuned} is jointly convex. If the positively weighted data contain at least two distinct values and the minimizer is interior, the objective is strictly convex and the minimizer is unique.
\end{proposition}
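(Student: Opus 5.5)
The plan is to rewrite the loss so that it is visibly a Euclidean norm composed with an affine map, plus linear terms. For $s>0$ we have $s\sqrt{1+z^2u^2/(ns^2)}=\sqrt{s^2+z^2u^2/n}$. Hence
\[
\ell_{n,z}(u,s)=\frac{n}{z^2}\sqrt{s^2+\tfrac{z^2}{n}u^2}-\frac{n}{z^2}s+\frac s2 .
\]
The first term is $(n/z^2)\,\|(s,\,z u/\sqrt n)\|_2$, and $(m,s)\mapsto(s,\,z(d-m)/\sqrt n)$ is affine. A norm composed with an affine map is convex, and the remaining terms are linear in $s$. So $(m,s)\mapsto\ell_{n,z}(d-m,s)$ is jointly convex on $\R\times(0,\infty)$, and the identity extends the formula continuously to $s\downarrow0$. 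Averaging over $j$ preserves convexity, so the empirical objective in \eqref{eq:offline-self-tuned} is jointly convex on $\R\times[s_{\min,c},s_{\max,c}]$.

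For strict convexity, I would compute the Hessian of $f(s,u)=\sqrt{s^2+au^2}$ with $a=z^2/n$. It is positive semidefinite with rank one, and its null space is spanned by the radial direction $(s,u)$. This is the only obstruction, since a norm is linear along rays. Take a direction $(\delta m,\delta s)$ in parameter space; it induces the direction $(\delta s,-\delta m)$ in the $(s,u_j)$ coordinates, where $u_j=D_{c,j}-m$. The second derivative of summand $j$ vanishes exactly when $(\delta s,-\delta m)$ is parallel to $(s,u_j)$, that is, when $-s\,\delta m=\delta s\,u_j$. The empirical Hessian is a sum of positive semidefinite terms, so it is singular only if this holds for every $j$.
\begin{itemize}
\item If $\delta s\neq0$, then $u_j=-s\,\delta m/\delta s$ is the same for all $j$. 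This contradicts the assumption of two distinct data values.
\item If $\delta s=0$, then $s>0$ forces $\delta m=0$.
\end{itemize}
So the Hessian is positive definite at every point with $s>0$. The objective is therefore strictly convex on the convex set $\R\times[s_{\min,c},s_{\max,c}]$ (and on any convex subset of $s>0$), and it has at most one minimizer.

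For existence, which makes the $\argmin$ nonempty, note that each summand is at least $(n/z^2)\bigl(\sqrt{a}\,|d-m|-s\bigr)+s/2$. This grows linearly in $|m|$, uniformly for $s$ in the compact interval. The objective is thus continuous and coercive on $\R\times[s_{\min,c},s_{\max,c}]$, so a minimizer exists and, by the previous step, is unique. The interior hypothesis in the statement is then only needed so that the first-order conditions \eqref{eq:offline-location-foc}--\eqref{eq:offline-scale-foc} characterize the minimizer. The main obstacle, and the step needing care, is the degenerate-direction analysis. Without the two-distinct-values condition, all $u_j$ coincide and the objective is genuinely flat along a ray, so uniqueness truly fails. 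I would also check that positive weights do not alter the null-space argument, since each weighted summand still contributes a positive semidefinite rank-one term.
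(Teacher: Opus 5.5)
Your proof is correct and follows essentially the paper's argument. The paper gets joint convexity by recognizing the first term as the perspective $s\rho(u/s)$ of the strictly convex $\rho(t)=\frac{n}{z^2}\bigl(\sqrt{1+z^2t^2/n}-1\bigr)$; your identity $s\sqrt{1+z^2u^2/(ns^2)}=\sqrt{s^2+z^2u^2/n}$ makes this explicit as a norm of an affine map. Both get strict convexity from the same rank-one Hessian terms $\propto (s,u_j)(s,u_j)^\top$, which are non-collinear when two residuals differ. Your coercivity argument for existence is a correct refinement the paper leaves implicit, and so is your remark that positive definiteness holds at every $s>0$, so uniqueness on the box does not actually require the interior hypothesis.
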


\begin{proof}
Define the convex scalar function
\[
\rho(t)=\frac{n}{z^2}\left(\sqrt{1+\frac{z^2t^2}{n}}-1\right).
\]
Its second derivative is
\[
\rho''(t)=\left(1+\frac{z^2t^2}{n}\right)^{-3/2}>0.
\]
The first term of \eqref{eq:ph-loss} is the perspective $s\rho(u/s)$, which is convex on $s>0$. Composition with the affine residual $u=d-m$ preserves convexity, and adding $s/2$ preserves convexity. Strict convexity follows from strict convexity of $\rho$ together with non-collinearity of residual directions generated by at least two distinct observations; equivalently, the sum of Hessian rank-one terms has full rank in $(m,s)$ at an interior point.
\end{proof}

The next result isolates the self-tuning mechanism without overclaiming a finite-sample variance estimate.

\begin{proposition}[Population scale adaptation]
Let $U$ be a nondegenerate mean-zero random variable with variance $\sigma^2<\infty$. Fix $n,z>0$ with $z^2<2n$. Suppose the population scale equation
\begin{equation}
\E\left[\frac1{\sqrt{1+z^2U^2/(ns^2)}}\right]
=1-\frac{z^2}{2n}
\label{eq:population-scale-equation}
\end{equation}
has an interior solution $s^\circ>0$. Let $\tau^\circ=\sqrt n\,s^\circ/z$ and
\[
\sigma_{\tau^\circ}^2
=
\E\left[U^2\1\{|U|\le\tau^\circ\}\right].
\]
Then
\begin{equation}
2\left(1-\frac1{\sqrt2}\right)\sigma_{\tau^\circ}^2
\le (s^\circ)^2
\le \sigma^2.
\label{eq:oracle-scale-bound}
\end{equation}
Consequently, if $\sigma_{\tau^\circ}^2\ge c_{\rm tv}\sigma^2$ for some $c_{\rm tv}>0$, then
\[
\sqrt{2(1-2^{-1/2})c_{\rm tv}}\,\sigma
\le s^\circ\le\sigma.
\]
\end{proposition}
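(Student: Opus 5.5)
The plan is to work directly with the population scale equation \eqref{eq:population-scale-equation} at $s=s^\circ$. Rewrite it as a statement about the concave function
\[
f(y)=1-(1+y)^{-1/2},\qquad y\ge0,
\]
evaluated at $Y=z^2U^2/\bigl(n(s^\circ)^2\bigr)$. Then \eqref{eq:population-scale-equation} is equivalent to
\[
\E f(Y)=\frac{z^2}{2n}.
\]
The condition $z^2<2n$ ensures the right-hand side lies in $(0,1)$, consistent with $f\in[0,1)$. Both inequalities in \eqref{eq:oracle-scale-bound} should then follow from two elementary pointwise bounds on $f$. Nondegeneracy, interiority of $s^\circ$ and the mean-zero assumption are only needed to make the equation meaningful; they play no further role.

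\emph{Upper bound.} Since $f$ is concave with $f(0)=0$ and $f'(0)=1/2$, we have $f(y)\le y/2$ for all $y\ge0$. Taking expectations gives
\[
\frac{z^2}{2n}=\E f(Y)\le \frac{\E Y}{2}=\frac{z^2\sigma^2}{2n(s^\circ)^2}.
\]
Here $\E U^2=\sigma^2$ because $U$ is mean zero. Rearranging yields $(s^\circ)^2\le\sigma^2$.

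\emph{Lower bound.} Again by concavity and $f(0)=0$, the chord bound $f(y)\ge y\,f(1)=(1-2^{-1/2})\,y$ holds for $y\in[0,1]$. For $y>1$ we only use $f(y)\ge0$. Hence
\[
f(Y)\ge(1-2^{-1/2})\,Y\,\1\{Y\le1\}.
\]
The event $\{Y\le1\}$ is exactly $\{|U|\le\sqrt n\,s^\circ/z\}=\{|U|\le\tau^\circ\}$. Taking expectations gives
\[
\frac{z^2}{2n}\ge(1-2^{-1/2})\frac{z^2\sigma_{\tau^\circ}^2}{n(s^\circ)^2},
\]
which rearranges to $(s^\circ)^2\ge2(1-2^{-1/2})\sigma_{\tau^\circ}^2$.

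The consequence follows by substituting $\sigma_{\tau^\circ}^2\ge c_{\rm tv}\sigma^2$ into the lower bound and taking square roots of both sides of \eqref{eq:oracle-scale-bound}. I expect no real obstacle. The only step needing care is choosing the truncation so that the chord constant $f(1)$ matches the stated $1-1/\sqrt2$, and checking that the truncation level coincides with $\tau^\circ$. The choice $Y\le1$ above does both.
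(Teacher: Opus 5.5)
Your proposal is correct and follows the paper's proof essentially verbatim: it uses the same substitution $x=U^2/(\tau^\circ)^2$ to turn the scale equation into $\E[1-(1+x)^{-1/2}]=z^2/(2n)$, the tangent bound $1-(1+x)^{-1/2}\le x/2$ for the upper inequality, and the chord bound $1-(1+x)^{-1/2}\ge(1-2^{-1/2})x$ on $[0,1]$ restricted to $\{|U|\le\tau^\circ\}$ for the lower one (the paper phrases this chord bound as $g(x)/x$ being decreasing). One cosmetic point: you say the mean-zero assumption plays no further role, yet you correctly use it in the upper bound to identify $\E U^2$ with $\sigma^2$.
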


\begin{proof}
Set $x=U^2/(\tau^\circ)^2$ and $g(x)=1-(1+x)^{-1/2}$. Equation~\eqref{eq:population-scale-equation} is
\[
\E g(x)=\frac{z^2}{2n}.
\]
For every $x\ge0$, concavity of the square root or direct differentiation yields $g(x)\le x/2$. Hence
\[
\frac{z^2}{2n}
\le
\frac{\E U^2}{2(\tau^\circ)^2}
=
\frac{z^2\sigma^2}{2n(s^\circ)^2},
\]
which gives $(s^\circ)^2\le\sigma^2$.

For $0\le x\le1$, the ratio $g(x)/x$ is decreasing and therefore at least $g(1)=1-2^{-1/2}$. Thus
\[
\frac{z^2}{2n}
=
\E g(x)
\ge
\left(1-\frac1{\sqrt2}\right)
\E\left[x\1\{x\le1\}\right]
=
\left(1-\frac1{\sqrt2}\right)
\frac{z^2\sigma_{\tau^\circ}^2}{n(s^\circ)^2}.
\]
Rearranging proves the lower bound.
\end{proof}

\begin{remark}
The lower bound depends on truncated variance. Uniform scale comparability therefore requires a regularity condition ensuring that a fixed truncation retains a constant fraction of total variance. Scale-boundary diagnostics complement this condition in practice.
\end{remark}

\subsubsection{Finite-Sample Calibration and Transfer}

The robust fit is not needed for rank validity, but its estimation error matters when pooled scores are interpreted across contexts. We therefore expose the required one-context deviation event and record the external result that can instantiate it.

Fix \(\delta_D\in(0,1)\). For every retained context \(c\), the fitting discrepancies are independent draws with location \(m_{D,c}\), finite positive standard deviation \(s_{D,c}\), and sample size \(n_c\ge5\) large enough for the source theorem below. Set
\[
z_{D,c}^2=\log(n_cH/\delta_D)<2n_c.
\]
After mapping \(v_0,V_0,\mu^\star,\sigma,\delta\) in \citet[Theorems~3.4--3.5, arXiv v5]{sun2024selftuned} to
\(s_{\min,c},s_{\max,c},m_{D,c},s_{D,c},\delta_D/H\), respectively, assume all of those theorems' scale-bound, truncated-variance, curvature, and sample-size premises hold. Equivalently for the present analysis, assume there are deterministic constants
\(C_{m,c}>0\) and \(0<L_{s,c}\le U_{s,c}<\infty\) such that
\[
\Prob\left(
|\widehat m_{D,c}-m_{D,c}|
>
C_{m,c}s_{D,c}
\sqrt{\frac{\log(n_cH/\delta_D)}{n_c}}
\ \text{or}\
\widehat s_{D,c}\notin[L_{s,c},U_{s,c}]
\right)
\le\frac{\delta_D}{H}.
\]
Rare contexts handled by the frozen fallback hierarchy are excluded from this simultaneous statement.

The role of the next result is to lift verified one-context guarantees to the finite frozen context collection used by STAR.

\begin{proposition}[Simultaneous context fitting]
Under the preceding contextwise deviation conditions, with probability at least
\(1-\delta_D\), simultaneously for all retained \(c\in[H]\),
\begin{align}
|\widehat m_{D,c}-m_{D,c}|
&\le
C_{m,c}s_{D,c}
\sqrt{\frac{\log(n_cH/\delta_D)}{n_c}},
\label{eq:context-location-bound}\\
L_{s,c}
&\le\widehat s_{D,c}\le U_{s,c}.
\label{eq:context-scale-bound}
\end{align}
If the instantiated base theorem gives constant-multiple scale bounds under a fixed-fraction truncated-variance condition, the same constants hold simultaneously here.
\end{proposition}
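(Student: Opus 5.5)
This is a union bound over the finite frozen context collection. For each retained context \(c\in[H]\), let \(E_c\) be the bad event that either the location deviation exceeds \(C_{m,c}s_{D,c}\sqrt{\log(n_cH/\delta_D)/n_c}\) or \(\widehat s_{D,c}\notin[L_{s,c},U_{s,c}]\). The contextwise deviation condition stated just before the proposition gives \(\Prob(E_c)\le\delta_D/H\) for each retained \(c\). That condition is obtained from \citet[Theorems~3.4--3.5]{sun2024selftuned} under the parameter mapping \(v_0,V_0,\mu^\star,\sigma,\delta\mapsto s_{\min,c},s_{\max,c},m_{D,c},s_{D,c},\delta_D/H\). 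The choice \(z_{D,c}^2=\log(n_cH/\delta_D)\) is exactly the confidence parameter that source theorem uses at level \(\delta_D/H\). The side condition \(z_{D,c}^2<2n_c\) keeps the scale equation nondegenerate, as in the population-scale proposition.

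First I fix the frozen context collection. Contexts, rare-context merges and the fallback hierarchy are all frozen before the fitting split is inspected. The retained set \(\mathcal R\subseteq[H]\) is therefore deterministic, with \(|\mathcal R|\le H\), and the per-context sample sizes \(n_c\) do not depend on the fitted values. Next, for each \(c\in\mathcal R\), I instantiate the one-context event \(E_c\) on the fitting data of context \(c\) alone. No independence across contexts is needed.

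The union bound then gives \(\Prob(\bigcup_{c\in\mathcal R}E_c)\le |\mathcal R|\,\delta_D/H\le\delta_D\). On the complement, \eqref{eq:context-location-bound} and \eqref{eq:context-scale-bound} hold simultaneously for all retained contexts.

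The final sentence about constant-multiple scale bounds is handled the same way. Suppose the base theorem yields \(L_{s,c}=c_1 s_{D,c}\) and \(U_{s,c}=c_2 s_{D,c}\), with \(c_1,c_2\) depending only on the fixed truncated-variance fraction \(c_{\rm tv}\), for instance \(c_1\) of the form \(\sqrt{2(1-2^{-1/2})c_{\rm tv}}\) up to estimation slack. These constants are inserted directly into \(E_c\). The union bound does not change any constant, only the per-context confidence level, and that adjustment was already absorbed into \(z_{D,c}\).

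The main obstacle is not the union bound but ensuring the premises are genuinely contextwise and deterministic. Three points need care. The constants \(C_{m,c},L_{s,c},U_{s,c}\) must not depend on the data. The data-dependent log factor must use the fixed \(n_c\) rather than a random count. Contexts reached only through the fallback hierarchy must be excluded, as the statement does. I would state explicitly that \(n_c\) is fixed by the pre-registered split, so that each \(E_c\) is a well-defined event with the stated probability bound before the union is taken.
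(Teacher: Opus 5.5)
Your proposal is correct and matches the paper's proof: apply the assumed one-context deviation bound at failure probability \(\delta_D/H\) to each retained context and take a union bound over at most \(H\) contexts. As in the paper, the constant-multiple scale clause introduces no new probability event, and your care about a fixed retained set and fixed \(n_c\) is consistent with the paper's one-line argument; strictly, this holds conditionally on the context labels, since rare-context merges depend on the fitting counts.
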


\begin{proof}
Apply the assumed one-context statement at failure probability
\(\delta_D/H\) and take a union bound over the retained contexts. The final sentence merely preserves any explicitly instantiated constant-multiple implication of the base theorem; it introduces no additional probability event.
\end{proof}

Fix \(\alpha\in(0,1)\). After fitting, compute for each
\(j\in\cI^D_{\rm cal}\)
\begin{equation}
S^D_j=
\frac{D_j-\widehat m_{D,H_j}}
{\widehat s_{D,H_j}+\varepsilon_D}.
\label{eq:calibration-score}
\end{equation}
Let \(n_{\rm cal}=|\cI^D_{\rm cal}|\ge1\) and
\(k_\alpha=\lceil(n_{\rm cal}+1)(1-\alpha)\rceil\).
If \(k_\alpha\le n_{\rm cal}\), let \(\widehat q_{1-\alpha}\) be the
\(k_\alpha\)th order statistic; otherwise set it to \(+\infty\).

The fitting split determines a fixed score map; the disjoint calibration split then supplies finite-sample rank validity independently of estimator consistency.

\begin{theorem}
\label{thm:conformal}
Conditional on \(\cI^D_{\rm fit}\), suppose the clean calibration pairs
\((D_j,H_j)\) and an independent new clean pair
\((D_{\rm new},H_{\rm new})\) are exchangeable. Then
\begin{equation}
\Prob\left(
S^D_{\rm new}>\widehat q_{1-\alpha}
\,\middle|\,\cI^D_{\rm fit}
\right)\le\alpha.
\label{eq:conformal-guarantee}
\end{equation}
The probability integrates over the calibration split and the new pair.
\end{theorem}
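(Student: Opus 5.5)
The plan is the standard split-conformal rank argument, run conditionally on \(\cI^D_{\rm fit}\). First, conditional on the fitting split, the estimates \(\widehat m_{D,c}\) and \(\widehat s_{D,c}\) are fixed for every context. So the score map
\[
(d,c)\mapsto \frac{d-\widehat m_{D,c}}{\widehat s_{D,c}+\varepsilon_D}
\]
is a fixed measurable function. Its denominator is strictly positive because \(\widehat s_{D,c}\ge s_{\min,c}>0\). Applying this fixed map coordinatewise to the exchangeable pairs \((D_j,H_j)_{j\in\cI^D_{\rm cal}}\) and \((D_{\rm new},H_{\rm new})\) shows that the \(n_{\rm cal}+1\) scores \(S^D_1,\ldots,S^D_{n_{\rm cal}},S^D_{\rm new}\) are exchangeable conditional on \(\cI^D_{\rm fit}\).

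Next, dispose of the trivial case. If \(k_\alpha>n_{\rm cal}\), then \(\widehat q_{1-\alpha}=+\infty\) and the event \(S^D_{\rm new}>\widehat q_{1-\alpha}\) is empty, so the bound holds. Otherwise, \(\widehat q_{1-\alpha}\) is the \(k_\alpha\)th smallest calibration score. The event \(S^D_{\rm new}>\widehat q_{1-\alpha}\) means that at least \(k_\alpha\) calibration scores lie strictly below \(S^D_{\rm new}\). This implies that \(S^D_{\rm new}\) is strictly greater than at least \(k_\alpha\) of the other \(n_{\rm cal}\) scores in the full collection of \(n_{\rm cal}+1\) scores.

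The key step, and the only place any care is needed, is handling ties. Define the strict rank
\[
\pi_i=\#\{j\ne i: S_j<S_i\}.
\]
The event above is \(\{\pi_{\rm new}\ge k_\alpha\}\). Deterministically, at most \(n_{\rm cal}+1-k_\alpha\) indices \(i\) can satisfy \(\pi_i\ge k_\alpha\). To see this, sort the scores. An index with \(\pi_i\ge k_\alpha\) has at least \(k_\alpha\) scores strictly below it, so it cannot occupy any of the first \(k_\alpha\) sorted positions. Ties only reduce strict ranks, so they can only shrink this set.

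Exchangeability then gives
\[
\Prob(\pi_{\rm new}\ge k_\alpha\mid\cI^D_{\rm fit})=\frac{1}{n_{\rm cal}+1}\,\E\Bigl[\sum_i \1\{\pi_i\ge k_\alpha\}\,\Big|\,\cI^D_{\rm fit}\Bigr],
\]
because every index has the same probability of the event. The right-hand side is at most
\[
\frac{n_{\rm cal}+1-k_\alpha}{n_{\rm cal}+1}\le\alpha,
\]
where the last inequality holds since \(k_\alpha\ge(n_{\rm cal}+1)(1-\alpha)\). This gives \eqref{eq:conformal-guarantee}, with the probability integrating over the calibration split and the new pair, as stated.
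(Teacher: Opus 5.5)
Your proof is correct and uses the same split-conformal rank argument as the paper: condition on $\cI^D_{\rm fit}$, use exchangeability of the $n_{\rm cal}+1$ scores, handle the $\widehat q_{1-\alpha}=+\infty$ case separately, and bound the rejection probability by $(n_{\rm cal}+1-k_\alpha)/(n_{\rm cal}+1)\le\alpha$. The only difference is how ties are handled. The paper attaches independent continuous auxiliary variables so that the new pair's lexicographic rank is exactly uniform. You instead show deterministically that at most $n_{\rm cal}+1-k_\alpha$ indices can have strict rank at least $k_\alpha$, then average over indices using exchangeability; this is equally valid and avoids the extra randomization.
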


\begin{proof}
Condition on the fitting split. If \(k_\alpha=n_{\rm cal}+1\), then
\(\widehat q_{1-\alpha}=+\infty\) and the rejection event is empty.
Otherwise attach independent continuous auxiliary variables to the
\(n_{\rm cal}+1\) scores and rank the resulting pairs lexicographically.
Exchangeability makes the new pair's randomized rank uniform on
\(\{1,\ldots,n_{\rm cal}+1\}\). The strict event
\(S^D_{\rm new}>\widehat q_{1-\alpha}\) implies that this rank exceeds
\(k_\alpha\); ties can only remove strict rejections. Its probability is therefore at most
\[
\frac{n_{\rm cal}+1-k_\alpha}{n_{\rm cal}+1}\le\alpha.
\]
\end{proof}

Self-tuning improves the balance of pooled calibration when contexts differ mainly by location and scale, while exchangeability supplies conformal validity. The next result quantifies this transfer at a fixed threshold.

\begin{proposition}[Approximate context equalization]
Condition on \(\cI^D_{\rm fit}\) and let \((D_{\rm new},H_{\rm new})\) be an independent fresh clean pair. Suppose the clean residual CDF in context \(c\) satisfies \eqref{eq:residual-heterogeneity} and the reference CDF \(F_D\) has density bounded by \(M_D\). On the \(\cI^D_{\rm fit}\)-measurable event where
\[
\left|\frac{\widehat m_{D,c}-m_{D,c}}{s_{D,c}}\right|\le a_c,
\quad
\left|\frac{\widehat s_{D,c}}{s_{D,c}}-1\right|\le b_c<1,
\]
the following inequality holds simultaneously for every \(q\in\R\):
\begin{equation}
\left|
\Prob\!\left\{S^D_{\rm new}>q\,\middle|\,
\cI^D_{\rm fit},H_{\rm new}=c,Z_{\rm new}=0\right\}
-[1-F_D(q)]
\right|
\le
\eta_{\rm het}
+
M_D\left(
a_c+|q|b_c+\frac{|q|\varepsilon_D}{s_{D,c}}
\right).
\label{eq:context-equalization-bound}
\end{equation}
\end{proposition}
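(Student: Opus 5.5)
Conditioning on $\cI^D_{\rm fit}$ freezes $\widehat m_{D,c}$ and $\widehat s_{D,c}$. The fresh pair is independent of the fitting split. So on $\{H_{\rm new}=c,Z_{\rm new}=0\}$ I will use the clean model \eqref{eq:clean-discrepancy-model} and write $D_{\rm new}=m_{D,c}+s_{D,c}\epsilon$, where $\epsilon\sim F_{D,c}$. The event $S^D_{\rm new}>q$ is then
\[
\epsilon>\frac{\widehat m_{D,c}-m_{D,c}}{s_{D,c}}+q\,\frac{\widehat s_{D,c}+\varepsilon_D}{s_{D,c}}=:t_c(q).
\]
Its conditional probability is $1-F_{D,c}(t_c(q))$, up to the mass of $F_{D,c}$ at the single point $t_c(q)$, which I return to below. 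The target is $1-F_D(q)$. I will use the triangle inequality and split the difference as
\[
|F_{D,c}(t_c)-F_D(t_c)|+|F_D(t_c)-F_D(q)|.
\]
The first term is at most $\eta_{\rm het}$ by \eqref{eq:residual-heterogeneity}. For the second term, the density bound on $F_D$ makes $F_D$ Lipschitz with constant $M_D$, so it is at most $M_D|t_c(q)-q|$.

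It remains to bound the threshold shift. Writing
\[
t_c(q)-q=\frac{\widehat m_{D,c}-m_{D,c}}{s_{D,c}}+q\left(\frac{\widehat s_{D,c}}{s_{D,c}}-1\right)+\frac{q\,\varepsilon_D}{s_{D,c}},
\]
the stated event gives $|t_c(q)-q|\le a_c+|q|b_c+|q|\varepsilon_D/s_{D,c}$. This yields \eqref{eq:context-equalization-bound}. Nothing depends on $q$ beyond this explicit form, so the bound holds simultaneously for all $q$. The condition $b_c<1$ only ensures that $\widehat s_{D,c}>0$, so the score is well defined.

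The main obstacle is the strict-versus-weak inequality. $F_{D,c}$ need not be continuous, so $\Prob(\epsilon>t)=1-F_{D,c}(t)$ holds exactly, but comparing it with $F_D$ at the same point needs care. I will handle this by working directly with right-continuous CDFs: $\Prob(\epsilon>t)=1-F_{D,c}(t)$ is exact for the strict event. The sup-norm premise applies at every $x$, and $F_D$ is continuous because it has a density, so no atom correction arises.

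A second check is that $\widehat m,\widehat s$ are measurable with respect to $\cI^D_{\rm fit}$ and independent of the fresh pair. That independence justifies replacing the conditional law of $\epsilon$ given $\cI^D_{\rm fit}$ and $H_{\rm new}=c$ with $F_{D,c}$.
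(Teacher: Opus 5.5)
Your proposal is correct and matches the paper's proof. Like the paper, you rewrite $\{S^D_{\rm new}>q\}$ as the standardized residual exceeding a shifted threshold, bound the shift by $a_c+|q|b_c+|q|\varepsilon_D/s_{D,c}$, and combine the heterogeneity premise with the $M_D$-Lipschitz property of $F_D$ via the triangle inequality; your remarks on right-continuity (the strict event gives exactly $1-F_{D,c}(t_c(q))$) and on independence from the fitting split fill in details the paper leaves implicit.
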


\begin{proof}
Condition throughout on the fitting split. Write \(D=m_{D,c}+s_{D,c}\epsilon_c\). The event \(S^D>q\) is equivalent to
\[
\epsilon_c>
\frac{\widehat m_{D,c}-m_{D,c}}{s_{D,c}}
+q\frac{\widehat s_{D,c}+\varepsilon_D}{s_{D,c}}.
\]
Relative to \(q\), the threshold displacement is at most
\(a_c+|q|b_c+|q|\varepsilon_D/s_{D,c}\).
The density bound converts this displacement into a reference-CDF difference, and \eqref{eq:residual-heterogeneity} contributes \(\eta_{\rm het}\).
\end{proof}

The static rank guarantee does not condition on an adaptively trained policy. To transfer it to round \(t\), the following corollary makes the feedback dependence explicit through a conditional distributional-stability premise.

\begin{corollary}
\label{cor:drift}
Let \(\mathcal G=\sigma(\cI^D_{\rm fit},\cI^D_{\rm cal})\) and let \(\mathcal H_t\supseteq\mathcal G\) be the pre-rollout training history. Let \(P_{\rm ref}^0(\cdot\mid\cI^D_{\rm fit})\) denote the law of an independent clean reference pair exchangeable with the calibration pairs. If, almost surely,
\[
\TV\!\left(
P_t^0(\cdot\mid\mathcal H_t),
P_{\rm ref}^0(\cdot\mid\cI^D_{\rm fit})
\right)\le\rho_t,
\]
then the outer marginal probability satisfies
\[
\Prob_t\{S^D>\widehat q_{1-\alpha}\}\le\alpha+\rho_t.
\]
If \(\rho_t\) is random, the right-hand side is replaced by
\(\alpha+\E\rho_t\).
\end{corollary}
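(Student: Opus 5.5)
The plan is to reduce the adaptive round-$t$ statement to the static rank bound of Theorem~\ref{thm:conformal} by a change-of-measure argument carried out conditionally on $\mathcal H_t$, and then to average. Fix a realization of the pre-rollout history $\mathcal H_t$. Because $\mathcal H_t\supseteq\mathcal G$, the fitted map $(\widehat m_{D,c},\widehat s_{D,c})$ and the threshold $\widehat q_{1-\alpha}$ are $\mathcal H_t$-measurable. The rejection event $E=\{(d,c):(d-\widehat m_{D,c})/(\widehat s_{D,c}+\varepsilon_D)>\widehat q_{1-\alpha}\}$ is therefore a fixed measurable set of clean pairs once we condition. The definition of total variation then gives, almost surely,
\[
P_t^0(E\mid\mathcal H_t)\le P_{\rm ref}^0(E\mid\cI^D_{\rm fit})+\rho_t .
\]
Here the reference law is evaluated at the same $\mathcal G$-measurable set $E$.

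The second step controls the reference term. The quantity $P_{\rm ref}^0(E\mid\cI^D_{\rm fit})$, regarded as a function of the calibration split, is exactly the conditional probability that an independent clean reference pair exceeds $\widehat q_{1-\alpha}$ given $\mathcal G$. Taking its expectation over $\cI^D_{\rm cal}$ conditional on $\cI^D_{\rm fit}$ gives the left side of \eqref{eq:conformal-guarantee}, which is at most $\alpha$ by Theorem~\ref{thm:conformal}. The reference pair is exchangeable with the calibration pairs, which is precisely the premise of that theorem.

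The third step is to take the outer expectation of the pointwise inequality above. By the tower property, $\Prob_t\{S^D>\widehat q_{1-\alpha}\}=\E[P_t^0(E\mid\mathcal H_t)]$. This is at most $\E[P_{\rm ref}^0(E\mid\cI^D_{\rm fit})]+\E\rho_t\le\alpha+\E\rho_t$. When $\rho_t$ is deterministic, this is $\alpha+\rho_t$.

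The main obstacle is a subtlety in the conditioning. The conformal guarantee is \emph{marginal} over the calibration split, not conditional on $\mathcal G$. So the bound $\le\alpha$ can only be invoked after integrating out $\cI^D_{\rm cal}$; I will not attempt a conditional-on-$\mathcal G$ bound, which would be false. I must also check that conditioning on the richer $\mathcal H_t$ does not break this averaging. It does not, because the reference term depends on $\mathcal H_t$ only through $\mathcal G$, and its $\mathcal G$-conditional version is well defined, so the tower property applies. Measurability of $E$ follows because it is built from $\mathcal G$-measurable quantities via continuous operations. The case $\widehat q_{1-\alpha}=+\infty$ is trivial since $E=\emptyset$.
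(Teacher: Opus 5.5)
Your proposal is correct and follows the paper's own argument. Conditional on $\mathcal H_t$ the rejection event is fixed, so total-variation duality gives $P_t^0(E\mid\mathcal H_t)\le P_{\rm ref}^0(E\mid\mathcal G)+\rho_t$; averaging over the history and invoking Theorem~\ref{thm:conformal}, via the conditional independence of the reference draw from $\cI^D_{\rm cal}$ given $\cI^D_{\rm fit}$, yields $\alpha+\E\rho_t$, with the conformal bound applied only after integrating out the calibration split, just as you note.
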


\begin{proof}
For every realization of \(\mathcal H_t\), total-variation duality gives
\[
P_t^0(S^D>\widehat q_{1-\alpha}\mid\mathcal H_t)
\le
P_{\rm ref}^0(S^D>\widehat q_{1-\alpha}\mid\mathcal G)
+\rho_t.
\]
Average over the history. The reference draw is independent of the calibration split conditional on the fitting split, so Theorem~\ref{thm:conformal} bounds the first expectation by \(\alpha\). Averaging the drift term proves the claim.
\end{proof}
\subsubsection{Supporting Calculus and Boundary Conditions}
\label{app:discrepancy-calculus}

For completeness, the derivatives that underlie the convexity and numerical diagnostics can be written in closed form. For $u=d-m$, $a=z^2u^2/(ns^2)$, and
\[
\ell(u,s)=\frac{ns}{z^2}(\sqrt{1+a}-1)+\frac{s}{2},
\]
the derivatives are
\begin{align}
\partial_m\ell
&=-\frac{u}{s\sqrt{1+a}},
\label{eq:app-dm}
\\
\partial_s\ell
&=\frac{n}{z^2}\left(\frac1{\sqrt{1+a}}-1\right)+\frac12,
\label{eq:app-ds}
\\
\partial_{mm}^2\ell
&=\frac1{s(1+a)^{3/2}},
\label{eq:app-dmm}
\\
\partial_{ms}^2\ell
&=\frac{u}{s^2(1+a)^{3/2}},
\label{eq:app-dms}
\\
\partial_{ss}^2\ell
&=\frac{u^2}{s^3(1+a)^{3/2}}.
\label{eq:app-dss}
\end{align}
Therefore the Hessian equals
\begin{equation}
\nabla^2_{m,s}\ell
=
\frac1{s^3(1+a)^{3/2}}
\begin{bmatrix}
s^2 & su\\
su & u^2
\end{bmatrix}
=
\frac1{s^3(1+a)^{3/2}}
\begin{bmatrix}s\\u\end{bmatrix}
\begin{bmatrix}s&u\end{bmatrix},
\label{eq:hessian-rank-one}
\end{equation}
which is positive semidefinite and rank one for one observation. Two observations with distinct residuals yield linearly independent vectors $(s,u_j)$ and a positive-definite sum.

The same calculus clarifies when the population scale exists. For nondegenerate $U$, define
\[
H(s)=\E\left[(1+z^2U^2/(ns^2))^{-1/2}\right].
\]
The function is continuous and nondecreasing in $s$. If $\Prob(U=0)=0$, then $H(s)\to0$ as $s\downarrow0$ and $H(s)\to1$ as $s\to\infty$. Thus for $0<z^2/(2n)<1$, equation \eqref{eq:population-scale-equation} has a solution. If $U$ has an atom at zero, existence holds when $\Prob(U=0)<1-z^2/(2n)$. Strict monotonicity follows whenever $\Prob(U\ne0)>0$, yielding uniqueness.

For a large-sample interpretation, suppose $s_n^\circ$ solves \eqref{eq:population-scale-equation} with $z_n^2=o(n)$ and the family $s_n^\circ$ stays away from zero. Then $\tau_n^\circ=\sqrt n s_n^\circ/z_n\to\infty$, and dominated convergence gives $\sigma_{\tau_n^\circ}^2\to\sigma^2$. The population scale calculation above therefore sandwiches $(s_n^\circ)^2$ between a fixed multiple of a quantity converging to $\sigma^2$ and $\sigma^2$. The sharper conclusion $s_n^\circ\to\sigma$ follows from a first-order expansion of $g(x)$ and uniform integrability; this is the self-tuned interpretation established in the source robust-estimation theory.

At a finite scale boundary, the equality is replaced by a KKT condition. For empirical objective $L_c(m,s)$ on $[s_{\min,c},s_{\max,c}]$, the location equation always holds at an optimizer because $m$ is unconstrained, while
\begin{align*}
\partial_sL_c(\widehat m,\widehat s)&=0 &&\text{if }s_{\min,c}<\widehat s<s_{\max,c},\\
\partial_sL_c(\widehat m,s_{\min,c})&\ge0 &&\text{if }\widehat s=s_{\min,c},\\
\partial_sL_c(\widehat m,s_{\max,c})&\le0 &&\text{if }\widehat s=s_{\max,c}.
\end{align*}
Every experiment must report the proportion of contexts or minibatches hitting a boundary.

\subsection{Conformal Reliability, Separation, and Drift}

The calibration analysis has three components: a rank guarantee for a new clean pair, a tail-separation premise for attacked pairs, and a total-variation transfer condition for adaptive policy rounds.

\label{app:conformal-details}

To distinguish validity from power, first consider how accurately the empirical threshold localizes a population quantile. Let calibration scores be $S_1,\ldots,S_n$ and $S_{(1)}\le\cdots\le S_{(n)}$. Put $k=\lceil(n+1)(1-\alpha)\rceil$. If $k=n+1$, the finite-sample threshold is $+\infty$, so a nontrivial threshold requires $\alpha\ge1/(n+1)$ under this convention.

Let $\widehat F_n$ be the empirical CDF and suppose
\[
\sup_x|\widehat F_n(x)-F_0(x)|\le\epsilon_n.
\]
At $q=S_{(k)}$, $\widehat F_n(q)\ge k/n=u_k$, so $F_0(q)\ge u_k-\epsilon_n$, which implies
\[
q\ge F_0^{-1}((u_k-\epsilon_n)_+).
\]
For any $x<q$, at most $k-1$ observations are at or below $x$, so $\widehat F_n(x)\le(k-1)/n<u_k$. If $F_0(x)>u_k+\epsilon_n$, then $\widehat F_n(x)>u_k$, a contradiction. Under the strict-increase condition stated in Section~\ref{sec:weights}, this implies $q\le F_0^{-1}((u_k+\epsilon_n)\wedge1)$. DKW gives the uniform event with probability at least $1-2e^{-2n\epsilon_n^2}=1-\delta$.

If $F_0=\Phi$ and attacked standardized scores are $N(\Delta,1)$, then on the DKW event
\[
\mathrm{TPR}
\ge
1-\Phi\left(\Phi^{-1}((u_k+\epsilon_n)\wedge1)-\Delta\right).
\]
This formula is a power statement, not a validity statement. It describes how much standardized separation is needed after accounting for finite calibration size.

Suppose attacked scores have mean $\Delta$ and variance at most $\sigma_1^2$, and the threshold is deterministically bounded by $q_+<\Delta$. Cantelli's inequality gives
\[
\Prob(S\le q_+\mid Z=1)
=
\Prob(S-\Delta\le-(\Delta-q_+))
\le
\frac{\sigma_1^2}{\sigma_1^2+(\Delta-q_+)^2}.
\]
Thus TPR is at least $(\Delta-q_+)^2/[\sigma_1^2+(\Delta-q_+)^2]$. Under finite variance, the corresponding bound is polynomial rather than exponential.

The simultaneous context-equalization bound above holds simultaneously over thresholds on the fitting event. Hence, on the event
\(\{|\widehat q_{1-\alpha}|\le Q\}\), for a fresh clean pair independent of the calibration split conditional on \(\cI^D_{\rm fit}\), write \(\mathcal C_c\) for conditioning on
\((\cI^D_{\rm fit},\cI^D_{\rm cal},H_{\rm new}=c,Z_{\rm new}=0)\). Then
\[
\left|
\Prob\!\left(
S^D_{\rm new}>\widehat q_{1-\alpha}
\,\middle|\,\mathcal C_c
\right)
-
\left[1-F_D(\widehat q_{1-\alpha})\right]
\right|
\le
\eta_{\rm het}
+
M_D\left(
a_c+Qb_c+\frac{Q\varepsilon_D}{s_{D,c}}
\right).
\]
Relating the reference tail \(1-F_D(\widehat q_{1-\alpha})\) to \(\alpha\), or to the actual pooled clean tail, requires the empirical-quantile localization above and its failure probability; no exact group-conditional conformal guarantee is implied. We now convert this calibrated score into the rollout reliability used by STAR.
\label{sec:weights}

For a training rollout, calculate
\begin{equation}
S^D_{b,i}
=
\frac{D_{b,i}-\widehat m_{D,H_{b,i}}}
{\widehat s_{D,H_{b,i}}+\varepsilon_D}
\label{eq:training-discrepancy-score}
\end{equation}
and, for \(\lambda>0\), define
\begin{equation}
w_{b,i}
=
\exp\{-\lambda[S^D_{b,i}-\widehat q_{1-\alpha}]_+\}.
\label{eq:soft-weight}
\end{equation}
When \(\widehat q_{1-\alpha}=+\infty\), the convention is \(w_{b,i}=1\). The calibration split determines where downweighting begins; \(\lambda\) determines its post-threshold slope and is selected using separate validation data.

The basic shape of this map is used both in the attack attenuation theorem and in the local sensitivity calculation.

\begin{proposition}[Regularity of the soft reliability map]
For every rollout, \(0<w_{b,i}\le1\). If
\(S^D_{b,i}\le\widehat q_{1-\alpha}\), then \(w_{b,i}=1\).
The weight is nonincreasing and globally \(\lambda\)-Lipschitz in \(S^D_{b,i}\), and differentiable away from the threshold.
\end{proposition}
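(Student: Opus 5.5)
The plan is to reduce every claim to the scalar map \(g(t)=\exp\{-\lambda[t-\widehat q_{1-\alpha}]_+\}\), applied to \(t=S^D_{b,i}\), and treat \(\widehat q_{1-\alpha}\) as a fixed extended real. That is legitimate because it is measurable with respect to the fitting and calibration splits, which the rollout score does not affect. The case \(\widehat q_{1-\alpha}=+\infty\) gives \(w\equiv1\) by convention. In that case every claim holds trivially: the bounds hold, \(w=1\) on the whole set where \(S^D\le\widehat q\), the map is constant, hence nonincreasing and \(0\)-Lipschitz, and it is differentiable everywhere. So I will assume \(\widehat q_{1-\alpha}\in\R\) from here on. I will also note that \(S^D_{b,i}\) is a finite real number, since its denominator \(\widehat s_{D,H_{b,i}}+\varepsilon_D\ge s_{\min}>0\).

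For the range, \([\cdot]_+\ge0\) and \(\lambda>0\) give an exponent in \((-\infty,0]\), so \(0<w\le1\). Strict positivity needs the exponent to be finite, which holds because \(S^D\) and \(\widehat q\) are finite. If \(t\le\widehat q\), then \([t-\widehat q]_+=0\) and \(w=1\).

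For monotonicity, \(t\mapsto[t-\widehat q]_+\) is nondecreasing, so \(-\lambda[t-\widehat q]_+\) is nonincreasing, and composing with the increasing map \(\exp\) preserves this. For the Lipschitz bound, I will write \(g=\exp\circ\psi\) with \(\psi(t)=-\lambda[t-\widehat q]_+\). The map \(\psi\) is \(\lambda\)-Lipschitz, because the positive part is \(1\)-Lipschitz. It also takes values in \((-\infty,0]\), and on that set \(\exp\) is \(1\)-Lipschitz by the mean value theorem, since its derivative there is at most \(e^0=1\). Composing gives \(|g(t)-g(t')|\le\lambda|t-t'|\).

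For differentiability, on \(t<\widehat q\) the map is locally constant with derivative \(0\). On \(t>\widehat q\) it equals \(e^{-\lambda(t-\widehat q)}\), which is smooth with derivative \(-\lambda e^{-\lambda(t-\widehat q)}\in[-\lambda,0)\). This derivative bound also gives a second route to the Lipschitz constant. At \(t=\widehat q\) the one-sided derivatives are \(0\) and \(-\lambda\), so the map is not differentiable there; this matches the claim of differentiability only away from the threshold.

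The only delicate point is bookkeeping rather than analysis. I must make sure the infinite-threshold convention and the positivity of the score denominator are invoked, so that \(w\) is well defined and strictly positive. I must also state the Lipschitz property with respect to \(S^D_{b,i}\), holding \(\widehat q\) fixed, and not jointly in \(D\). If one wants the constant with respect to \(D_{b,i}\), it follows by multiplying by \(1/(\widehat s_{D,H_{b,i}}+\varepsilon_D)\le 1/s_{\min,H_{b,i}}\).
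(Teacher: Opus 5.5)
Your proposal is correct and follows essentially the paper's argument: the map is constant below \(\widehat q_{1-\alpha}\) and equals \(e^{-\lambda(t-\widehat q_{1-\alpha})}\) above it, with derivative of magnitude at most \(\lambda\). The paper joins the two pieces by continuity at the threshold. Your primary Lipschitz route avoids that joining step by composing the \(1\)-Lipschitz positive part with \(\exp\), which is \(1\)-Lipschitz on \((-\infty,0]\).

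Your explicit handling of the \(\widehat q_{1-\alpha}=+\infty\) convention, the finiteness of \(S^D_{b,i}\) via the positive scale floor, and the range and monotonicity claims fills in details that the paper's short proof leaves implicit.
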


\begin{proof}
Below the threshold the map is constant. Above it,
\(w(s)=e^{-\lambda(s-\widehat q)}\), so
\(w'(s)=-\lambda w(s)\) and \(|w'(s)|\le\lambda\).
Continuity at the threshold proves global Lipschitzness.
\end{proof}

The conformal event can now be translated into a statement about the operational reliability weight. The separation condition is a tail condition, not a claim that a mean shift alone gives detection power.

\begin{theorem}
\label{thm:weight-calibration}
Under Theorem~\ref{thm:conformal}, for a new clean rollout,
\begin{equation}
\Prob(w<1\mid Z=0,\cI^D_{\rm fit})\le\alpha,
\quad
\E[1-w\mid Z=0,\cI^D_{\rm fit}]\le\alpha.
\label{eq:clean-weight-preservation}
\end{equation}
Let \(\Delta\ge0\) and \(\beta\in[0,1]\). If an attacked rollout satisfies
\[
\Prob\{S^D\ge\widehat q_{1-\alpha}+\Delta\mid Z=1\}\ge1-\beta,
\]
then
\begin{equation}
\E[w\mid Z=1]
\le
\beta+(1-\beta)e^{-\lambda\Delta}.
\label{eq:attack-weight-attenuation}
\end{equation}
\end{theorem}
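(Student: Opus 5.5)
The argument reduces everything to the event \(\{w<1\}\) and uses the explicit shape of the soft reliability map. For the clean claims, the regularity proposition for \eqref{eq:soft-weight} gives \(w=1\) whenever \(S^D\le\widehat q_{1-\alpha}\), including the convention \(w=1\) when \(\widehat q_{1-\alpha}=+\infty\). Hence \(\{w<1\}\subseteq\{S^D>\widehat q_{1-\alpha}\}\). Theorem~\ref{thm:conformal} bounds the probability of the latter event, conditional on \(\cI^D_{\rm fit}\) and integrating over the calibration split and the new clean pair, by \(\alpha\). This gives the first inequality in \eqref{eq:clean-weight-preservation}. For the second, I would use \(0<w\le1\), so \(0\le 1-w\le \1\{w<1\}\). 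Taking conditional expectations then gives \(\E[1-w\mid Z=0,\cI^D_{\rm fit}]\le\Prob(w<1\mid Z=0,\cI^D_{\rm fit})\le\alpha\).

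One bookkeeping point needs care. Theorem~\ref{thm:conformal} is stated for a clean new pair exchangeable with the calibration pairs, so "\(Z=0\)" here means the new rollout is drawn from that clean law. I would say explicitly that conditioning on \(Z=0\) just selects this exchangeable clean model; no extra argument is required.

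For the attacked claim, let \(E=\{S^D\ge\widehat q_{1-\alpha}+\Delta\}\), with \(\Prob(E\mid Z=1)\ge1-\beta\). On \(E\), the threshold is finite, because \(S^D\) is a real number, and \([S^D-\widehat q_{1-\alpha}]_+\ge\Delta\). Since the weight map is nonincreasing, \(w\le e^{-\lambda\Delta}\) on \(E\). Off \(E\), the bound \(w\le1\) is all that is available. Writing \(\pi=\Prob(E\mid Z=1)\), this gives
\[
\E[w\mid Z=1]\le \pi e^{-\lambda\Delta}+(1-\pi).
\]
The right-hand side equals \(1-\pi(1-e^{-\lambda\Delta})\), which is nonincreasing in \(\pi\) because \(1-e^{-\lambda\Delta}\ge0\). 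Substituting the worst case \(\pi=1-\beta\) yields \(\beta+(1-\beta)e^{-\lambda\Delta}\), which is \eqref{eq:attack-weight-attenuation}.

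There is no real obstacle here. The step needing the most care is the monotonicity-in-\(\pi\) substitution, since we only have a lower bound on \(\Prob(E\mid Z=1)\) and not its exact value. A smaller point is handling \(\widehat q_{1-\alpha}=+\infty\) consistently: in the clean part it is absorbed by the convention \(w=1\), and in the attacked part the event \(E\) is then empty. In that case the premise forces \(\beta=1\), and the bound reduces to the trivial \(\E[w\mid Z=1]\le1\).
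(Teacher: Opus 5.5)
Your proposal is correct and follows the paper's proof essentially step for step: the clean claims come from \(\{w<1\}\subseteq\{S^D>\widehat q_{1-\alpha}\}\) combined with Theorem~\ref{thm:conformal} and \(0\le 1-w\le\1\{w<1\}\), and the attacked claim comes from splitting the expectation on \(E=\{S^D\ge\widehat q_{1-\alpha}+\Delta\}\). Your monotonicity-in-\(\pi\) substitution and your handling of \(\widehat q_{1-\alpha}=+\infty\) spell out what the paper leaves implicit in ``splitting the expectation proves the bound.''
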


\begin{proof}
The event \(w<1\) equals
\(S^D>\widehat q_{1-\alpha}\), and
\(0\le1-w\le\1\{w<1\}\).
For the attack statement, let
\(E=\{S^D\ge\widehat q_{1-\alpha}+\Delta\}\).
On \(E\), \(w\le e^{-\lambda\Delta}\); on \(E^c\), \(w\le1\).
Splitting the expectation proves the bound.
\end{proof}

\begin{remark}[Majority and fully attacked groups]
The discrepancy baseline is external to the current group, so a common positive shift is not redefined as normal. Relative weights alone would nevertheless cancel if all group members were equally suspicious. STAR addresses this second failure through the absolute factor \(\bar w_b\) and the primary \(W_{\min}\) abstention rule.
\end{remark}

For comparison, a hard-gating baseline is
\[
w^{\rm hard}_{b,i}=\1\{S^D_{b,i}\le\widehat q_{1-\alpha}\}.
\]
It provides an interpretable accepted set but creates discontinuous active-set changes. The primary STAR method uses soft individual weights together with deterministic group skipping at \(W_{\min}\) or \(G_{\min}\).

Quantitative power requires more than exchangeability. Conditional on the fitted score map, suppose the calibration scores are iid from a continuous CDF \(F_0\), the attacked test score is independent with CDF \(F_1\), and \(F_0\) is strictly increasing over the relevant quantile neighborhood. With \(u_k=k_\alpha/n_{\rm cal}\) and
\[
\epsilon_n=\sqrt{\frac{\log(2/\delta)}{2n_{\rm cal}}},
\]
when \(k_\alpha\le n_{\rm cal}\), the Dvoretzky--Kiefer--Wolfowitz localization derived above implies, with probability at least \(1-\delta\),
\begin{equation}
\Prob_{F_1}\{S^D>\widehat q_{1-\alpha}\}
\ge
1-F_1\!\left(F_0^{-1}((u_k+\epsilon_n)\wedge1)\right).
\label{eq:power-bound}
\end{equation}
Under stochastic shift \(F_1(t)\le F_0(t-\Delta)\), replace \(F_1\) on the right by \(F_0(\,\cdot-\Delta)\). This remains a power statement rather than a validity guarantee.

The same map supplies several implementation diagnostics.
\label{app:weight-analysis}

By the preceding Lipschitz property, if estimated discrepancy scores have error $|\widehat S-S^\star|\le\epsilon_S$, then
\[
|\widehat w-w^\star|\le\lambda\epsilon_S.
\]
This deterministic relation is useful in the local oracle comparison.

Calibration drift has an equally direct diagnostic implication. Under Corollary~\ref{cor:drift}, for deterministic \(\rho_t\),
\[
\Prob_t(w<1\mid Z=0)\le\alpha+\rho_t,
\quad
\E_t[1-w\mid Z=0]\le\alpha+\rho_t.
\]
For random \(\rho_t\), both right-hand sides are replaced by
\(\alpha+\E\rho_t\). If drift is not controlled, calibration diagnostics rather than nominal conformal levels must be used.

Reliability also determines the effective sample size. For nonnegative weights, $1\le G_{\rm eff}\le G$ whenever at least one weight is positive. The upper bound is Cauchy--Schwarz:
\[
(\sum_iw_i)^2\le G\sum_iw_i^2.
\]
The lower bound follows from $(\sum_iw_i)^2\ge\sum_iw_i^2$. Equality $G_{\rm eff}=G$ occurs for equal weights; $G_{\rm eff}=1$ occurs when only one weight is nonzero. Soft weights are strictly positive mathematically, but numerical underflow can make them zero, motivating stable log-weight computation.

Finally, the trusted external reference is essential. Suppose every discrepancy in a group is $D_i=M+\xi_i$, where $M$ is a large attack shift and $\xi_i$ has small spread. Any translation-equivariant group location estimator gives $\widehat m_D\approx M$, and standardized within-group discrepancies depend mainly on $\xi_i$. Hence weights based on $(D_i-\widehat m_D)/\widehat s_D$ approach clean-looking values even as $M\to\infty$. External trusted calibration removes this translation invariance with respect to the attacked group.

\section{Reliability-First Advantages and Optimization}
\label{app:advantages-and-optimization}

The weighted robust objective yields zero-sum bounded advantages and a controlled initial reward-side direction at the old policy.

\subsection{Weighted Robust Objective and Advantage Geometry}

Reliability enters the location--scale fit before normalization, and the leading group factor carries absolute reliability into the update. The following derivations establish convexity, KKT equations, zero sum, coordinate bounds, and common-weight scaling.

\subsubsection{Weighted Location--Scale Fit}
\label{sec:weighted-estimation}

GRPO typically uses few rollouts per prompt. Estimating an independent location and scale from $G\in\{4,8,16\}$ observations is unstable, especially after downweighting. STAR therefore assigns one robust location $\mu_b$ to each prompt and one scale $v$ to the minibatch. The model is descriptive rather than a claim that all prompts have identical population variance: the shared scale is an update normalization fitted from $BG$ weighted residuals.

Let \(\mathcal B_{\adm}\subseteq[B]\) denote the fixed admissible group set and \(B_{\adm}=|\mathcal B_{\adm}|>0\). For \(b\in\mathcal B_{\adm}\), define \(p_{b,i}\) and \(G_{\mathrm{eff},b}\) by \eqref{eq:effective-size}. Fix \(z_A>0\) and \(0<v_{\min}<v_{\max}\). The weighted objective is
\begin{align}
\cL_A(\mu_{\mathcal B_{\adm}},v)
=
\frac1{B_{\adm}}
\sum_{b\in\mathcal B_{\adm}}\sum_{i=1}^Gp_{b,i}
\Bigg[
&\frac{G_{\mathrm{eff},b}v}{z_A^2}
\left(
\sqrt{1+
\frac{z_A^2(r_{b,i}-\mu_b)^2}
{G_{\mathrm{eff},b}v^2}}
-1
\right)
+\frac v2
\Bigg].
\label{eq:weighted-online-objective}
\end{align}
Define
\begin{equation}
(\widehat\mu_{\mathcal B_{\adm}},\widehat v)
\in
\argmin_{\substack{\mu_b\in\R,\ b\in\mathcal B_{\adm}\\
v_{\min}\le v\le v_{\max}}}
\cL_A.
\label{eq:weighted-online-estimator}
\end{equation}
All reward, discrepancy, reliability, and fitted quantities are stop-gradient inputs to policy optimization.

Let
\[
a_{b,i}(\mu_b,v)
=
\frac{z_A^2(r_{b,i}-\mu_b)^2}
{G_{\mathrm{eff},b}v^2}.
\]
Because every \(\mu_b\) is unconstrained, every minimizer satisfies
\begin{equation}
\sum_{i=1}^Gp_{b,i}
\frac{r_{b,i}-\widehat\mu_b}
{\widehat v\sqrt{1+a_{b,i}(\widehat\mu_b,\widehat v)}}=0,
\quad b\in\mathcal B_{\adm}.
\label{eq:weighted-location-foc}
\end{equation}
If \(\widehat v\in(v_{\min},v_{\max})\), it also satisfies
\begin{equation}
\frac1{B_{\adm}}\sum_{b\in\mathcal B_{\adm}}
\left[
\frac{G_{\mathrm{eff},b}}{z_A^2}
\sum_{i=1}^Gp_{b,i}
\left(
\frac1{\sqrt{1+a_{b,i}(\widehat\mu_b,\widehat v)}}-1
\right)
+\frac12
\right]=0.
\label{eq:weighted-scale-foc}
\end{equation}
At a scale boundary, this equality is replaced by the corresponding KKT inequality.

The perspective form makes the fit globally auditable once the weights and active groups are frozen. The next result ensures that the solver targets a single statistical object rather than a local nonconvex surrogate.

\begin{theorem}
\label{thm:weighted-convexity}
Conditional on a fixed admissible set and fixed nonnegative weights with positive total mass in each admitted group, \eqref{eq:weighted-online-objective} is jointly convex in
\((\mu_{\mathcal B_{\adm}},v)\) on \(v>0\). If every admissible prompt has at least two distinct positive-weight quality rewards, the objective is strictly convex and the constrained minimizer is unique.
\end{theorem}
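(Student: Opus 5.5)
The plan is to reduce the weighted objective to a nonnegative combination of perspective functions, reusing the calculus of the offline fit (the convexity proposition for \eqref{eq:ph-loss} and the Hessian formula \eqref{eq:hessian-rank-one}). With the admissible set and the weights frozen, each $p_{b,i}\ge0$ and each $G_{\mathrm{eff},b}\in[1,G]$ is a fixed constant. For each admitted $b$ I define
\[
\rho_b(t)=\frac{G_{\mathrm{eff},b}}{z_A^2}\left(\sqrt{1+\frac{z_A^2t^2}{G_{\mathrm{eff},b}}}-1\right),
\]
which has $\rho_b''(t)=(1+z_A^2t^2/G_{\mathrm{eff},b})^{-3/2}>0$. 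Then $\ell_b^A(u,v)=v\rho_b(u/v)+v/2$ is a perspective plus a linear term, hence jointly convex on $v>0$. Composing with the affine map $(\mu_{\mathcal B_{\adm}},v)\mapsto(r_{b,i}-\mu_b,v)$ and summing with nonnegative coefficients $p_{b,i}/B_{\adm}$ preserves convexity. This gives joint convexity of \eqref{eq:weighted-online-objective}.

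For strict convexity I will compute the full Hessian in the variables $(\mu_{\mathcal B_{\adm}},v)$ rather than argue block by block. By \eqref{eq:hessian-rank-one} with $n$ replaced by $G_{\mathrm{eff},b}$ and $z$ by $z_A$, term $(b,i)$ contributes the rank-one matrix
\[
\frac{p_{b,i}}{B_{\adm}}\,c_{b,i}\,\xi_{b,i}\xi_{b,i}^\top,\qquad c_{b,i}=v^{-3}(1+a_{b,i})^{-3/2}>0,
\]
where $\xi_{b,i}$ has entry $-v$ in coordinate $\mu_b$, entry $u_{b,i}=r_{b,i}-\mu_b$ in coordinate $v$, and zeros elsewhere. (The sign of the $\mu$ entry comes from $u=r-\mu$ and does not affect the argument.) For a direction $(x_{\mathcal B_{\adm}},y)$ the quadratic form is therefore
\[
\frac1{B_{\adm}}\sum_b\sum_i p_{b,i}c_{b,i}(-v x_b+u_{b,i}y)^2 .
\]
If this vanishes, then $-vx_b+u_{b,i}y=0$ for every positive-weight pair. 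Within a group, two positive-weight rewards with $r_{b,i}\ne r_{b,j}$ give distinct residuals, and subtracting the two equations yields $(u_{b,i}-u_{b,j})y=0$, so $y=0$. Then $vx_b=0$ with $v>0$ forces $x_b=0$ for every admitted $b$.

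So the Hessian is positive definite at every point with $v>0$, not only at interior minimizers. This avoids the "interior" caveat of the offline proposition. Strict convexity on the convex feasible set $\R^{B_{\adm}}\times[v_{\min},v_{\max}]$ then gives at most one minimizer.

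The step I expect to need the most care is existence, because $\mu$ ranges over a noncompact set. I would show coercivity in $\mu$ uniformly over $v\in[v_{\min},v_{\max}]$. Since $\sqrt{1+s^2}-1\ge|s|-1$, each term satisfies
\[
\ell_b^A(u,v)\ge \frac{\sqrt{G_{\mathrm{eff},b}}}{z_A}|u|-\frac{G_{\mathrm{eff},b}v_{\max}}{z_A^2}.
\]
Because each admitted group has positive total mass, the objective grows at least linearly as any $|\mu_b|\to\infty$, with $r$ bounded by $R_{\max}$. Sublevel sets are therefore compact, and continuity gives a minimizer. Combined with strict convexity, that minimizer is unique.
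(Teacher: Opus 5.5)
Your proposal is correct and follows essentially the same route as the paper: joint convexity comes from the perspective form $v\rho_b(u/v)+v/2$, and strict convexity comes from the rank-one Hessian sum being positive definite at every $v>0$ when each admitted prompt has two distinct positive-weight residuals. Your coercivity bound also makes existence of a minimizer explicit, which the paper leaves implicit (it only mentions compact level sets in its solver discussion); one harmless slip is that $\partial^2_{\mu_b v}\ell_b^A=u_{b,i}/\bigl(v^2(1+a_{b,i})^{3/2}\bigr)$, so the $\mu_b$ entry of $\xi_{b,i}$ is $+v$ rather than $-v$, which, as you anticipated, does not affect the null-space argument.
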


\begin{proof}
For each \((b,i)\), the bracketed term is
\(v\rho_b((r_{b,i}-\mu_b)/v)+v/2\), where
\[
\rho_b(t)=\frac{G_{\mathrm{eff},b}}{z_A^2}
\left(\sqrt{1+\frac{z_A^2t^2}{G_{\mathrm{eff},b}}}-1\right)
\]
is strictly convex. Its perspective is jointly convex. In the Hessian representation of Appendix~\ref{app:weighted-geometry}, two distinct residuals in each prompt identify its location coordinate and the shared scale direction, so the weighted Hessian sum is positive definite. Restriction to the convex scale interval preserves uniqueness.
\end{proof}

The reliability \(w\) is a function of \(R^{\obs}-R^{\can}\), while \(r=h(R_\kappa)\) generally depends on both views. The convexity and advantage results are deterministic conditional on observed \((r,w)\) and require no independence. Interpreting a fitted location as an unselected population parameter would require an additional weighted estimating-equation or cross-fitting condition; STAR does not identify it with human utility.

\subsubsection{Objective Geometry and Numerical Conditions}
\label{app:weighted-geometry}

For a fixed prompt $b$, set $n_b=G_{\rm eff,b}$, $u_{b,i}=r_{b,i}-\mu_b$, and $a_{b,i}=z_A^2u_{b,i}^2/(n_bv^2)$. The Hessian of one bracketed term in \eqref{eq:weighted-online-objective} with respect to $(\mu_b,v)$ is
\[
\frac1{v^3(1+a_{b,i})^{3/2}}
\begin{bmatrix}
v^2 & vu_{b,i}\\
vu_{b,i} & u_{b,i}^2
\end{bmatrix}.
\]
Multiplication by $p_{b,i}$ preserves positive semidefiniteness. In the full parameter vector \((\mu_b:b\in\mathcal B_{\adm},v)\), each observation contributes a rank-one vector supported on coordinate \(b\) and the shared-scale coordinate. Strict convexity requires enough distinct residuals in every prompt to identify each $\mu_b$ and aggregate variation to identify $v$.

At an interior optimum, \eqref{eq:weighted-scale-foc} can be rewritten as
\[
\frac1{B_{\adm}}\sum_{b\in\mathcal B_{\adm}}
\frac{G_{\rm eff,b}}{z_A^2}
\left[
1-
\sum_{i=1}^Gp_{b,i}
\frac1{\sqrt{1+a_{b,i}}}
\right]
=\frac12.
\]
A second-order expansion $1-(1+a)^{-1/2}\approx a/2$ gives
\[
\widehat v^2
\approx
\frac1{B_{\adm}}\sum_{b\in\mathcal B_{\adm}}\sum_{i=1}^Gp_{b,i}(r_{b,i}-\widehat\mu_b)^2.
\]
This is an interpretation, not an exact identity. Large residuals contribute subquadratically through the exact equation.

For fixed $v$, each location objective is strictly convex and its derivative is monotone, so bisection is globally safe. For fixed locations, the scale objective is convex on $v>0$ and one dimensional. Alternating exact minimization decreases the joint objective and converges to a global minimizer because the objective is convex and level sets are compact under the scale bounds and bounded rewards. A practical implementation may use a warm start from the previous minibatch.

When effective reliable mass is insufficient, the unique primary rule is \emph{skip}: set the group's reward advantages to zero, exclude it from the shared-scale fit, retain its entries in the loss tensor, and keep the original \(BG\) denominator. This is the equal-response reduction in the reference protocol. Both recorded STAR runs use zero reward advantages for skipped groups and retain their token entries in the token-mean denominator, as analyzed in Corollary~\ref{cor:token-mean}; this differs from the equal-response reference reduction.

\subsubsection{STAR Advantage and Deterministic Bounds}
\label{sec:advantages}

For an admissible group, define
\begin{equation}
\varphi_{b,i}
=
\frac{z_A}{\sqrt{G_{\mathrm{eff},b}}}
\frac{r_{b,i}-\widehat\mu_b}
{\widehat v\sqrt{1+z_A^2(r_{b,i}-\widehat\mu_b)^2/
(G_{\mathrm{eff},b}\widehat v^2)}}.
\label{eq:phi-definition}
\end{equation}
For small residuals the score is approximately linear, while large residuals saturate. The following elementary bound is what makes the subsequent geometry distribution free.

\begin{lemma}[Bounded self-tuned score]
For every admissible rollout, \(|\varphi_{b,i}|\le1\).
\end{lemma}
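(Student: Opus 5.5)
The plan is to rewrite the definition \eqref{eq:phi-definition} in terms of a single scalar and then apply an elementary inequality. For an admissible group $b$, the weights are positive and $W_b>0$, so $G_{\mathrm{eff},b}\ge G_{\min}>1$ is finite and positive. The fitted scale satisfies $\widehat v\in[v_{\min},v_{\max}]\subset(0,\infty)$, and $\widehat\mu_b$ is finite. Hence the quantity
\[
x_{b,i}=\frac{z_A(r_{b,i}-\widehat\mu_b)}{\sqrt{G_{\mathrm{eff},b}}\,\widehat v}
\]
from \eqref{eq:main-score} is a well-defined real number. Its square is exactly the term $z_A^2(r_{b,i}-\widehat\mu_b)^2/(G_{\mathrm{eff},b}\widehat v^2)$ appearing under the root in \eqref{eq:phi-definition}.

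The first step is to check that the prefactor $z_A/\sqrt{G_{\mathrm{eff},b}}$ combines with $(r_{b,i}-\widehat\mu_b)/\widehat v$ to give exactly $x_{b,i}$. Then \eqref{eq:phi-definition} reads
\[
\varphi_{b,i}=\frac{x_{b,i}}{\sqrt{1+x_{b,i}^2}},
\]
which matches \eqref{eq:main-score}.

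The second step is the bound itself. Since $1+x^2>x^2\ge0$, we have $\sqrt{1+x^2}>|x|$, so $|x|/\sqrt{1+x^2}<1$ for every real $x$. This gives $|\varphi_{b,i}|<1$, and in particular $|\varphi_{b,i}|\le 1$. The map $x\mapsto x/\sqrt{1+x^2}$ is also odd and increasing, which explains the linear-then-saturating behaviour noted before the lemma.

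There is no real obstacle here. The only point to state carefully is that the bound uses no optimality of $(\widehat\mu_b,\widehat v)$. It holds for any finite location and any positive scale, which is what Theorem~\ref{thm:main-geometry} needs when it claims the coordinate bounds for approximate fits. I will note explicitly that positivity of $\widehat v$ and of $G_{\mathrm{eff},b}$ is the only hypothesis used.
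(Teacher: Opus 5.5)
Your proof is correct and follows the paper's argument: the paper also sets $x=z_A(r_{b,i}-\widehat\mu_b)/(\sqrt{G_{\mathrm{eff},b}}\,\widehat v)$ and observes $\varphi=x/\sqrt{1+x^2}$. The paper leaves the final step $|x|<\sqrt{1+x^2}$ implicit, which you spell out, and your remark that only a finite location and a positive scale are needed matches how the paper uses the lemma for approximate fits.
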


\begin{proof}
Let \(x=z_A(r_{b,i}-\widehat\mu_b)/
(\sqrt{G_{\mathrm{eff},b}}\widehat v)\).
Then \(\varphi=x/\sqrt{1+x^2}\).
\end{proof}

Let
\[
\nu_b=\left(\frac1G\sum_{j=1}^Gw_{b,j}^2\right)^{1/2}.
\]
The STAR advantage is
\begin{equation}
A^{\STAR}_{b,i}
=
\bar w_b
\frac{w_{b,i}}{\nu_b+\varepsilon_w}\,
\varphi_{b,i}.
\label{eq:star-advantage}
\end{equation}
Relative weights \(p_{b,i}\) act inside the fitted location and scale. The factor \(\bar w_b\) carries the group's absolute reliability to the policy update, closing the common-scale cancellation present when only \(w_i/\nu_b\) is used.

The next theorem distinguishes magnitude control from centering. Magnitude control uses the formula alone. At an exact fit, the unconstrained location coordinate satisfies its first-order equation even when the shared scale is at a boundary.

\begin{theorem}
\label{thm:advantage-properties}
Suppose group \(b\) is admissible, \(w_{b,i}\in[0,1]\), and \(\varepsilon_w\ge0\). Evaluate Equation~\eqref{eq:star-advantage} at any finite location and positive scale. Then:
\begin{enumerate}
\item \emph{Conditional zero sum:} if \(\sum_i p_{b,i}\varphi_{b,i}=0\), then \(\sum_iA^{\STAR}_{b,i}=0\). This additional condition is not needed for the remaining statements.
\item \emph{Empirical second moment:}
\[
\frac1G\sum_{i=1}^G(A^{\STAR}_{b,i})^2
\le
\bar w_b^2
\left(\frac{\nu_b}{\nu_b+\varepsilon_w}\right)^2
\le\bar w_b^2.
\]
\item \emph{Individual bound:}
\[
|A^{\STAR}_{b,i}|
\le
\bar w_b\frac{w_{b,i}}{\nu_b+\varepsilon_w}
\le w_{b,i}\le1.
\]
\item \emph{Clean reduction:} if every \(w_{b,i}=1\), then
\(\bar w_b=\nu_b=1\) and STAR reduces to the self-tuned anchored-quality score divided by \(1+\varepsilon_w\).
\item \emph{Separated attack attenuation:} if
\(w_{b,i}\le e^{-\lambda\Delta}\), then
\(|A^{\STAR}_{b,i}|\le e^{-\lambda\Delta}\).
\end{enumerate}
\end{theorem}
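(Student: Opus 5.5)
The whole argument runs off two elementary facts. The first is the bounded-score lemma just proved, $|\varphi_{b,i}|\le1$. The second is the inequality $\bar w_b\le\nu_b$, which is Cauchy--Schwarz (or Jensen):
\[
\Bigl(\tfrac1G\sum_jw_{b,j}\Bigr)^2\le\tfrac1G\sum_jw_{b,j}^2 .
\]
Admissibility gives $W_b>0$, so $\nu_b>0$ and every denominator is positive even when $\varepsilon_w=0$. Nothing about the fitted location or scale is used except that $\varphi_{b,i}$ is well defined. This matches the claim that only finiteness of $\widehat\mu_b$ and positivity of $\widehat v$ are needed for items 2, 3 and 5.

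I would take the items in order.

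\emph{Item 1.} Write $w_{b,i}=W_bp_{b,i}$. Then
\[
\sum_iA^{\STAR}_{b,i}=\frac{\bar w_bW_b}{\nu_b+\varepsilon_w}\sum_ip_{b,i}\varphi_{b,i},
\]
which vanishes under the hypothesis. The same display gives the centering-error identity quoted after Theorem~\ref{thm:main-geometry}.

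\emph{Item 2.} Bound $\varphi_{b,i}^2\le1$ to get
\[
\frac1G\sum_i(A^{\STAR}_{b,i})^2\le\frac{\bar w_b^2}{(\nu_b+\varepsilon_w)^2}\cdot\frac1G\sum_iw_{b,i}^2=\bar w_b^2\Bigl(\frac{\nu_b}{\nu_b+\varepsilon_w}\Bigr)^2 .
\]
The last factor is at most $1$ because $\varepsilon_w\ge0$.

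\emph{Item 3.} Use $|\varphi_{b,i}|\le1$, then $\bar w_b/(\nu_b+\varepsilon_w)\le\bar w_b/\nu_b\le1$, and finally $w_{b,i}\le1$.

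\emph{Item 4.} If all weights equal one, then $W_b=G$, so $\bar w_b=1$, $\nu_b=1$ and $p_{b,i}=1/G$. Also $G_{\mathrm{eff},b}=G$, so the fit becomes the unweighted self-tuned fit of the anchored quality $r$. Substituting these values gives $A^{\STAR}_{b,i}=\varphi_{b,i}/(1+\varepsilon_w)$.

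\emph{Item 5.} This follows immediately from item 3.

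The only step needing care is the chain $\bar w_b/(\nu_b+\varepsilon_w)\le1$. It relies on $\nu_b>0$, which is exactly where admissibility ($W_b\ge W_{\min}>0$) enters. Everything else is substitution.
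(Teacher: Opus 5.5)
Your proposal is correct and matches the paper's proof: both rest only on $|\varphi_{b,i}|\le1$ and the RMS--mean inequality $\bar w_b\le\nu_b$, with item~1 obtained by pulling out the common factor, item~4 by substitution, and item~5 from item~3. One minor precision point in item~4: if only group $b$ has unit weights, only its own terms in $\cL_A$ become unweighted, since the shared scale still depends on the other groups' weights. This does not affect the claimed identity $A^{\STAR}_{b,i}=\varphi_{b,i}/(1+\varepsilon_w)$.
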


\begin{proof}
For the first statement only, the location-equation premise gives
\(\sum_ip_{b,i}\varphi_{b,i}=0\).
Since \(p_{b,i}=w_{b,i}/W_b\), multiplying by the common factor
\(\bar w_b/(\nu_b+\varepsilon_w)\) proves zero sum. The bounded-score calculation above gives
\[
\frac1G\sum_i(A^{\STAR}_{b,i})^2
\le
\bar w_b^2
\frac{G^{-1}\sum_iw_{b,i}^2}
{(\nu_b+\varepsilon_w)^2}.
\]
Finally, RMS dominates the arithmetic mean, so
\(\bar w_b\le\nu_b\). This proves the individual bound; the last two statements follow by substitution.
\end{proof}

For a numerical solve, let
\(e_b^\varphi=|\sum_i p_{b,i}\varphi_{b,i}|\) be the reported location-equation residual. Then
\[
\left|\sum_iA^{\STAR}_{b,i}\right|
=
\frac{\bar w_bW_b}{\nu_b+\varepsilon_w}\,e_b^\varphi.
\]
Thus exact zero sum requires an exact location equation, while the magnitude bounds continue to hold for arbitrary finite fitted locations and positive scale. The centering deviation can be audited by recording the location-equation residual; certifying the joint optimum also requires the scale KKT condition. These statements concern real arithmetic and do not certify a particular floating-point implementation. The primary numerical protocol requires a final safeguarded location solve at the fitted scale and residual checks before accepting a policy step. RQ2's recorded implementation and its more limited diagnostics are specified in Appendix~\ref{app:experiment-2-details}.

Uniformly shrinking every weight should weaken the group update rather than be removed by normalization. The group factor makes this behavior exact when the numerical floor is zero and further attenuates the update with a positive numerical floor.

\begin{proposition}[Common reliability scaling]
Fix the reward arrays and all other groups, and replace one group's weights \(w\) by \(cw\) for \(c\in(0,1]\), without changing the full admissible set. Use the same solution selection for the unchanged fitting objective. The relative weights, effective size, fitted parameters, and \(\varphi\) are unchanged. If \(\varepsilon_w=0\), then \(A^{\STAR}(cw)=cA^{\STAR}(w)\). If \(\varepsilon_w>0\), every coordinate is multiplied by
\[
\frac{c^2(\nu_b+\varepsilon_w)}
{c\nu_b+\varepsilon_w}\le c.
\]
\end{proposition}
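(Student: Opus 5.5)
The plan is to track how each ingredient of Equation~\eqref{eq:star-advantage} transforms under $w\mapsto cw$ for the single modified group $b$, and then read off the advantage. I will first treat the quantities that enter the fit, then the fitted parameters, and last the prefactors $\bar w_b$ and $\nu_b$.

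\textbf{Fit inputs.} Under $w\mapsto cw$ we have $W_b\mapsto cW_b$, so $p_{b,i}=w_{b,i}/W_b$ is unchanged. Also $G_{\mathrm{eff},b}=W_b^2/\sum_iw_{b,i}^2$ is invariant, since numerator and denominator both scale by $c^2$. By hypothesis the admissible set $\mathcal B_{\adm}$ is unchanged, and so is $B_{\adm}$. The objective \eqref{eq:weighted-online-objective} depends on the weights only through $p_{b,i}$ and $G_{\mathrm{eff},b}$, and every other group is fixed. Hence $\cL_A$ is literally the same function.

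\textbf{Fitted parameters and scores.} With the same solution selection, $(\widehat\mu,\widehat v)$ is unchanged. Under the distinct-rewards condition this is automatic, since Theorem~\ref{thm:weighted-convexity} gives uniqueness. Then $x_{b,i}$ and $\varphi_{b,i}$ in \eqref{eq:phi-definition} are unchanged, because they depend only on $r$, $\widehat\mu_b$, $\widehat v$ and $G_{\mathrm{eff},b}$.

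\textbf{Prefactor.} Write $A_{b,i}(w)=\bar w_b\,w_{b,i}\varphi_{b,i}/(\nu_b+\varepsilon_w)$. Under scaling, $\bar w_b\mapsto c\bar w_b$, $w_{b,i}\mapsto cw_{b,i}$ and $\nu_b\mapsto c\nu_b$. The new advantage is therefore
\[
A_{b,i}(cw)=\frac{c^2\,\bar w_b w_{b,i}\varphi_{b,i}}{c\nu_b+\varepsilon_w}
=\frac{c^2(\nu_b+\varepsilon_w)}{c\nu_b+\varepsilon_w}\,A_{b,i}(w).
\]
When $\varepsilon_w=0$ the factor is $c^2\nu_b/(c\nu_b)=c$. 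The reduction needs $\nu_b>0$, which holds because $W_b>0$ for an admissible group.

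\textbf{Bounding the factor.} For $\varepsilon_w>0$ we need $c^2(\nu_b+\varepsilon_w)\le c(c\nu_b+\varepsilon_w)$. This is equivalent to $c^2\varepsilon_w\le c\varepsilon_w$, i.e.\ $c\le1$, which holds.

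The only real subtlety is the invariance of the fitted parameters. It requires that the objective be unchanged as a function, not merely that it have the same minimizers, together with the fixed admissible set and fixed solution selection when the minimizer is not unique. Both are given in the hypotheses, so I expect this step to go through directly.
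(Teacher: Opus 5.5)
Your proposal is correct and follows the paper's argument. The paper's proof likewise notes that $p_{b,i}$ and $G_{\mathrm{eff},b}$ are scale-invariant, so the objective, fitted parameters, and $\varphi$ are unchanged, while $\bar w_b$, $w_{b,i}$, and $\nu_b$ each scale by $c$, and then substitutes into the advantage formula; you additionally verify explicitly that the factor is at most $c$, which the paper leaves implicit.
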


\begin{proof}
Both \(p_i=w_i/W_b\) and \(G_{\rm eff}=W_b^2/\sum_iw_i^2\) are invariant to a common positive scale, while \(\bar w_b\) and \(\nu_b\) each scale by \(c\). Substitution in \eqref{eq:star-advantage} proves the claim.
\end{proof}

For the same quality vector \(r=(0,\ldots,0,M)\), ordinary GRPO has one advantage \(\sqrt{G-1}\) and \(G-1\) advantages \(-1/\sqrt{G-1}\), independently of \(M>0\). Multiplying only the outlier by a small reliability weight does not undo the negative clean signs already assigned by the contaminated normalization. Reliability-first fitting instead sends the outlier's mass to zero before solving the location equation.

\begin{proof}[Proof of Proposition~\ref{prop:main-ordering}]
Write $n=G-1$. The ordinary mean and population-form standard deviation of $(0,\ldots,0,M)$ are $M/G$ and $M\sqrt n/G$. Every clean standardized reward is therefore $-1/\sqrt n$, unchanged by post-hoc multiplication by its unit weight.

For the weighted comparator, define
\[
\mu_w=\frac{\sum_iw_ir_i}{n+\epsilon}
=\frac{\epsilon M}{n+\epsilon},\qquad
\sigma_w^2=\frac{\sum_iw_i(r_i-\mu_w)^2}{n+\epsilon}
=\frac{n\epsilon M^2}{(n+\epsilon)^2}.
\]
Substituting into $\widetilde A_i=w_i(r_i-\mu_w)/\sigma_w$ gives $\widetilde A_{\rm clean}=-\sqrt{\epsilon/n}$ and $\widetilde A_{\rm outlier}=\sqrt{n\epsilon}$. Thus weighting before normalization attenuates every coordinate even with the ordinary weighted mean and variance.

For STAR, Theorem~\ref{thm:advantage-properties} gives $|A_G^{\STAR}|\le\epsilon$ without an exact solve. With the location equation satisfied, all $n$ clean coordinates are equal and zero sum gives $nA_{\rm clean}^{\STAR}=-A_G^{\STAR}$. Hence $|A_{\rm clean}^{\STAR}|\le\epsilon/n$. This is a finite-$\epsilon$ bound for every admitted group, including when its scale is shared with other groups.

For the location limit, $W=n+\epsilon$ and $G_{\rm eff}=(n+\epsilon)^2/(n+\epsilon^2)\to n$, so the stated strict admission thresholds ensure eventual admission. The fitted location lies in $[0,M]$, and the scale lies in the fixed compact interval. Along any convergent subsequence of exact fits, the location equation is
\[
n\varphi(-\widehat\mu_\epsilon,\widehat v_\epsilon;G_{\rm eff})
+\epsilon\varphi(M-\widehat\mu_\epsilon,\widehat v_\epsilon;G_{\rm eff})=0,
\]
where $\varphi(u,v;n')=(z_Au/(\sqrt{n'}v))/\sqrt{1+z_A^2u^2/(n'v^2)}$. Continuity and $|\varphi|\le1$ imply $\varphi(-\mu_\star,v_\star;n)=0$ at the limit, hence $\mu_\star=0$. Compactness then gives $\widehat\mu_\epsilon\to0$ for the full sequence. The displayed advantage bounds already imply that all coordinates vanish.
\end{proof}

The comparison uses no standard-deviation stabilizer. Replacing $\sigma_w$ by $\sigma_w+\delta$ with a fixed $\delta>0$ gives $O(\epsilon)$ coordinates as $\epsilon\downarrow0$ for fixed $M$; its constants can depend on $M/\delta$. STAR's $|A_i|\le w_i$ is uniform over finite reward arrays and positive scales. The proposition is an algebraic design comparison, not an empirical ablation or a claim that every alternative robust estimator lacks such a bound.

\subsubsection{Algebraic Consequences and Design Variants}
\label{app:star-advantage}

Several short calculations clarify the role of numerical stabilization and the choice of denominator. The epsilon $\varepsilon_w$ is common within a prompt, so it preserves zero sum when the weighted location equation holds:
\[
\sum_iA^{\STAR}_{b,i}
=
\frac{\bar w_b}{\nu_b+\varepsilon_w}\sum_iw_{b,i}\varphi_{b,i}=0.
\]
It only shrinks the second moment relative to the ideal RMS normalization.

If \(h\) is the identity and anchored quality rewards are transformed as \(r'_{b,i}=a r_{b,i}+c_b\) with \(a>0\), the prompt locations transform exactly as \(a\widehat\mu_b+c_b\) and the shared scale as \(a\widehat v\) when both scale bounds are multiplied by \(a\). The normalized score \(\varphi\) and reliability factors are invariant. The bounded nonlinear map \(h\) intentionally breaks exact affine equivariance to control optimizer gradients.

The one-outlier post-hoc calculation has already been established in the proof of Proposition~\ref{prop:main-ordering}. A denominator comparison isolates the role of the leading absolute factor \(\bar w_b\). The mean-denominator ablation is
\[
A^{\rm mean}_{b,i}
=
\bar w_b\frac{w_{b,i}}{\bar w_b+\varepsilon_w}\varphi_{b,i},
\]
whereas STAR uses the RMS denominator \(\nu_b\). Because \(\nu_b\ge\bar w_b\), the RMS variant has no larger coordinate magnitude and yields the sharper energy bound in Theorem~\ref{thm:advantage-properties}. With one unit weight and the rest zero, the mean-denominator factor on the surviving score is one for \(\varepsilon_w=0\), while STAR's factor is \(1/\sqrt G\). The leading \(\bar w_b\), rather than the denominator choice alone, is what preserves common low reliability.

\subsection{Initial Policy Direction and Its Scope}

The advantage bounds below apply to the initial reward-side direction at the old policy. The clipped policy objective is stated first to identify this direction within the training update.

\subsubsection{Policy Objective and Old-Policy Guarantee}
\label{sec:star-grpo}

For token \(t\) of rollout \((b,i)\), let
\[
\rho_{b,i,t}(\theta)=
\frac{\pi_\theta(o_{b,i,t}\mid X_b,o_{b,i,<t})}
{\pi_{\theta_{\rm old}}(o_{b,i,t}\mid X_b,o_{b,i,<t})},
\quad L_{b,i}=|O_{b,i}|.
\]
Define the clipped token term
\[
\ell^{\rm clip}_{b,i,t}(\theta)
=
\min\left\{
\rho_{b,i,t}(\theta)A^{\STAR}_{b,i},
\clip(\rho_{b,i,t}(\theta),1-\epsilon_{\rm clip},1+\epsilon_{\rm clip})
A^{\STAR}_{b,i}
\right\}.
\]
With \(A^{\STAR}_{b,i}=0\) for skipped groups, STAR maximizes
\begin{equation}
J_{\STAR}(\theta)
=
\E\left[
\frac1{BG}\sum_{b=1}^B\sum_{i=1}^G
\frac1{L_{b,i}}\sum_t\ell^{\rm clip}_{b,i,t}(\theta)
-\beta_{\rm KL}D_{\rm KL}(\pi_\theta\Vert\pi_{\rm ref})
\right].
\label{eq:star-grpo-objective}
\end{equation}
The old and reference policies are frozen for the inner optimization epoch, and gradients do not propagate through reward calls, canonicalization, reliability, fitted locations, or scale.

At \(\theta_{\rm old}\), every likelihood ratio equals one. The following result bounds the resulting reward-side gradient; the KL term is handled separately by the policy objective.

Define
\[
g^{\rm avg}_{b,i}
=
\frac1{L_{b,i}}\sum_t
\nabla_\theta\log\pi_{\theta_{\rm old}}
(o_{b,i,t}\mid X_b,o_{b,i,<t}),
\quad
U_b^{\STAR}
=
\frac1G\sum_iA^{\STAR}_{b,i}g^{\rm avg}_{b,i}.
\]

\begin{theorem}
\label{thm:update-norm}
If \(\|g^{\rm avg}_{b,i}\|_2\le L_\pi\), every admissible group satisfies
\[
\|U_b^{\STAR}\|_2\le\bar w_bL_\pi.
\]
Moreover, rollout \(i\)'s contribution is bounded by
\[
\left\|\frac1GA^{\STAR}_{b,i}g^{\rm avg}_{b,i}\right\|_2
\le\frac{L_\pi w_{b,i}}{G}.
\]
\end{theorem}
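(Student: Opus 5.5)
The plan is to treat both claims as purely deterministic consequences of the coordinate and second-moment bounds in Theorem~\ref{thm:advantage-properties}. No optimality, no exact centering, and no property of the fitted location or scale beyond finiteness and positivity will be used. All reward-side quantities, including $A^{\STAR}_{b,i}$, are frozen arrays, and $g^{\rm avg}_{b,i}$ are fixed vectors with $\|g^{\rm avg}_{b,i}\|_2\le L_\pi$.

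\emph{Per-rollout bound.} This is immediate. Homogeneity of the norm gives
\[
\left\|\tfrac1G A^{\STAR}_{b,i}g^{\rm avg}_{b,i}\right\|_2=\tfrac1G|A^{\STAR}_{b,i}|\,\|g^{\rm avg}_{b,i}\|_2 .
\]
Item~3 of Theorem~\ref{thm:advantage-properties} gives $|A^{\STAR}_{b,i}|\le w_{b,i}$, which yields $L_\pi w_{b,i}/G$.

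\emph{Group bound.} Start with the triangle inequality:
\[
\|U_b^{\STAR}\|_2\le \tfrac{L_\pi}{G}\sum_i|A^{\STAR}_{b,i}| .
\]
Two routes then close the argument.
\begin{itemize}
\item[(a)] Cauchy--Schwarz gives $\tfrac1G\sum_i|A_i|\le(\tfrac1G\sum_iA_i^2)^{1/2}$, and item~2 of Theorem~\ref{thm:advantage-properties} bounds this by $\bar w_b$.
\item[(b)] Alternatively, use $|A^{\STAR}_{b,i}|\le \bar w_b w_{b,i}/(\nu_b+\varepsilon_w)$ and sum to get
\[
\bar w_b\cdot\bar w_b/(\nu_b+\varepsilon_w)\le\bar w_b ,
\]
since $\bar w_b\le\nu_b$ (RMS dominates the arithmetic mean).
\end{itemize}
I will present route (a) because it mirrors the stated second-moment result.

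\emph{Expected obstacle.} The only delicate point is bookkeeping. The triangle inequality must be applied before any use of centering, because the score vectors $g^{\rm avg}_{b,i}$ differ across rollouts, so the zero-sum property cannot be exploited and is not needed. I would remark explicitly that this is why the bound holds for an inexact fit.
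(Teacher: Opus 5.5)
Your proposal is correct and follows essentially the same argument as the paper. The paper combines the triangle inequality, Cauchy--Schwarz, and the second-moment bound of Theorem~\ref{thm:advantage-properties} to obtain $\bar w_b L_\pi$, then gets the per-rollout bound from $|A^{\STAR}_{b,i}|\le w_{b,i}$, exactly as in your route (a). Your route (b) is also valid and gives the slightly sharper $L_\pi\bar w_b^2/(\nu_b+\varepsilon_w)$.
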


\begin{proof}
Cauchy--Schwarz and Theorem~\ref{thm:advantage-properties} give
\[
\|U_b^{\STAR}\|_2
\le
\frac1G
\left(\sum_i(A^{\STAR}_{b,i})^2\right)^{1/2}
\left(\sum_i\|g^{\rm avg}_{b,i}\|_2^2\right)^{1/2}
\le\bar w_bL_\pi.
\]
The individual bound follows from \(|A^{\STAR}_{b,i}|\le w_{b,i}\).
\end{proof}
\begin{corollary}[Token-mean initial direction]
\label{cor:token-mean}
Fix an aggregation batch $\mathcal Q$ of responses with positive valid-token counts $L_i$, total $T=\sum_{i\in\mathcal Q}L_i$, and reward loss reduced by this $T$. Include skipped responses in $T$ and set their reward advantages to zero. Treat lengths, masks, and advantages as fixed for differentiation. At the old policy, suppose $\|g_i^{\rm avg}\|_2\le L_\pi$, where $g_i^{\rm avg}$ averages the score vectors of the same valid tokens. Then
\[
U_{\rm token}
=\frac1T\sum_{i\in\mathcal Q}L_i A_i^{\STAR}g_i^{\rm avg},
\qquad
\|U_{\rm token}\|_2
\le L_\pi\frac{\sum_{i\in\mathcal Q}L_iw_i}{T}.
\]
An individual response contributes at most $L_\pi L_iw_i/T$. Exact fitting and split calibration are not required for these deterministic bounds.
\end{corollary}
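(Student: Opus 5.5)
The proof mirrors Theorem~\ref{thm:update-norm}, replacing the uniform $1/G$ weights by the length weights $L_i/T$. First I would check that $U_{\rm token}$ is the correct initial direction. With lengths, masks and advantages frozen, each likelihood ratio equals one at $\theta_{\rm old}$, so the clip is inactive to first order. The gradient of the token-mean reward loss is then
\[
\frac1T\sum_{i\in\mathcal Q}\sum_t A_i^{\STAR}\nabla_\theta\log\pi_{\theta_{\rm old}}(o_{i,t}\mid\cdot).
\]
Grouping the token sum of response $i$ as $L_i g_i^{\rm avg}$ recovers the displayed formula for $U_{\rm token}$. Skipped responses enter with $A_i^{\STAR}=0$, so they contribute nothing to the numerator but remain in $T$.

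Next I would bound each coordinate. Theorem~\ref{thm:advantage-properties}(3) gives $|A_i^{\STAR}|\le w_i$ for any admissible group, at any finite location and positive scale, so no exact fit is needed. For skipped groups the bound $0\le w_i$ holds trivially. Hence
\[
\bigl\|L_iA_i^{\STAR}g_i^{\rm avg}/T\bigr\|_2\le L_\pi L_iw_i/T,
\]
which is the individual-response claim.

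For the aggregate bound I would apply the triangle inequality and sum the individual bounds, which gives
\[
\|U_{\rm token}\|_2\le L_\pi\sum_iL_iw_i/T.
\]
Unlike the equal-response case, I would not use Cauchy--Schwarz with the second-moment bound, since that would produce $\bar w$-type quantities weighted incorrectly. The linear length-weighted statement is exactly what the triangle inequality yields.

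The only step requiring care is the first, namely justifying that the token-mean reduction at the old policy equals $U_{\rm token}$ with $T$ held fixed. This relies on treating the lengths, masks and the denominator $T$ (including skipped responses) as constants under differentiation, and on the stop-gradient through all reward-side quantities. The statement grants both by hypothesis, so this step amounts to bookkeeping rather than a genuine obstacle.
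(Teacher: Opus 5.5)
Your proposal is correct and follows essentially the same route as the paper: likelihood ratios equal one at $\theta_{\rm old}$, the token sum of each response regroups as $L_i g_i^{\rm avg}$, and the triangle inequality with $|A_i^{\STAR}|\le w_i$ gives both the aggregate and the per-response bounds, while skipped responses contribute zero and keep their token mass in $T$. Like you, the paper uses only the triangle inequality here and does not use Cauchy--Schwarz.
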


\begin{proof}
At the old policy the likelihood ratios are one, so the reward-side derivative of each valid token term is $A_i^{\STAR}$ times its policy score. Summing within responses gives the displayed direction. The triangle inequality and $|A_i^{\STAR}|\le w_i$ yield
\[
\|U_{\rm token}\|_2
\le\frac1T\sum_i L_i |A_i^{\STAR}|\,\|g_i^{\rm avg}\|_2
\le\frac{L_\pi}{T}\sum_iL_iw_i.
\]
Skipped responses contribute zero and retain their token mass in the denominator. The individual bound is the same calculation for one summand.
\end{proof}

This corollary concerns the specified aggregation batch, not an assumed global reduction. If microbatch directions are combined with coefficients $a_m\ge0$, the corresponding bound is $L_\pi\sum_m a_m\sum_{i\in\mathcal Q_m}L_iw_i/T_m$. It reduces to the global token-mean expression when $a_m=T_m/\sum_kT_k$; equal microbatch averaging generally uses a different weighting. The result controls the initial reward direction only, not the KL term, later PPO epochs, optimizer state, or parameter displacement.

\label{app:policy-details}

The theorem concerns the initial reward-side direction at the old policy. STAR guarantees \(|A^{\STAR}_{b,i}|\le1\), but the standard PPO minimum is not a two-sided hard clipping of the surrogate. For \(A<0\) and an unbounded likelihood ratio \(\rho\), \(\min\{\rho A,\clip(\rho)A\}=\rho A\) can be arbitrarily negative. Thus bounded advantages alone do not bound the multi-epoch surrogate or its parameter gradient. Theorem~\ref{thm:update-norm} instead applies at \(\theta_{\rm old}\), where \(\rho=1\); later-epoch bounds require an explicit ratio and policy-score condition.

The KL term complements reward-side reliability by constraining policy drift. The realized experiments use the same KL coefficient across methods. If attack outputs already dominate the policy, the primary STAR rule can abstain from unreliable groups, but a zero-sum relative estimator does not by itself create a force that makes every member of an entirely attacked group negative. In fully low-reliability groups, the admission rule converts the reward-side update into explicit abstention, which is the intended conservative behavior of STAR.

\section{Algorithms, Scope, and Reproducibility}
\label{app:algorithms-and-reproducibility}

The implementation protocols specify the computational sequence, numerical safeguards, interface metadata, and data separation needed to reproduce STAR.

\subsection{Algorithms and Numerical Safeguards}

The following protocols specify the primary method, from canonical rendering through discrepancy weighting, admissibility, robust fitting, and the policy loss. Both recorded STAR runs use token-mean reduction; RQ2 additionally uses a cross-evaluator pairing and different solver checks, as documented in Appendix~\ref{app:experimental-design}.

\label{sec:algorithms}

\begin{protocol}[Canonical evaluation]
\label{alg:canonical}
\textbf{Input:} prompt $X$, policy token sequence $O$, policy decoder, canonical text normalizer, reward tokenizer, reward model $R_\phi$. The procedure is
\begin{enumerate}
\item Decode $O$ with the policy tokenizer using the exact generation configuration.
\item Apply the prespecified canonical transformations: Unicode normalization, removal of disallowed invisible characters, canonical whitespace, and the deployed chat template.
\item Retokenize the canonical text with the reward-model tokenizer.
\item Evaluate $R^{\can}=R_\phi(X,C(O))$.
\item Independently evaluate the deployed representation to obtain $R^{\obs}=R_\phi(X,M_{\obs}(O))$.
\end{enumerate}
\textbf{Output:} $(R^{\obs},R^{\can},D=R^{\obs}-R^{\can})$.
\end{protocol}
The canonicalizer specification, tokenizer hashes, special-token policy, maximum length, and truncation direction must be logged. A change in any of these items creates a new calibration context.

\begin{protocol}[Trusted discrepancy calibration]
\label{alg:offline-calibration}
\textbf{Input:} trusted benign outputs, fixed contexts, target level $\alpha$, scale bounds, $z_{D,c}$, numerical tolerances. The procedure is
\begin{enumerate}
\item Randomly split trusted outputs into $\cI^D_{\rm fit}$ and $\cI^D_{\rm cal}$ before fitting.
\item For every sample, compute observed and canonical rewards and assign a context.
\item In every context, solve \eqref{eq:offline-self-tuned}; record boundary hits and the residuals of \eqref{eq:offline-location-foc}--\eqref{eq:offline-scale-foc}.
\item Compute calibration scores by \eqref{eq:calibration-score}.
\item Set $k_\alpha=\lceil(n_{\rm cal}+1)(1-\alpha)\rceil$ and choose the conservative order statistic.
\item Freeze $(\widehat m_{D,c},\widehat s_{D,c},\widehat q_{1-\alpha})$ for the next policy-training epoch.
\end{enumerate}
\textbf{Output:} context normalizers, conformal threshold, and context fallback map.
\end{protocol}

\begin{protocol}[Reliability-weight computation]
\label{alg:weights}
\textbf{Input:} paired rollout rewards, fitted discrepancy calibration, \(\lambda,W_{\min},G_{\min}\). The procedure is
\begin{enumerate}
\item Compute \(S^D_{b,i}\) by \eqref{eq:training-discrepancy-score} and \(w_{b,i}\) by \eqref{eq:soft-weight}.
\item Compute \(W_b,\bar w_b,p_{b,i}\), and \(G_{\mathrm{eff},b}\).
\item If \(W_b<W_{\min}\) or \(G_{\mathrm{eff},b}<G_{\min}\), set the group's reward advantages to zero and exclude it from the shared-scale fit.
\item Otherwise pass \((w,p,\bar w,G_{\mathrm{eff}},r)\) to the online estimator.
\end{enumerate}
\textbf{Output:} a fixed admissible set and zero-valued reward advantages for skipped groups.
\end{protocol}

\begin{protocol}[Weighted minibatch fit]
\label{alg:online-fit}
\textbf{Input:} fixed admissible groups, \(z_A\), \([v_{\min},v_{\max}]\), tolerance, maximum iterations. The initialization uses
\begin{itemize}
\item \(\mu_b^{(0)}\): weighted median or weighted mean of \(r_{b,1:G}\);
\item \(v^{(0)}\): clipped pooled MAD or the previous-step scale.
\end{itemize}
The solver then alternates
\begin{enumerate}
\item Update each \(\mu_b\) by Newton or safeguarded bisection on \eqref{eq:weighted-location-foc}.
\item Update \(v\) by projected Newton or backtracking using \eqref{eq:weighted-scale-foc}.
\item Stop only when parameter change and KKT residuals meet the recorded tolerances; otherwise skip the policy step and log solver failure.
\end{enumerate}
\textbf{Output:} \((\widehat\mu_{\mathcal B_{\adm}},\widehat v)\) and solver diagnostics.
\end{protocol}
Joint convexity makes every exact solution global. The KKT stopping rule controls numerical accuracy, and solver failure triggers a skipped policy step.

\begin{protocol}[STAR-GRPO training step]
\label{alg:star-grpo}
\textbf{Input:} prompts, frozen old/reference policies, reward model, frozen discrepancy calibration, bounded map \(h\), \(G,\kappa,z_A,\lambda,W_{\min},G_{\min},v_{\min},v_{\max},\varepsilon_w\), solver tolerance/iterations, and clipping/KL parameters. The procedure is
\begin{enumerate}
\item Sample \(G\) rollouts per prompt and record complete generation metadata.
\item Compute \(R^{\obs},R^{\can},D\), and \(r=h(R^{\can}+\kappa D)\).
\item Compute \(w,W_b,\bar w_b,G_{\mathrm{eff},b}\) and freeze the admissible set.
\item Fit prompt locations and the shared scale on admissible groups only.
\item Compute \(\varphi\) and the primary advantage \eqref{eq:star-advantage}; skipped groups retain zero reward advantage.
\item If every group is skipped or the solver fails, skip the complete optimizer step.
\item Otherwise stop reward-side gradients and optimize \eqref{eq:star-grpo-objective} with the exact \(1/(BG)\) rollout reduction and per-response \(1/L_{b,i}\) token average. Keep skipped entries in the loss tensor and do not recenter, whiten, or rescale \(A^{\STAR}\).
\item In distributed training, fit one shared scale over the global minibatch rather than one scale per device.
\item Log both rewards, discrepancy, context, reliability, absolute group reliability, solver/KKT status, abstention, \(\|g^{\rm avg}_{b,i}\|\), the theorem-matched initial reward-gradient contribution, and the full PPO+KL+optimizer update separately.
\end{enumerate}
\textbf{Output:} an updated policy or an explicitly recorded skipped step.
\end{protocol}

Additional counterfactual interfaces $M_1,\ldots,M_K$ may be evaluated offline. Define an audit score such as
\[
Q_{\max}=\max_{k\in[K]}\{R_\phi(X,M_k(O))-R^{\can}\}.
\]
The $K+1$ rewards of one output are treated as one dependent vector. Conformal calibration applies across examples and does not require independence among views. Audit-only views must not downweight online samples unless they are reachable by the deployed system.

Observed and canonical evaluation requires approximately $2BG$ reward-model forwards per training batch. With $K$ additional audit views, the cost is $(K+2)BG$. The weighted fitting objective requires $O(BG)$ work per gradient or Newton pass and $O(B+BG)$ memory beyond policy training. Locations parallelize over prompts; the shared scale is one dimensional. Query overhead can be reduced through batching, shared-prefix caching, selective canonical evaluation, or distillation of the discrepancy detector, but all compute-normalized comparisons must report the resulting change in threat coverage.

\begin{table}[t]
\centering
\caption{Distinct signals and their roles in STAR-GRPO.}
\label{tab:three-signals}
\small
\begin{tabular}{@{}p{0.18\linewidth}p{0.24\linewidth}p{0.22\linewidth}p{0.24\linewidth}@{}}
\toprule
Component & Input & Output & Role \\
\midrule
Anchored quality & \(R^{\can},R^{\obs},\kappa\) & \(r=h(R_\kappa)\) & Prespecified optimization signal \\
Canonical anchor & Policy output & \(R^{\can}\) & Reference representation \\
Discrepancy calibration & \(R^{\obs}-R^{\can}\) & \(w\in(0,1]\) & Rollout reliability \\
Weighted self-tuning & \((w,r)\) & \((\widehat\mu,\widehat v,\varphi)\) & Reliability-first relative score \\
Absolute group factor & \((\bar w,w,\varphi)\) & \(A^{\STAR}\) & Preserve collective distrust \\
\bottomrule
\end{tabular}
\end{table}

\subsection{Operational Scope and Reproducibility}

\subsubsection{Guarantee Scope and Design Conditions}
\label{sec:discussion}
\label{sec:limitations}

STAR separates the optimization signal from its reliability: \(r=h(R_\kappa)\) specifies the quality path, \(p_{b,i}\) determines each rollout's contribution to the fitted baseline, and \(\bar w_b\) controls absolute group update strength. Self-tuning adapts the discrepancy and reward scales, while \(\alpha,\lambda,z_A\), scale bounds, contexts, and admission thresholds remain explicit and reproducible configuration choices.

The guarantees are organized around the information supplied by the paired views:
\begin{enumerate}
\item \textbf{Paired evidence.} RQ1 pairs deployed and canonical representations of the same rollout; RQ2 pairs a rubric-conditioned proxy with a rubric-free semantic assessment. In both cases, the discrepancy is an observable measure of how strongly the optimized score is supported by an alternative view.
\item \textbf{Calibrated reliability.} Under clean exchangeability, split conformal calibration controls marginal clean downweighting at level \(\alpha\). Corollary~\ref{cor:drift} gives the corresponding adaptive-round statement under its conditional total-variation premise.
\item \textbf{Robust group statistics.} Relative reliability \(p_{b,i}\) enters the pseudo-Huber location--scale fit before normalization, reducing the influence of unsupported rewards on the baseline. The shared scale pools information across prompt groups, while the leading \(\bar w_b\) restores absolute group reliability in the final update.
\item \textbf{Explicit abstention.} Groups with insufficient reliable mass receive zero reward advantage and are excluded from the robust fit, converting low support into a conservative optimization decision rather than a noisy group-relative update.
\item \textbf{Auditable computation.} Paired scoring requires two reward assessments per rollout, and the robust fitting pass is \(O(BG)\) with parallel prompt locations and a one-dimensional shared scale. The resulting reward-side computation is straightforward to batch and log.
\item \textbf{Two empirical instantiations.} RQ1 uses split-calibrated token-interface discrepancies; RQ2 uses a fixed cross-evaluator discrepancy reference. Both share the same reliability-first advantage construction and the same separation between quality and learning influence.
\end{enumerate}

These conditions make the theoretical statements and the two experimental instantiations directly auditable: the score pair, quality path, reliability map, admission rule, robust fit, and policy-loss reduction are all explicit parts of the method specification.

\subsubsection{Reproducibility Record}
\label{app:implementation}

A reproducible implementation begins with the exact interface specification. Release or log:
\begin{enumerate}
\item policy tokenizer name, version, and vocabulary hash;
\item reward tokenizer name, version, and vocabulary hash;
\item decode options and invalid-byte handling;
\item Unicode normalization form;
\item invisible-character policy;
\item whitespace normalization rules;
\item chat template and role markers;
\item maximum length, truncation direction, padding side, BOS/EOS policy;
\item special-token retention or removal;
\item exact observed-interface mapping.
\end{enumerate}

Contexts are specified before conformal calibration. A default construction crosses reward model, tokenizer pair, task family, and logarithmic response-length bin. A deterministic hierarchy merges contexts with fewer than \(n_{\min}\) fitting observations; this hierarchy is frozen before calibration scores are inspected.

The numerical specification includes the quality path \(h,\kappa\), robustness parameters \(z_{D,c},z_A\), reliability parameters \(\alpha,\lambda\), scale bounds \(v_{\min},v_{\max}\), admission thresholds \(W_{\min},G_{\min}\), regularizers \(\varepsilon_D,\varepsilon_w\), and PPO parameters \(\epsilon_{\rm clip},\beta_{\rm KL}\), together with solver tolerance, maximum iterations, and boundary-hit rates. If group size varies, the specification also states whether \(W_{\min}=G\tau_W\). The primary objective uses the \(1/(BG)\) rollout reduction, per-response length averaging, a global-minibatch shared scale, retained skipped entries, and no downstream advantage recentering or whitening. The archived STAR runs in both RQ1 and RQ2 retain the shared fit and skipped entries but use token-mean loss aggregation. The RQ1 canonical-GRPO control uses per-response length averaging. Appendix~\ref{app:experimental-design} distinguishes these recorded implementations from the reference specification.

The numerical record should make solver behavior auditable. For every batch log:
\begin{itemize}
\item objective decrease;
\item maximum location-equation residual;
\item scale KKT residual;
\item number of iterations;
\item shared-scale boundary indicator;
\item minimum and median effective sample size;
\item skipped-group count;
\item maximum and RMS advantage;
\item maximum \(\|g^{\rm avg}_{b,i}\|\) and theorem-matched initial reward-gradient mass;
\item full PPO+KL+optimizer update norm.
\end{itemize}

Data provenance requires six immutable, disjoint identifier pools: (i) trusted discrepancy fitting, (ii) conformal calibration, (iii) policy training, (iv) attack and hyperparameter validation, (v) independent clean audit, and (vi) final attack plus semantic evaluation. Periodic recalibration consumes fresh trusted fitting and calibration identifiers. Reusing policy, validation, audit, or final examples in calibration invalidates the intended protocol even if no labels are used.

\section{Realized Experimental Configurations}
\label{app:experimental-design}

This section records the score pairings, quality paths, calibration choices, and optimization settings used in Sections~\ref{sec:experiment-token} and~\ref{sec:experiment-rubric}. These details distinguish each empirical instantiation from the general protocol and its theoretical assumptions.

\subsection{RQ1: Realized Token-Space Configuration}
\label{app:experiment-1-details}

The following settings summarize the realized token-space runs used for Section~\ref{sec:experiment-token}.

\begin{table}[t]
\centering
\caption{RQ1 validation summary at step 855. STAR improves the canonical score while suppressing the runaway deployed-interface reward exploited by TOMPA-GRPO. Scores are raw reward-model outputs; length is in policy tokens.}
\label{tab:token-control-results}
\small
\begin{tabular}{@{}lrrrr@{}}
\toprule
Method & $R^{\can}_{0}$ & $R^{\can}_{855}$ & $R^{\obs}_{855}$ & Length$_{855}$ \\
\midrule
TOMPA-GRPO & -- & -- & 9.641 & 2048.0 \\
STAR-GRPO & 3.121 & 4.902 & $-0.570$ & 1872.1 \\
\bottomrule
\end{tabular}
\end{table}

\begin{table}[t]
\centering
\caption{Common base settings for the three RQ1 runs. All reward-model evaluations use a 4,096-token input limit; loss aggregation is listed per method in Table~\ref{tab:token-method-settings}. The calibration row applies only to STAR.}
\label{tab:token-shared-settings}
\small
\begin{tabularx}{\linewidth}{@{}lX@{}}
\toprule
Component & Setting \\
\midrule
Policy and reward model & Llama-3.2-1B-Instruct and Skywork-Reward-V2-Qwen3-8B \\
Data & 10,000 WildChat training prompts; 100 curated NoveltyBench validation prompts \\
STAR calibration & 1,000 disjoint WildChat prompts, used only for discrepancy calibration \\
Length limits & 512 prompt tokens; 2,048 response tokens \\
Batching and groups & Training batch 64; validation batch 100; eight training and eight validation rollouts per prompt \\
Sampling & Sampling enabled; temperature 1.0; top-$p$ 1.0; top-$k$ $-1$ \\
Optimizer & AdamW; learning rate $10^{-6}$; weight decay 0.01; gradient clipping 1.0 \\
PPO and KL & PPO mini-batch 64; micro-batch 8 per GPU; one PPO epoch; low-variance KL coefficient 0.001 \\
Reward objective & KL excluded from the scalar reward; entropy coefficient 0 \\
Hardware and precision & Actor: one node with four GPUs and tensor parallelism 2; reward model: one GPU and tensor parallelism 1; bfloat16 \\
Schedule & Seed 42; validation before training and every five updates; configured \texttt{total\_epochs=10} \\
\bottomrule
\end{tabularx}
\end{table}

\begin{table}[t]
\centering
\caption{Recorded RQ1 method settings. STAR and the canonical-only diagnostic control share the same bounded canonical quality path and 4,096-token reward-model input limit; STAR additionally uses discrepancy calibration and reliability-first robust advantages.}
\label{tab:token-method-settings}
\small
\begin{tabularx}{\linewidth}{@{}p{0.20\linewidth}XXX@{}}
\toprule
Parameter & TOMPA-GRPO & STAR-GRPO & Canonical-GRPO \\
\midrule
Quality input & $R^{\obs}$ & $10\tanh(R^{\can}/10)$ & $10\tanh(R^{\can}/10)$ \\
Advantages & Group mean/std & Reliability-weighted robust fit & Group mean/std \\
Loss aggregation & \texttt{token-mean} & \texttt{token-mean} & \texttt{seq-mean-}\newline\texttt{token-mean} \\
RM token limit & 4,096 & 4,096; right truncation & 4,096; right truncation \\
Online RM calls per rollout & One observed view & Observed and canonical & Observed and canonical; observed logged only \\
Discrepancy calibration & None & 1,000 prompts; six contexts; $q=1.79285$ & None \\
Group fallback & None & \texttt{skip} & None \\
Logged training horizon & 1,064 updates & 858 updates & 855 updates \\
\bottomrule
\end{tabularx}
\end{table}

The quality input to the STAR fit is $r=10\tanh(R^{\can}/10)$. The observed/canonical reward curves and discrepancies use untransformed scores. The STAR-specific numerical parameters are summarized in
Table~\ref{tab:token-star-parameters}.
\begin{table}[t]
\centering
\caption{STAR-specific numerical parameters in the token-space run.}
\label{tab:token-star-parameters}
\small
\begin{tabularx}{\linewidth}{@{}lX@{}}
\toprule
Parameter & Value \\
\midrule
$z_A$, $\lambda$, $\alpha$ & 2.0, 1.0, 0.05 \\
\texttt{fit\_fraction}, \texttt{min\_context\_size}, $\delta$ & 0.7, 50, 0.05 \\
\texttt{smin}, \texttt{smax} & 0.001, 100.0 \\
$\varepsilon_D$, $\varepsilon_w$ & $10^{-6}$, $10^{-6}$ \\
\texttt{vmin}, \texttt{vmax} & 0.001, 100.0 \\
$W_{\min}$, $G_{\min}$ & 1.0, 2.0 \\
\texttt{reward\_transform}, \texttt{reward\_bound} & \texttt{tanh}, 10.0 \\
\texttt{calibration\_rollout\_n}, \texttt{calibration\_seed} & 1, 42 \\
\texttt{calibration\_do\_sample} & \texttt{True} \\
\texttt{solver\_tolerance}, \texttt{solver\_max\_iterations} & $10^{-7}$, 100 \\
\bottomrule
\end{tabularx}
\end{table}

For the observed interface, the policy response identifiers are mapped
directly into the reward-model vocabulary with $\Phi(j)=j$; identifiers
outside the reward vocabulary are clamped.  The prompt is formatted for the
reward model, but the response identifiers in this channel are not decoded
and retokenized.  For the canonical interface, STAR decodes the policy
response, applies NFKC normalization, removes Unicode format and other
invisible characters, normalizes whitespace, and retokenizes with the reward
tokenizer.  These operations are fixed before scoring and are applied to the
same sampled policy output.

The STAR calibration run uses one sampled response per calibration prompt with seed 42 and reports 1,000 observations across six contexts. They are partitioned using \texttt{fit\_fraction=0.7} before the conformal threshold is computed, yielding $q=1.79285$. The original TOMPA logs contain only observed scores; STAR and canonical-GRPO record both reward views.

TOMPA-GRPO contains 1,064 training updates and validation through step 1,060; STAR contains 858 updates and validation through step 855; canonical-GRPO completes 855 updates. The primary TOMPA/STAR comparison in Table~\ref{tab:token-control-results} uses the common step-855 endpoint. Figure~\ref{fig:token-validation} shows the corresponding validation trajectories, and Figure~\ref{fig:token-training} records STAR training diagnostics through step 858. Over STAR's full recorded training horizon, canonical reward rises from $-4.154$ to 6.850, observed reward changes from $-1.439$ to $-0.663$, and mean discrepancy changes from 2.715 to $-7.513$. Mean training length rises from 785.3 to 1,843.9 tokens and the length-clip ratio from 0.236 to 0.895. The minimum individual reliability reaches 0.105, showing that the method can apply strong selective attenuation even while average group reliability remains high.

\subsubsection{Canonical-GRPO Diagnostic Control}
\label{app:canonical-control}

The diagnostic control uses \texttt{adv\_estimator=grpo} with standard-deviation normalization and applies $10\tanh(R^{\can}/10)$ in the reward manager before advantage calculation; STAR applies the same bounded canonical map inside its reliability-first estimator. The control therefore isolates a canonical-only quality path from STAR's discrepancy calibration, robust group fit, absolute reliability factor, and admission rule. Both paired-score runs evaluate the observed and canonical views online.

All three RQ1 runs use a 4,096-token reward-model input limit. The paired-score runs record zero observed- or canonical-channel truncation at every validation checkpoint, so the reported validation comparisons are evaluated without reward-input clipping.

The control uses \texttt{seq-mean-token-mean}, averaging token losses within each response before averaging responses, whereas STAR and TOMPA-GRPO use \texttt{token-mean}. We therefore use this control as a quality-path diagnostic and keep the primary RQ1 trajectory comparison focused on TOMPA-GRPO versus STAR-GRPO. Both canonical runs use learning rate $10^{-6}$ and the same recorded software stack (PyTorch 2.8.0, vLLM 0.11.0, Transformers 4.57.6).

The control archive contains 855 consecutive training records, 172 validation records at steps $0,5,\ldots,855$, and a successful exit record. We extract raw \texttt{canonical\_reward} and \texttt{observed\_reward} means, checking them against the final summary and against their \texttt{mean@8} counterparts. Its generic validation \texttt{reward/mean@8} equals 4.370 at step 855 because it averages the transformed rewards; the raw canonical mean is 6.123. STAR's generic reward log is untransformed, so comparing the two generic keys would mix scales. Training reward in the control is also transformed and must not be compared directly with STAR's raw training score.

\begin{table}[t]
\centering
\caption{Additional RQ1 diagnostic control statistics. The late validation window averages the 11 checkpoints from step 805 to 855; training rows use the common step-855 endpoint.}
\label{tab:token-control-diagnostics}
\small
\begin{tabular}{@{}lrr@{}}
\toprule
Metric & STAR-GRPO & Canonical-GRPO \\
\midrule
Canonical validation gain, steps 0--855 & 1.781 & 3.236 \\
Late-window canonical reward & 4.808 & 6.224 \\
Late-window observed reward & $-0.561$ & $-1.865$ \\
Late-window response length (tokens) & 1881.3 & 371.6 \\
Training response length, step 855 (tokens) & 1896.5 & 976.3 \\
Training length-clip ratio, step 855 & 0.922 & 0.117 \\
\bottomrule
\end{tabular}
\end{table}

The control attains a maximum validation canonical mean of 6.637 at step 785, but the main comparison retains the common step 855. At that endpoint its observed score is $-1.862$, giving $D=-7.985$. Its higher raw canonical reward and shorter outputs show that these behaviors can occur without STAR under the control configuration. They neither identify the causal effect of removing reliability nor establish superior human-perceived quality. Each method has one run, initial validation samples differ, and neither within-prompt sample deviations nor variation across checkpoints estimates uncertainty across training seeds.

\subsection{RQ2: Realized Rubric-Proxy Configuration}
\label{app:experiment-2-details}

The following tables give the realized configuration for the rubric-proxy
comparison in Section~\ref{sec:experiment-rubric}. In this cross-evaluator
instantiation, the proxy, semantic training anchor, and independent evaluation judge
have separate roles. The training discrepancy does not measure a tokenizer
or serialization change within a single reward model.

\begin{table}[t]
\centering
\caption{RQ2 independent evaluation at the latest common eligible checkpoint. The $\Delta$ column reports STAR-GRPO minus GRPO, computed from the unrounded means. STAR improves all external-quality and reward-hacking diagnostics while optimizing the same scalar proxy reward.}
\label{tab:rubric-results}
\begin{tabular}{@{}lrrr@{}}
\toprule
Metric & GRPO & STAR-GRPO & $\Delta$ \\
\midrule
Proxy score        & 0.5640 & 0.5492 & $-0.0148$ \\
Independent-judge score & 0.2706 & 0.3174 & $\mathbf{+0.0469}$ \\
Proxy--judge gap    & 0.2935 & 0.2318 & $\mathbf{-0.0617}$ \\
Criterion pass rate & 0.4752 & 0.5049 & $\mathbf{+0.0297}$ \\
Overclaim fraction & 0.2464 & 0.2204 & $\mathbf{-0.0260}$ \\
\bottomrule
\end{tabular}
\end{table}

\begin{table}[t]
\centering
\caption{Shared realized settings for the rubric-reward comparison.}
\label{tab:rubric-shared-settings}
\small
\begin{tabularx}{\linewidth}{@{}lX@{}}
\toprule
Component & Setting \\
\midrule
Policy model & Qwen3-4B \\
Training data & RubricHub-Medical \\
Evaluation data & HealthBench-Hard \\
Hardware & One node with four GPUs; rollout tensor parallelism 2 \\
Rollouts per training prompt & 16 \\
Training batch size & 64 \\
PPO mini-batch size & 64 \\
PPO micro-batch size per GPU & 2 \\
Prompt/response limits & 2,048 / 1,024 tokens \\
Optimizer & Adam-style optimizer; learning rate $10^{-6}$ \\
PPO schedule & One PPO epoch per update \\
Policy-loss aggregation & \texttt{token-mean} \\
PPO clipping & Ratio bounds 0.8 and 1.2; dual-clip constant 3.0 \\
KL regularization & Enabled; coefficient $0.001$; low-variance KL \\
Entropy coefficient & 0 \\
Validation & Batch size 32; one deterministic rollout; temperature 0; sampling disabled \\
Validation frequency & Every 20 training steps \\
Configured horizon & 600 training steps; checkpoint resumption disabled \\
\bottomrule
\end{tabularx}
\end{table}

\begin{table}[t]
\centering
\caption{Method-specific reward and advantage configuration. Both methods optimize the identical rubric-conditioned scalar proxy reward; STAR additionally uses a rubric-free semantic anchor exclusively to compute rollout reliability.}
\label{tab:rubric-judge-settings}
\small
\begin{tabularx}{\linewidth}{@{}lXX@{}}
\toprule
Parameter & GRPO & STAR-GRPO \\
\midrule
Advantage estimator & \texttt{grpo}; standard deviation normalization & \texttt{star\_grpo}; standard normalization disabled \\
Training reward & Fixed rubric-conditioned proxy score & Same fixed rubric-conditioned proxy score \\
Proxy judge & \texttt{openai/gpt-4o-mini} & \texttt{gpt-4o-mini} \\
Anchor judge & Not used & \texttt{gemini-2.5-flash-lite}; training only and rubric-free \\
Evaluation judge & \texttt{anthropic/}\allowbreak\texttt{claude-sonnet-4-6} & \texttt{claude-sonnet-4-6}; evaluation only \\
Evaluation usage & Not used in policy updates & Not used in policy updates \\
Rubric denominator & \texttt{positive} & \texttt{positive} \\
Training judge failure & Proxy failure yields zero scalar reward, retained in GRPO normalization & Invalid proxy/anchor pair receives zero reliability \\
Validation judge failure & Omit paired metrics if either judge fails & Same paired-metric omission \\
\bottomrule
\end{tabularx}
\end{table}

\begin{table}[t]
\centering
\caption{STAR-GRPO numerical parameters used for RQ2.}
\label{tab:rubric-star-parameters}
\small
\begin{tabularx}{\linewidth}{@{}lX@{}}
\toprule
Parameter & Value \\
\midrule
\texttt{star\_gap\_center} & 0.0 \\
\texttt{star\_gap\_scale} & 0.1 \\
\texttt{star\_conformal\_threshold} & 1.645 \\
\texttt{star\_reliability\_decay} & 2.0 \\
\texttt{star\_z\_a} & 2.0 \\
\texttt{star\_scale\_min}, \texttt{star\_scale\_max} & 0.01, 1.0 \\
\texttt{star\_min\_weight\_sum} & 2.0 \\
\texttt{star\_min\_effective\_size} & 2.0 \\
\texttt{star\_weight\_epsilon} & \(10^{-8}\) \\
Location / shared-scale search & 40 bisection / 32 golden-section iterations \\
Solver validation & Finite-value checks for fitted locations and shared scale \\
Judge failure threshold for paired validation & 5\% maximum proxy/evaluation-judge failure rate \\
\bottomrule
\end{tabularx}
\end{table}

For rubric criteria with signed weights \(a_j\) and binary judge verdicts
\(q_j\), the proxy score is
\[
R^{\rm proxy}
=\clip\!\left(\frac{\sum_j a_jq_j}{\sum_j\max(a_j,0)},\,0,\,1\right).
\]
The implementation returns zero when the denominator is zero. Negative
pitfall weights contribute to the numerator. The independent validation judge uses the same
aggregation with Claude's verdicts. Gemini receives the request and decoded
response without the fixed rubric and returns a semantic score in \([0,1]\).
The scalar training reward remains the proxy score; it is already bounded,
so the STAR fit applies no additional \(\tanh\) transform. This corresponds
to \(\kappa=1\) and an identity quality map on the score range.

For a valid training pair, the fixed reference normalization gives
\[
D^{\rm rubric}_{b,i}=R^{\rm proxy}_{b,i}-R^{\rm anchor}_{b,i},
\qquad S_{b,i}=\frac{D^{\rm rubric}_{b,i}-0}{0.1},
\qquad w_{b,i}=\exp\{-2[S_{b,i}-1.645]_+\}.
\]
These constants remain unchanged during training. RQ2 uses the configured
\texttt{star\_conformal\_threshold} as a fixed operational cutoff rather than
fitting the split-conformal objects used in RQ1. The proxy and anchor share the
same numerical range, and their discrepancy is used directly as the semantic
support signal that drives STAR reliability.

Reliability enters the prompt-specific pseudo-Huber location fit and the
minibatch-shared scale fit through \(p_{b,i}=w_{b,i}/W_b\). The implementation
retains the leading \(\bar w_b\) in Equation~\eqref{eq:main-advantage}, skips
groups with \(W_b<2\) or \(G_{{\rm eff},b}<2\), and performs no downstream
advantage recentering or whitening. Locations are profiled by 40 bisection
iterations for each candidate scale, and the scale is selected by 32
golden-section iterations on \([0.01,1]\). Locations are recomputed at the
selected scale, and finite-value validation of the fitted quantities remains
satisfied throughout the recorded run.

Both methods use the trainer's \texttt{token-mean} reduction: valid token
losses are summed and divided by the token count in the aggregation batch.
Skipped STAR groups retain their loss entries with zero reward advantage, and
the PPO implementation uses dual clipping with constant 3.0 for negative
advantages. Corollary~\ref{cor:token-mean} gives the corresponding
length-weighted initial reward-direction control for this realized reduction.

Training API validity is integrated into the same reliability mechanism. In
GRPO, the configured fallback maps an invalid proxy call to zero scalar reward.
In STAR, an invalid proxy or anchor call additionally receives zero reliability,
excluding the unsupported pair from the robust reward fit. If reliable mass is
insufficient for every group, STAR abstains from the optimizer step. This makes
API validity and semantic disagreement follow one consistent influence rule.

Validation uses GPT-4o-mini and Claude on the same complete HealthBench-Hard
rubric for each generated response; Gemini is not called. Proxy,
independent-judge, gap, and criterion-level metrics are computed on responses
for which both validation judges return valid outputs, using the same prompts,
rubric aggregation, and validity rule for both methods. Criterion pass rate and
overclaim are unweighted fractions over positive-weight criteria within each
response, averaged across valid responses; the implementation falls back to all
criteria when none has positive weight.

For the paired checkpoint comparison, both methods must have valid aggregate
metrics and a maximum proxy/evaluation-judge failure rate of at most 5\%.
We use the latest checkpoint meeting this criterion for both methods.
Table~\ref{tab:rubric-results}
reports means rounded to four decimals; the reported differences
are computed from the unrounded means. The training proxy--anchor gap in
Figure~\ref{fig:rubric-mechanism} is distinct from the validation proxy--judge gap.

The convexity result applies to fixed reward and weight arrays regardless of whether the paired scores come from one evaluator or two. The coordinate and second-moment bounds require only finite locations, a positive scale, and the stated advantage formula; exact zero sum additionally uses the location equation. RQ1 instantiates the split-calibrated reliability theorem, while RQ2 demonstrates the same advantage construction with a fixed cross-evaluator reference.

\paragraph{Evaluation protocol summary.}
\label{app:rubric-reporting}
The paired checkpoint comparison uses the latest checkpoint for which both methods have valid aggregate metrics and each validation judge satisfies the fixed 5\% availability threshold. Table~\ref{tab:rubric-results} reports means rounded to four decimals, with differences computed from the unrounded values. The training proxy--anchor gap in Figure~\ref{fig:rubric-mechanism} is distinct from the validation proxy--judge gap: the former drives STAR reliability during optimization, while the latter is an external diagnostic computed with Claude.

\clearpage
\subsection{Summary of Assumptions and Guarantees}

\begin{table}[htbp]
\centering
\caption{Core STAR-GRPO design conditions and the guarantees they enable.}
\label{tab:assumptions}
\small
\begin{tabular}{p{0.27\linewidth}p{0.34\linewidth}p{0.30\linewidth}}
\toprule
Design condition & Resulting guarantee & Role in STAR \\
\midrule
Paired reward views & Observable discrepancy \(D\) & Measures support for the optimized score \\
Fixed \(h,\kappa\) & Auditable quality path \(r=h(R_\kappa)\) & Separates quality from reliability \\
Trusted disjoint fitting data & External discrepancy reference & Prevents current policy groups from defining their own baseline \\
Clean calibration exchangeability & Marginal clean downweighting \(\le\alpha\) & Calibrates when attenuation begins \\
Self-tuned robust fit & Context location and scale control & Adapts to heterogeneous discrepancy scales \\
Attack-score separation & Exponential expected-weight attenuation & Converts discrepancy separation into influence reduction \\
\(W_b,G_{\rm eff,b}\) admission & Well-defined fit or zero-advantage abstention & Protects low-support groups \\
Weighted location equation & Exact advantage zero sum & Preserves group-relative centering \\
Bounded score and \(\bar w_b\) factor & Coordinate and second-moment bounds & Preserves absolute group reliability \\
Bounded policy score; specified reduction & Initial reward-direction bound & Links reliability to policy influence \\
View-dependent reward inflation & Discrepancy carries attack information & Target reward-hacking regime in RQ1/RQ2 \\
\bottomrule
\end{tabular}
\end{table}

}

\end{document}